\documentclass[twoside]{article}
\usepackage{enumerate}%for enumeration
\usepackage{amsfonts}%for calligraphic R
\usepackage{algorithm}
\usepackage{algorithmic}
\usepackage{graphicx}%for graphics
\usepackage{mathtools}
\usepackage{float}
\usepackage{amsthm}
\usepackage{amssymb}
\usepackage{wrapfig}
\usepackage{dsfont}
\usepackage{subfig}
\captionsetup[subfigure]{labelformat=empty}
\usepackage[colon]{natbib}
\bibliographystyle{abbrvnat}
%\bibliographystyle{abbrvnat}
%\setcitestyle{authoryear,open={((},close={))}}
\usepackage{bbm}
\usepackage{bm}
\usepackage[accepted]{aistats2018}
\usepackage{environ}

\usepackage{hyperref}
\usepackage[capitalize]{cleveref}

\newtheorem{lemma}{Lemma}

\newtheorem{theorem}{Theorem}
\newtheorem*{theorem*}{Theoremm}

\newtheorem{prop}{Proposition}
\newtheorem{corr}{Corollary}
\newtheorem{assumption}{Assumption}

\newsavebox\ideabox
\newenvironment{idea}
  {\begin{equation}
   \begin{lrbox}{\ideabox}
   \begin{minipage}{\dimexpr\columnwidth-2\leftmargini}
   \setlength{\leftmargini}{0pt}%
   \begin{quote}}
  {\end{quote}
   \end{minipage}
   \end{lrbox}\makebox[0pt]{\usebox{\ideabox}}
   \end{equation}}
 % If your paper is accepted, change the options for the package
% aistats2018 as follows:
%
%\usepackage[accepted]{aistats2018}
%
% This option will print headings for the title of your paper and
% headings for the authors names, plus a copyright note at the end of
% the first column of the first page.

\mathchardef\mhyphen="2D

\renewcommand{\P}{\mathbb{P}}
\newcommand{\lucbrank}{{\tt LUCBRank}}

\newcommand{\klucb}{{\tt KL\mhyphen UCB}}
\newcommand{\chernoff}[1]{d^\ast(#1)}
\newcommand{\KL}[1]{\text{KL}(#1)}
\newcommand{\hstar}{H^\ast_{\epsilon,b}}

\DeclareMathOperator*{\argmax}{arg\,max\,}
\DeclareMathOperator*{\argmin}{arg\,min\,}

\begin{document}

% If your paper is accepted and the title of your paper is very long,
% the style will print as headings an error message. Use the following
% command to supply a shorter title of your paper so that it can be
% used as headings.
%
%\runningtitle{I use this title instead because the last one was very long}

% If your paper is accepted and the number of authors is large, the
% style will print as headings an error message. Use the following
% command to supply a shorter version of the authors names so that
% they can be used as headings (for example, use only the surnames)
%
%\runningauthor{Surname 1, Surname 2, Surname 3, ...., Surname n}

\twocolumn[
\aistatstitle{Adaptive Sampling for Coarse Ranking}
%\aistatsauthor{ Author 1 \And Author 2 \And  Author 3 }
\vspace{-0.2in}
\aistatsauthor{Sumeet Katariya \And Lalit Jain \And Nandana Sengupta}
\aistatsaddress{katariya@wisc.edu \And
    lalitj@umich.edu \And
nandana@uchicago.edu}
%\aistatsaddress{University of Wisconsin-Madison \\ \emph{katariya@wisc.edu} \And
%    University of Michigan Ann-Arbor \\ \emph{lalitj@umich.edu} \And
%University of Chicago \\ \emph{nandana@uchicago.edu}}
\vspace{-0.2in}
\aistatsauthor{James Evans \And Robert Nowak}
\aistatsaddress{jevans@uchicago.edu \And
rdnowak@wisc.edu}
%\aistatsaddress{University of Chicago\\ \emph{jevans@uchicago.edu} \And
%University of Wisconsin-Madison\\ \emph{rdnowak@wisc.edu}}
\vspace{-0.1in}
\runningauthor{Katariya, Jain, Sengupta, Evans, Nowak}
]

\begin{abstract}
  We consider the problem of active \emph{coarse ranking}, where the
  goal is to sort items according to their means into clusters of
  pre-specified sizes, by adaptively sampling from their reward
  distributions.  This setting is useful in many social science
  applications involving human raters and the \emph{approximate} rank
  of \emph{every} item is desired. Approximate or coarse ranking can
  significantly reduce the number of ratings required in comparison to
  the number needed to find an exact ranking. We propose a
  computationally efficient PAC algorithm $\lucbrank$ for coarse
  ranking, and derive an upper bound on its sample complexity. We also
  derive a nearly matching distribution-dependent lower bound.
  Experiments on synthetic as well as real-world data show that
  $\lucbrank$ performs better than state-of-the-art baseline methods,
  even when these methods have the advantage of knowing the underlying
  parametric model.
\end{abstract}

%!TEX root = Paper.tex
\section{Introduction}
\label{sec:introduction}
We consider the problem of efficiently sorting items according to their
means into clusters of pre-specified sizes, which we refer to as
\emph{coarse ranking}. In many big-data applications, finding the
\emph{total} ranking can be infeasible and/or unnecessary, and we may
only be interested in the top items, bottom items, or quantiles.
Consider for instance the problem of assessing the safety of
neighborhoods from pairwise comparisons of Google street view images,
as is done in the Place Pulse project \citep{naik2014streetscore},
which can be applied to develop social policy \citep{dubey2016deep}.
Finding a complete ordering of the images in this case is impractical
because many images are difficult to compare i.e., their safety scores
are very close (see \cref{sec:chicagostreetview}).  Furthermore, a
total ordering may be unnecessary from a public policy point of view,
since the \emph{approximate} rank of every image on the safe-unsafe
spectrum may suffice.
%REPLACE WITH As another example, consider the problem of assigning grades to students in
%massive open online courses using peer reviews \cite{shah2013case}. Here
%again, only the \emph{quantile} that every student belongs to is desired (the
%number of quantiles is determined by the grading scale). A total ordering is
%unnecessary and may also be infeasible if the number of students is large. The
%recommender-systems solution of adaptively finding the top items does not help in
%this case because the grade of \emph{every} student is desired, not just those
%at the top.

Motivated by these applications, we model the coarse ranking problem as
follows. Given $K$ random variables, $c \ge 2$ clusters, and cluster boundaries
$1 \le \kappa_1 < \kappa_2 < \dots < \kappa_{c-1} < \kappa_c = K$, the goal is
to reliably identify the $\kappa_1$ random variables with the highest means, the
$\kappa_2-\kappa_1$ random variables with the highest means among the remaining
$K-\kappa_1$ random variables, and so on, by observing samples from their
reward distributions (for a precise formulation see \cref{sec:setting}). The focus of
this paper is on algorithms that achieve this clustering by requesting samples adaptively.
The coarse ranking setting applies to the scenarios above, and also
subsumes many well-studied problems.  The problem of finding the best item
corresponds to $\kappa_1=1, \kappa_2=K$. The problem of finding the top-$m$
items corresponds to $\kappa_1=m, \kappa_2=K$.  The problem of sorting the items
into $c$ equal-sized clusters corresponds to $\kappa_i = \text{round}(iK / c),
1\le i \le c$.  Finally, the complete ranking can be obtained by setting
$\kappa_i=i, 1\le i\le i \le K$. 

The problem of \emph{completely sorting} items is in general hard in real-world
applications, and does not exhibit gains from adaptivity.
\citet{maystre2017just} who analyze the performance of Quicksort, observe in
their real-world experiments:
\begin{idea}
   ``The improvement is noticeable but modest. We notice that item parameters are
    close to each other on average; $\dots$ This is because there is a considerable fraction
    of items that have their parameters (means) very close to one another $\dots$
    Figuring out the exact order of these images is therefore difficult and
    probably of marginal value.''
    \label{maystrequote}
\end{idea}
The fact that \emph{adaptivity doesn't help for complete ranking} is true not just for
Quicksort, but other adaptive algorithms as well - as we observe in our
experiments. Adaptivity does however help for coarse ranking, and this can be explained.
Consider the case when the $K$ items have bounded reward distributions, and their means are equally separated, with a gap
$\Delta$ between consecutive means.  Correctly ordering any two consecutive
items requires $\Omega(1/\Delta^2)$ samples, and thus \emph{any} algorithm would
require $\Omega(K/\Delta^2)$ to find a total ordering.  A non-adaptive algorithm
sampling the items uniformly would gather approximately equal samples from every
item, and hence will find the correct ranking after roughly these many samples
(up to perhaps log factors). Thus adaptivity doesn't help in this case.
However, if the goal is to find only the quartiles say, an adaptive algorithm
can quickly stop sampling items that are far from the quartile boundaries and
gain over non-adaptive algorithms. %We verify the futility of adaptive methods
%for finding the total ordering on real-world datasets in
%\cref{sec:chicagostreetview}.

In this work, we make six contributions. First, we motivate the coarse
ranking setting. We do this by arguing that most real-life problems
have high noise, and by explaining why adaptive methods are ineffective
in producing a complete ranking in these high-noise regimes
(\cref{sec:motivation}). Second, we precisely formulate the online
probably approximately correct (PAC)-coarse ranking problem with error
tolerance $\epsilon$ and failure probability $\delta$ that can model
real-valued as well as pairwise comparison feedback
(\cref{sec:setting}). Third, we propose a nonparametric PAC Upper
Confidence Bound (UCB)-type algorithm $\lucbrank$ to solve this
problem. To the best of our knowledge, this is the first UCB-type
algorithm for ranking (\cref{sec:algorithm}). Fourth, we analyze the
sample complexity of $\lucbrank$ and prove an upper bound which is
inversely proportional to the distance of the item to its closest
cluster boundary, where the distance is measured in terms of Chernoff
information (\cref{sec:analysis}). Fifth, we also prove a nearly
matching distribution-dependent lower bound. The contribution of an
item to the lower bound is inversely proportional to the distance of
the item to the closest item in an adjacent cluster, with distance in
this case measured using KL-divergences (\cref{sec:lower bound}).
Finally, we compare the performance of our algorithm to several
baselines on synthetic as well as real-world data gathered using MTurk,
and observe that it performs $2\hbox{ - }3$x better than existing
algorithms even when they have the advantage of knowing the underlying
parametric model (\cref{sec:experiments}).

\subsection{Ranking using Pairwise Comparisons}
\label{sec:ranking pc}
We use the term direct-feedback or real-rewards to indicate a
setting where the learner can sample directly from the item's reward distribution. Our
algorithm is stated for this setting. In contrast, in the pairwise-comparison or
dueling setting, the learner compares two items and receives $1$-bit feedback
about who won the duel. We next explain how to translate our algorithm to this setting.

Any algorithm designed to solve the direct-feedback coarse ranking problem can also be used with pairwise comparison feedback using Borda
reduction \citep{jamieson2015sparse}. According to this technique, whenever the
algorithm asks to draw a sample from item $i$, we compare item $i$ to a randomly
chosen item $j$, and ascribe a reward of $1$ to item $i$ if $i$ wins the duel,
and $0$ otherwise. This is equivalent to the rewards being sampled from a
Bernoulli distribution with means given by the Borda scores of the items. The
Borda score of an item $i$ is defined as 
\begin{equation}
\begin{small}
    p_i := \frac{1}{K-1} \sum\limits_{j \ne i}^{} \P(i > j).
    \label{eq:bordadefn}
    \vspace{-10pt}
\end{small}
\end{equation}

%!TEX root = Paper.tex

\section{Related Work}
\label{sec:relatedwork}
There is extensive work on ranking from noisy pairwise comparisons, we refer the
reader to excellent surveys by \citet{busa2014survey,
agarwal2016ranking}. We discuss the most relevant work next.

\subsection{Ranking from Pairwise Comparisons}
The pairwise comparison matrix $P$ (where $P_{ij} = \mathbb{P}(i>j)$) and
assumptions on it play a major role in the design of ranking algorithms
\citep{agarwal2016ranking}. A sequence of progressively relaxed assumptions on
$P$ can be shown where ranking methods that work under restrictive assumptions
fail when these assumptions are relaxed \citep{rajkumar2014statistical,
rajkumar2015ranking}.  Spectral ranking algorithms have been proposed when
comparisons are available for a fixed set of pairs \citep{negahban2012iterative,
negahban2012rank}; this corresponds to a partially observed $P$ matrix.
\citet{braverman2009sorting, wauthier2013efficient} propose and analyze
algorithms for the noisy-permutation model; this corresponds to a $P$ matrix
which has two types of entries: $1-p$ in the upper triangle and $p$ in the lower
triangle (assuming the true ordering of the items is $1\dots K$). They also
focus on settings where queries cannot be repeated. Our work makes no
assumptions on the $P$ matrix and ranks items using their Borda scores.  This is
important given the futility of parametric models to model real-life scenarios
\citep{shah2016stochastically}. 

Quicksort is another highly recommended algorithm for ranking using noisy
pairwise comparisons. \citet{maystre2017just} study Quicksort
under the BTL noise model, and \citet{alonso2003sorting} analyze Quicksort under
the noisy permutation model. We comment on these in
\cref{sec:motivation}.

\citet{jamieson2011active} propose an algorithm for active ranking from pairwise
comparisons when points can be embedded in Euclidean space.
\citet{ailon2012active} consider ranking when query responses are fixed. More
recently, \citet{agarwal2017learning} consider top-$m$ item identification and ranking
under limited rounds of adaptivity, \citet{falahatgar2017maximum} consider the
problem of finding the maximum and ranking assuming strong-stochastic
transitivity and the stochastic-triangle inequality. We do not need these assumptions.

Our setting is closest to the setting proposed by \citet{heckel2016active}, in
the context of ranking using pairwise comparisons.  Our setting however applies
to real-valued rewards as well as pairwise comparison feedback.  Furthermore,
our setting incorporates the notion of $\epsilon$-optimality which allows the
user to specify an error tolerance \citep{even2006action}. This is important in practice if the item means
are very close to each other. Finally, as they note, their Active Ranking (AR) algorithm is an 
elimination-style algorithm, our $\lucbrank$ is UCB-style; it is known that the
latter perform better in practice \citep{jiang2017practical}. 
We also verify this empirically in \cref{sec:chicagostreetview}, and observe that $\lucbrank$ requires $2$-$3$x fewer samples than AR in our synthetic as well as real-world experiments (see \cref{fig:expt1} and \cref{fig:chicagoresults}).

\subsection{Relation to Bandits}
The idea of sampling items based on lower and upper confidence bounds is
well-known in the bandits literature \citep{auer2002using}. However, these
algorithms either focus on finding the best or top-$m$ items
\citep{audibert2010best, kalyanakrishnan2012pac,kaufmann2015complexity,
chen2017nearly}, or on
minimizing regret \citep{bubeck2012regret}.  This is the first work to our
knowledge that employs this tool for ranking.

%!TEX root = Paper.tex
\vspace{-5pt}
\section{Motivation}
\label{sec:motivation}
\vspace{-5pt}
We argue that existing adaptive methods offer no significant gains over their non-adaptive counterparts when the goal is to find a complete ranking, and coarse ranking is more appropriate for many real-world applications. We provide brief theoretical justification for this claim in the discussion after quote~(\ref{maystrequote}), and empirically verify this behavior in \cref{fig:completerankingisfutile}. In this section, we focus on Quicksort, because it has been well-studied under multiple noise models. Quicksort has optimal sample complexity when
comparisons are noiseless \citep{sedgewick2011algorithms} and is naturally 
appealing when comparisons are noisy \citep{maystre2017just}.
Intuitively it feels like the right thing to do - by comparing an item with the
pivot and putting it left or right appropriately, Quicksort performs a binary
search for the true position of an item. However it is far from optimal under
two noise models as we argue next. 

First, consider the noisy-permutation (NP) noise model
\citep{feige1994computing} where the outcomes of pairwise comparisons are
independently flipped with an error probability $p$. In the first stage of
Quicksort, every item that is compared with the pivot and put in the wrong
bucket contributes on average $\frac{K}{2}$ to the Kendall tau error (total
number of inverted pairs). Now, $Kp$ items are put in the wrong bucket on
average in the first stage of Quicksort, and hence the total number of
inverted pairs is at least $\Omega(K^2p)$. \citet{alonso2003sorting} show that $\Theta(K^2p)$ is indeed the expected number of inversions. This is far from optimal because \citet{braverman2009sorting} propose
an algorithm which has a Kendall tau error of $O(K)$ \emph{with high probability}, using $K \log K$ comparisons (same as quicksort).
\citet{alonso2003sorting} conjecture that for quicksort to have $O(K)$ expected inversions, $p$ needs to go down faster than $1/K$, like $\frac{1}{K
\log K}$. As the above calculation shows, they conjecture that this is because Quicksort is extremely brittle: ``the main contribution (to the total inversions)
comes from the `first' error, in some sense.''
One may be able to get rid of this lack of robustness by repeating queries, but this requires knowledge of the error probability $p$ or adapting to its unknown value. This is possible, but as we argue shortly, a good model for real-world problems where comparisons are made by humans is one where $p$ increases to $1/2$ as $K$ grows, since it becomes more difficult to compare adjacent items in the true ranking as $K$ increases. Quicksort certainly fails in this regime.

The other class of well-studied noise models are the Bradley-Terry-Luce (BTL) \citep{bradley1952rank} or Thurstone \citep{thurstone1927law} models, which assume a $K$-dimensional weight vector that measures the quality of each item, and the pairwise comparison probabilities are determined via some fixed function of the qualities of pair of objects. These models are more realistic than the NP model since under these models, comparisons between items that are far apart in the true ranking are less noisy than those between nearby items. \citet{maystre2017just} analyze the expected number of inversions of Quicksort under the BTL model, and show that when the average gap between adjacent items is $\Delta$, the expected number of inversions is $O(\Delta^{-3})$. They note however that real-world datasets have extremely small $\Delta$ ($\hat{\Delta}^{-1}=376$ in their experiments) and Quicksort performs no better than random (see quote~(\ref{maystrequote})). We make similar observations about the inefficacy of Quicksort (and other adaptive algorithms) in our real-world experiments (see \cref{fig:completerankingisfutile}).

% We argue next that a constant gap $\Delta$ between consecutive item means is not a good model for experiments where humans rate items, and that the gap should decrease with $K$. Let $m = \lceil \frac{1}{\Delta} \log (19) \rceil$. Then, assuming the logistic model, the $m$-th item beats the $1^\text{st}$ item with probability $\ge 0.95$, the $2m$-th item beats the $m$-th item with probability $\ge 0.95$, and so on. Thus, items that are $m$-apart can be considered distinguishable. In a task like Place Pulse where humans rate street view images according to their perceived safety \citep{naik2014streetscore}, or the task in \citet{wood2017towards} where humans rate face images according to the strength of their emotions, there are only a finite number of levels (safety levels, emotional levels) that humans can distinguish. As more items are added, the number of adjacent items that are indistinguishable increases, and this is not captured by the constant-gap model where $m$ is independent of $K$.

The problem in finding an exact/total ranking is that if the means of the items lie in a bounded range, e.g., $[0,1]$, then the minimum gap must decrease at least linearly with $K$ and many items become essentially indistiguishable.  To see this, suppose there is a constant gap $\Delta$ between consecutive means and let $m = \lceil \frac{3}{\Delta} \rceil$. Then, assuming the logistic model, the $m$-th item beats the $1^\text{st}$ item with probability $\ge 0.95$, the $2m$-th item beats the $m$-th item with probability $\ge 0.95$, and so on. Thus, items that are $m$-apart can be considered distinguishable.  Assuming the range of possible means is bounded implies that $\Delta \propto 1/K$.  Thus, the number of items that are essentially indistiguishable increases linearly with $K$, suggesting that seeking a total ranking is a futile effort.  This situation arises in applications such as Place Pulse where humans rate street view images according to their perceived safety \citep{naik2014streetscore}, or the task in \citet{wood2017towards} where humans rate face images according to the strength of their emotions.

Coarse ranking allows the experimenter to set the number of clusters in accordance with the number of distinguishable levels, and thus frees the algorithm from the task of distinguishing incomparable items. In this sense, it converts a high-noise problem to a low-noise one. Even though the gap between adjacent items is small, most items are far from their nearest cluster boundary, and an adaptive algorithm can stop sampling these items early.

%!TEX root = Paper.tex

\vspace{-5pt}
\section{Setting}
\label{sec:setting}
\vspace{-5pt}
In this section, we precisely formulate the coarse ranking setting. For ease of reference, we use terminology from the bandits literature and refer to an item as arm. Also, pulling or drawing an arm is equivalent to sampling from the item's reward distribution.

Consider a multi-armed bandit with $K$ arms. Each arm $a$ corresponds to a Bernoulli distribution with an unknown mean $p_a$, denoted $\mathcal{B}(p_a)$. A draw / pull of arm $a$ yields a reward from distribution $\mathcal{B}(p_a)$. Without loss of generality, assume the arms are numbered so that $p_1 \ge p_2 \dots \ge p_K$.

Given an integer $c \ge 2$ representing the number of clusters, let $1\le \kappa_1 < \kappa_2 < \dots < \kappa_c=K$ be a collection of positive integers. Any such collection of positive integers defines a partition of $[K]$ into $c$ disjoint sets of the form 

\vspace{-18pt}
{\small
\begin{align}
    &M^\ast_1 := \{1,\dots,\kappa_1\},\; M^\ast_2 := \{\kappa_1+1,\dots,\kappa_2\},
    \dots, \nonumber\\
    &\dots,\; M^\ast_c := \{\kappa_{c-1}+1,\dots,K\}.
    \label{eq:trueclusters}
\vspace{-8pt}
\end{align}
}%
To solve the \emph{coarse ranking} problem given a set of cluster boundaries $(\kappa_i)_{i=1}^c$, an algorithm may sample arms of the $K$-armed bandit and record the results; the algorithm is required to terminate and cluster the arms into an ordered set of disjoint sets of the form (\ref{eq:trueclusters}). We refer to this output as a \emph{coarse ranking}.

We next define the notion of $\epsilon$-tolerance. For some fixed tolerance $\epsilon \in [0,1]$ and $1 \le i \le c$, let $M^\ast_{i,\epsilon}$ be the set of all arms that should be in cluster $i$ upto a tolerance $\epsilon$, i.e.
\[
\begin{small}
    M^\ast_{i,\epsilon} := \{a: p_{\kappa_{i-1}+1}+ \epsilon \ge p_a \ge
    p_{\kappa_i}-\epsilon\},
\end{small}
\]
(with the convention that $p_{\kappa_0}=1$).  Note that the true set of arms in
cluster $i$: $M^\ast_i:= \{\kappa_{i-1}+1,\dots,\kappa_i\}$, is a subset of
$M^\ast_{i,\epsilon}$; the latter set contains in addition arms that are
$\epsilon$ close to the boundary. 

For a given mistake probability $\delta \in [0,1]$ and a given error tolerance
$\epsilon \in [0,1]$, we call an algorithm
$(\epsilon, \delta)$-PAC if, with a probability greater that $1-\delta$, after using a
finite number of samples, it returns a rank for each arm such that the
$i^\text{th}$ ranked cluster according to the returned ranking is a subset of
$M^\ast_{i,\epsilon}$ for all $1 \le i \le c$. Formally, if $\sigma(a)$ is the
rank of arm $a$ returned by the algorithm after using a finite number of
samples, we can define the empirical cluster $i$ as
\[
\begin{small}
    \hat{M}_i := \{a: \kappa_{i-1}+1 \le \sigma(a) \le \kappa_i\},
\end{small}
\]
and we say the algorithm is $(\epsilon, \delta)$-PAC if
\begin{equation}
\begin{small}
    \P \left( \exists \,i \text{ such that }\hat{M}_i \not\subseteq
    M^\ast_{i,\epsilon}\right)
    \le \delta.
\end{small}
\label{eq:meaningofdeltapac}
\end{equation}

%!TEX root = Paper.tex
%
\vspace{-20pt}
\section{Algorithm}
\label{sec:algorithm}
\vspace{-5pt}
Let $(\kappa_1,\dots,\kappa_c=K)$ be the cluster boundaries. We
describe here the $\lucbrank$ algorithm using generic confidence intervals
$\mathcal{I}_a = [L_a(t), U_a(t)]$, where $t$ indexes rounds of the algorithm. Let
$N_a(t)$ be the number of times arm $a$ has been sampled up to round $t$, and $S_a(t)$ be the sum of rewards
of arm $a$ up to round $t$. Let $\hat{p}_a(t) = \frac{S_a(t)}{N_a(t)}$ be the
corresponding empirical mean reward. Sort the arms in the decreasing order of
their empirical mean rewards, and for $1 \le i \le c-1$, let $J_i(t)$ denote the
$\kappa_i$ arms with the highest empirical mean rewards. Define 
\begin{align}
    l_t^i := \argmin\limits_{a\in J_i(t)} L_a(t), \qquad u_t^i := \argmax\limits_{a \notin J_{i}(t)} U_a(t)
    \label{eq:criticalarmsatboundaryi}
\end{align}
to be the two \emph{critical} arms from $J_i(t)$ and $J_i^c(t)$ that are likely to be
misclassified (see \cref{fig:algvisual}). 
\begin{figure}[h]
    \centering
    \includegraphics[width=.6\columnwidth]{./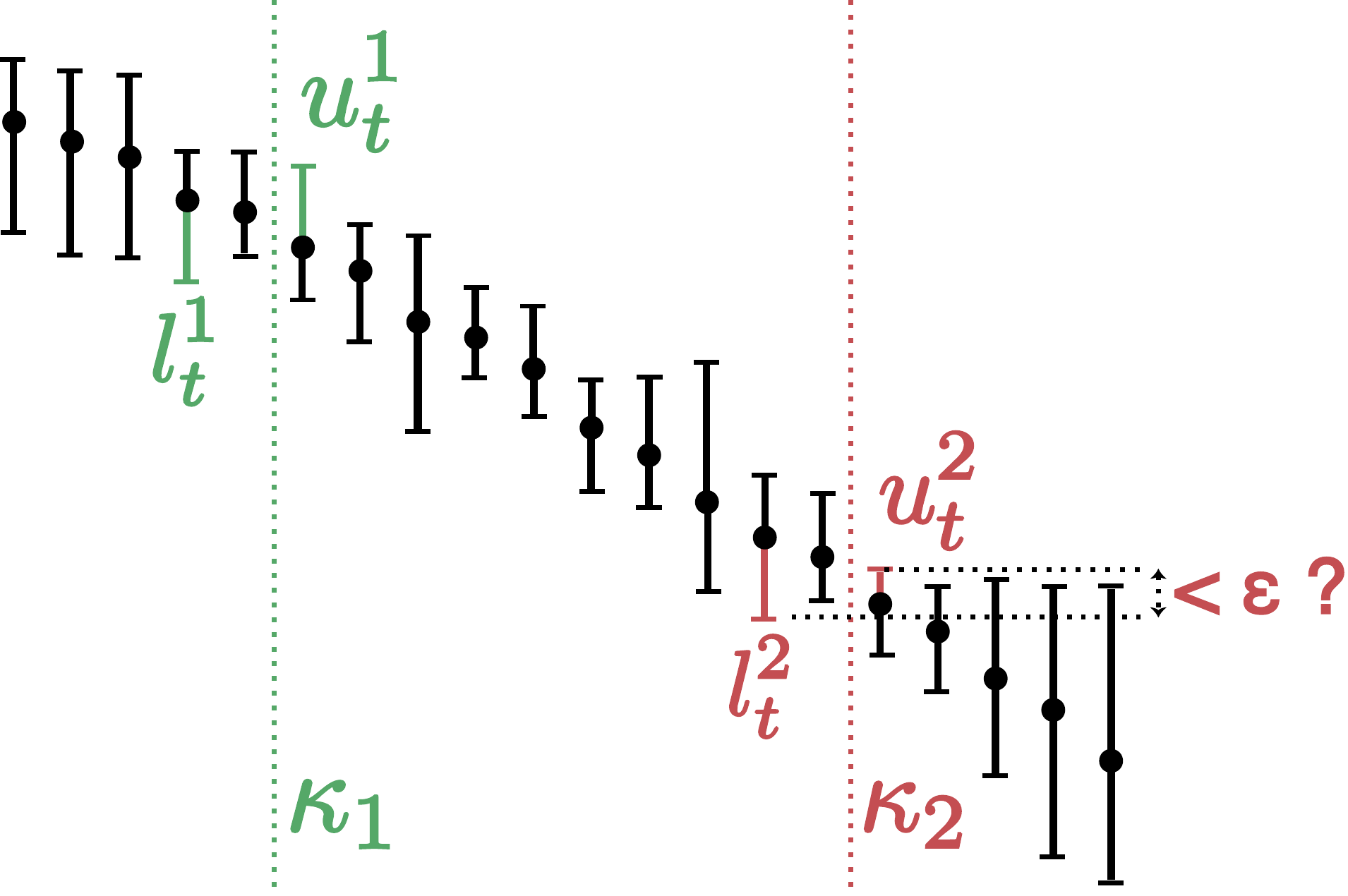}
    \caption{A visualization of $\lucbrank$ on a bandit instance with $K=20$
        arms, $c=3$ clusters, with boundaries at $\kappa_1=5, \kappa_2=15$. Also
        shown are the critical arms $l^i_t, u^i_t$ pulled at each boundary.
        The algorithm stops sampling a boundary when the confidence interval overlap
        is less than $\epsilon$.}
    \label{fig:algvisual}
    \vspace{-10pt}
\end{figure}
\begin{algorithm}[t]
	\caption{$\lucbrank$}
	\label{alg:main}
	\begin{algorithmic}[1]
        \STATE \textbf{Input:} $\epsilon > 0$, cluster boundaries $1\le \kappa_1,
        \dots, \kappa_c=K$
        \STATE $t \gets 1$
        \STATE $C \gets \{1,\dots,c-1\}$  \hfill //active cluster boundaries
        \STATE
        \FOR{$a = 1,\dots,K$}
        \STATE Sample item $a$, compute $U_a(1)$ and $L_a(1)$
        \ENDFOR
        \STATE
        \WHILE{$C \neq \varnothing$}
            \STATE // Sample active cluster boundaries
            \FOR{$i \in C$}
            \STATE Sample item $l_t^i$ 
            \STATE Sample item $u_t^i$
            \STATE (If pairwise comparing, compare item $l_t^i$ to a random other item, and compare item $u_t^i$ to a random other item. See \cref{sec:ranking pc})
            \ENDFOR
            \STATE $t = t+1$
            \STATE $\forall \,a \in [K]: $ Update reward-estimate $\hat{p}_a(t)$, number of samples $N_a(t)$, and confidence bounds $U_a(t), L_a(t)$ (see \eqref{eq:klupperbound})
            \STATE $\forall\, i\in C$: Compute $l_t^i, u_t^i$ (see \eqref{eq:criticalarmsatboundaryi})
            \STATE
            \STATE // Eliminate unambiguous cluster boundaries
            \FOR{$i \in C$}
            \IF{$U_{u_t^i}(t)-L_{l_t^i}(t) < \epsilon$}
                    \STATE $C = C \setminus i$
                \ENDIF
            \ENDFOR
        \ENDWHILE
        \STATE
        \STATE Return items sorted by their empirical mean rewards.
    \end{algorithmic}
\end{algorithm}

\cref{alg:main} contains the pseudocode of $\lucbrank$, which is also depicted
in \cref{fig:algvisual}. The algorithm maintains active cluster boundaries in
the set $C$, where a cluster boundary $i$ is active if the overlap of confidence
intervals in $J_i$ and $J_i^c$ is not less than $\epsilon$. In every round, it
samples both the critical arms at every active cluster boundary (lines
$11$-$15$). At the end of every round, it checks if the critical arms at
any boundary are separated according to the tolerance criterion, and removes
such boundaries from the active set (lines $21$-$25$).
For our experiments, we use $\klucb$ \citep{garivier2011kl} confidence intervals. For an exploration
rate $\beta(t,\delta)$, the $\klucb$ upper and lower confidence bounds for arm
$a$ are calculated as

\vspace{-15pt}
{\small
\begin{align}
    \begin{aligned}
    U_a(t) &:= \max\{q \in [\hat{p}_a(t), 1]: N_a(t) d(\hat{p}_a(t), q) \le
    \beta(t,\delta)\}, \\
    L_a(t) &:= \min\{q \in [0, \hat{p}_a(t)]: N_a(t) d(\hat{p}_a(t), q) \le
    \beta(t,\delta)\}. 
    \end{aligned}
\vspace{-40pt}
    \label{eq:klupperbound}
\end{align}
}%
where $d(x,y)$ is the Kullback-Leibler divergence between two Bernoulli distributions, given by $d(x,y) = x \log \tfrac{x}{y} + (1-x)\log
\tfrac{1-x}{1-y}$.

$\lucbrank$ can also be easily modified for pairwise-comparison queries:
whenever the algorithm calls for drawing an arm $i$, duel arm $i$ with another
arm chosen uniformly at random.

%!TEX root = Paper.tex

\vspace{-6pt}
\section{Analysis}
\label{sec:analysis}
\vspace{-4pt}
We prove the accuracy of $\lucbrank$ in \cref{thm:pacguarantee}, and give an
upper bound on the sample complexity in \cref{thm:samplecomplexity}. Our
distribution-dependent lower bound for the sample complexity of any $\delta$-PAC
algorithm is stated in \cref{thm:lowerbound}. All proofs can be found in the
Appendix. Recall that $1 \le \kappa_1 < \kappa_2 < \dots < \kappa_{c-1} <
\kappa_c=K$ are the cluster boundaries.
%\vspace{-1em}
\vspace{-3pt}
\subsection{PAC Guarantee}
\label{sec:pacguarantee}
\vspace{-3pt}
\cref{thm:pacguarantee} gives choices of $\beta(t,\delta)$ such that $\lucbrank$
is correct with probability at least $\delta$, in the sense defined by
\eqref{eq:meaningofdeltapac}.
\begin{theorem}
    $\lucbrank$ using $\beta(t,\delta) = \log(\tfrac{k_1 K t^\alpha}{\delta}) +
    \log \log (\tfrac{k_1 K t^\alpha}{\delta})$ with $\alpha > 1$ and $k_1 >
    \left( \frac{c-1}{2} \right)^\alpha +
    \tfrac{2e}{\alpha-1} + \tfrac{4e}{(\alpha-1)^2}$, is correct with
    probability $1-\delta$.
    \label{thm:pacguarantee}
\end{theorem}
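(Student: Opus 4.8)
### Proof Proposal

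\textbf{Overall approach.} The plan is to bound the probability that $\lucbrank$ fails the PAC criterion \eqref{eq:meaningofdeltapac} by controlling the probability that any confidence interval ever fails to contain the true mean. The key structural observation is that $\lucbrank$ only returns an incorrect clustering when some arm is placed on the wrong side of a cluster boundary by a margin exceeding the tolerance $\epsilon$, and the stopping rule $U_{u_t^i}(t) - L_{l_t^i}(t) < \epsilon$ guarantees that this cannot happen \emph{unless} a confidence bound has failed. So I would first reduce correctness to the event that all confidence intervals are valid throughout the run, and then show that the chosen $\beta(t,\delta)$ makes the total failure probability at most $\delta$ via a union bound over arms, cluster boundaries, and time.

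\textbf{Key steps, in order.} First I would define the ``good event'' $\mathcal{G}$ on which $p_a \in [L_a(t), U_a(t)]$ for every arm $a$ and every round $t$, and argue deterministically that on $\mathcal{G}$ the algorithm's output satisfies $\hat{M}_i \subseteq M^\ast_{i,\epsilon}$ for all $i$. The argument here uses the stopping condition: when boundary $i$ is eliminated, the critical arms $l_t^i$ (lowest lower bound inside $J_i$) and $u_t^i$ (highest upper bound outside $J_i$) satisfy $U_{u_t^i}(t) - L_{l_t^i}(t) < \epsilon$, and on $\mathcal{G}$ this forces every arm's true mean to lie within $\epsilon$ of the boundary means $p_{\kappa_i}, p_{\kappa_i + 1}$, which is exactly the $\epsilon$-tolerance membership condition. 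Second, I would bound $\P(\mathcal{G}^c)$. By a union bound it suffices to control, for each arm $a$, the probability that its true mean ever escapes its confidence interval. For \klucb\ intervals of the form \eqref{eq:klupperbound}, the deviation event at round $t$ is $\{N_a(t)\, d(\hat{p}_a(t), p_a) > \beta(t,\delta)\}$, whose probability is controlled by a standard self-normalized concentration / maximal inequality for the KL-divergence of empirical Bernoulli means.

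\textbf{The main obstacle.} The hard part will be the concentration bound for the \klucb\ statistic and the resulting sum over time. The exploration rate $\beta(t,\delta) = \log(\tfrac{k_1 K t^\alpha}{\delta}) + \log\log(\tfrac{k_1 K t^\alpha}{\delta})$ is tuned precisely so that summing the per-round failure probabilities $e^{-\beta(t,\delta)}$ (up to the log-log correction, which is what tames the otherwise-divergent harmonic-type sum) yields a convergent series. Concretely, I expect a deviation bound of the form $\P(N_a(t)\, d(\hat{p}_a(t), p_a) > \beta) \lesssim e^{-\beta}\beta\log(\cdot)$ or a peeling argument across the possible values of $N_a(t)$, after which the geometric-in-$K$ and polynomial-in-$t$ structure of $\beta$ must combine to give $\sum_t \sum_a \P(\cdot) \le \delta$. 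Verifying that the specific constant $k_1 > \left(\tfrac{c-1}{2}\right)^\alpha + \tfrac{2e}{\alpha-1} + \tfrac{4e}{(\alpha-1)^2}$ makes this sum bounded by $\delta$ is the delicate calculation: the three terms should correspond respectively to the $c-1$ active boundaries, the leading $\sum_t t^{-\alpha}$ tail, and the second-order correction from integrating the log-log term. I would isolate this summation as the crux and treat the reduction to $\mathcal{G}$ as routine.
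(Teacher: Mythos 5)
Your proposal follows essentially the same route as the paper: define the good event that every confidence interval contains its true mean at all times, argue deterministically via the stopping rule that the output is correct on that event, and bound the complement by a union bound over arms and time using the standard KL-UCB deviation inequality $\P(U_a(t)<p_a)\le e(\beta(t,\delta)\log t+1)e^{-\beta(t,\delta)}$ together with the convergence of $\sum_t t^{-\alpha}\log t$. The only small inaccuracy is your attribution of the $\left(\frac{c-1}{2}\right)^{\alpha}$ term in $k_1$ to the union over boundaries here — that term is not needed for this theorem (the paper's correctness proof only uses $k_1\ge \frac{2e}{\alpha-1}+\frac{4e}{(\alpha-1)^2}$; the extra term is consumed in the sample-complexity proof), which is harmless since the stated $k_1$ is only larger.
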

\vspace{-4pt}
\subsection{Sample Complexity}
\vspace{-4pt}
Our sample complexity results are stated in terms of Chernoff information \citep{cover2012elements}.

\textbf{Chernoff Information}: Consider two Bernoulli distributions
$\mathcal{B}(x)$ and $\mathcal{B}(y)$, and let $d(x,y)$ denote the KL-divergence
between these distributions.  The Chernoff information $d^\ast(x,y)$ between
these two Bernoulli distributions is defined by 
\begin{equation*}
\vspace{-3pt}
\begin{small}
    \chernoff{x,y} := d(z^\ast,x) = d(z^\ast,y)
\end{small}
\vspace{-3pt}
\end{equation*}
where $z^\ast$ is the unique $z$ such that $d(z,x) = d(z,y)$.

Next we introduce some notation. For an arm $a$, let $g(a)$ (read group of arm
$a$) denote the index of the cluster that arm $a$ belongs to. Formally,
\begin{equation}
\vspace{-3pt}
\begin{small}
    g(a) := \min \{1 \le i \le c: p_a \le p_{\kappa_i}\}.
    \label{eq:clusterofarma}
\end{small}
\end{equation}
%We first introduce some notation. For an arm $a$, let $l(a)$ and $r(a)$ denote
%the indices of its its left and right cluster boundaries respectively, with the
%caveat that $l(a)$ is not defined for arms $1 \le a \le \kappa_1$ in the first
%cluster, and $r(a)$ is not defined for arms $\kappa_{c-1} < a \le K$ in the last
%cluster. Formally, 
%\begin{align}
%    \begin{aligned}
%    l(a) &= \max\{1 \le i \le (c-1): \kappa_i < a\} \text{ for }\kappa_1 < a
%    \le K \\
%    r(a) &= \min \{1 \le i \le (c-1): \kappa_i \ge a\} \text{ for }1 \le a \le
%    \kappa_{c-1}
%    \end{aligned}
%    \label{eq:leftrightclusters}
%\end{align}
%
Let $b_i \in [p_{\kappa_i}, p_{\kappa_i+1}], 1 \le i \le c-1$ be any points
in the cluster boundary gaps, and $b := (b_1,b_2,\dots,b_{c-1})$. Define 
\par\nobreak
\vspace{-18pt}
{\small 
\begin{align}
    \Delta^\ast_{b}(a) := 
    \begin{cases*}
        \chernoff{p_a,b_{1}} &\hspace{-65pt} $a\in \{1,\dots,\kappa_1\}$ \\
        \min(\chernoff{p_a,b_{g(a)-1}}, \chernoff{p_a,b_{g(a)}} &\\
            & \hspace{-62pt}$a \in \{\kappa_1+1,\dots,\kappa_{c-1}\}$ \\
            \chernoff{p_a,b_{c-1}} & \hspace{-65pt} $a \in \{\kappa_{c-1}+1,\dots,K \}$
    \label{eq:chernoffdistances}
    \end{cases*}
\end{align}
}%
to be the ``distance'' of each arm from the closest cluster boundary. %(see
%\cref{fig:upperandlowerbounddistances}). 
Our upper bound on the sample complexity of $\lucbrank$ is stated in \cref{thm:samplecomplexity}, and contains the
quantity $\hstar$ where
\vspace{-5pt}
\begin{equation}
\begin{small}
    \hstar := \sum\limits_{a \in \{1,\dots,K\}}^{} \frac{1}{\max
(\Delta^\ast_{b}(a), \epsilon^2/2 )}.
\label{eq:upperboundparameter}
\end{small}
\end{equation}
\begin{theorem}
    Let $b = (b_1,b_2,\dots,b_{c-1})$, where $b_i \in [p_{\kappa_i},
    p_{\kappa_i+1}]$. Let $\epsilon>0$. Let $\beta(t,\delta) = \log(\tfrac{k_1K
    t^\alpha}{\delta}) + \log \log (\tfrac{k_1K t^\alpha}{\delta})$ with $k_1 >
    \left( \frac{c-1}{2} \right)^\alpha + \tfrac{2e}{\alpha-1} +
    \tfrac{4e}{(\alpha-1)^2}$. Let $\tau$ be the random number of samples taken
    by $\lucbrank$ before termination. If $\alpha > 1$, 
        \[
    \begin{small}
            \P\left(\tau \le
        2C_0(\alpha) H^\ast_{\epsilon,b} \log \left( \frac{k_1 K
    (2 H^\ast_{\epsilon, b})^\alpha}{\delta} \right)\right) \ge 1-\delta
    \end{small}
\]
    %Moreover, for $\alpha > 2$, we have the following upper bound on
    %$\EE{\tau}$:
    %\[\EE{\tau} \le 2 C_0(\alpha) \sum_{i=1}^{c-1}H^\ast_{\epsilon, c_i} \log
    %\left( \frac{k_1 K (c-1) (2H^\ast_{\epsilon, c_i})^\alpha}{\delta} \right) +
    %K_\alpha \]
    where
    %\[K_\alpha = \frac{\delta}{k_1 K (\alpha-2)} + \frac{\delta
    %2^{\alpha-1}}{k_1 (\alpha-2)} + \frac{2^{\alpha-1} \delta}{k_1}\left( 1+
    %\frac{1}{e} + \log 2^\alpha + \log (1+\alpha \log 2) \right)
    %\frac{1}{(\alpha-2)^2},\] and
    $C_0(\alpha)$ is such that $C_0(\alpha) \ge \left( 1+\frac{1}{e} \right) \left( \alpha \log (C_0(\alpha)) + 1 +
    \frac{\alpha}{e} \right)$.
    \label{thm:samplecomplexity}
\end{theorem}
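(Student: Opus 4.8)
The plan is to follow the standard LUCB-style high-probability argument adapted to \klucb\ confidence bounds. First I would define the \emph{good event} $\mathcal{E} := \{\forall a\in[K],\ \forall t\ge 1:\ p_a\in[L_a(t),U_a(t)]\}$ on which every confidence interval is valid. The stated $\beta(t,\delta)$ is precisely the choice that makes $\P(\mathcal{E}^c)\le\delta$ via a union bound over arms and rounds together with the self-normalized \klucb\ deviation inequality; this is the same concentration argument underlying \cref{thm:pacguarantee}, and the conditions $\alpha>1$ and $k_1 > (\tfrac{c-1}{2})^\alpha + \tfrac{2e}{\alpha-1}+\tfrac{4e}{(\alpha-1)^2}$ are what make the relevant series $\sum_t t^{-\alpha}$ summable. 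It then suffices to prove a \emph{deterministic} bound on $\tau$ that holds on $\mathcal{E}$, since $\P(\tau\le \text{bound})\ge\P(\mathcal{E})\ge 1-\delta$.

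Next I would establish a per-arm sampling threshold. Fix a round $t$ and an active boundary $i$, so that $U_{u_t^i}(t)-L_{l_t^i}(t)\ge\epsilon$. Writing this gap as $[U_{u_t^i}(t)-b_i]+[b_i-L_{l_t^i}(t)]$ and using $\max(X,Y)\ge\tfrac12(X+Y)$, at least one of the two critical arms has its confidence bound separated from $b_i$ by at least $\epsilon/2$: either $U_{u_t^i}(t)\ge b_i+\epsilon/2$ or $L_{l_t^i}(t)\le b_i-\epsilon/2$. I would call the arm witnessing this the \emph{progressing} arm of $(i,t)$. The key lemma is that on $\mathcal{E}$ a progressing arm $a$ must satisfy
\[
  N_a(t)\ \le\ \frac{\beta(t,\delta)}{\max\!\big(\chernoff{p_a,b_i},\,\epsilon^2/2\big)}\ =:\ T_a(t).
\]
The $\chernoff{p_a,b_i}$ term is the genuinely \klucb-specific part: translating ``the \klucb\ bound of $a$ is still $\epsilon/2$ past $b_i$'' into a ceiling on $N_a(t)$ uses the variational characterization of Chernoff information at the equalizing point $z^\ast$ with $d(z^\ast,p_a)=d(z^\ast,b_i)=\chernoff{p_a,b_i}$, while the $\epsilon^2/2$ floor comes from Pinsker's inequality $d(x,y)\ge 2(x-y)^2$ applied to a half-width of $\epsilon/2$. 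Since $b_i$ is one of the (at most two) boundary points adjacent to $a$, we have $\chernoff{p_a,b_i}\ge \Delta^\ast_b(a)$, hence $T_a(t)\le \beta(t,\delta)/\max(\Delta^\ast_b(a),\epsilon^2/2)$.

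With the lemma in hand I would run the charging argument. Each active $(i,t)$ consumes exactly two samples (arms $l_t^i$ and $u_t^i$) but, by the decomposition above, produces at least one progressing arm; charge the instance to that arm. Because $N_a$ strictly increases at every pull of $a$ and a progressing arm obeys $N_a(t)\le T_a(t)$, arm $a$ can be charged at most $\lceil T_a\rceil$ times, with $\beta$ monotone in $t$ so that $T_a(t)\le T_a(\tau)$. Summing, the number of active instances is at most $\sum_a \beta(\tau,\delta)/\max(\Delta^\ast_b(a),\epsilon^2/2)=\beta(\tau,\delta)\,\hstar$, and since each instance costs two samples, $\tau\le 2\,\beta(\tau,\delta)\,\hstar$. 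Finally I would resolve this self-referential inequality: substituting $\beta(\tau,\delta)=\log(\tfrac{k_1K\tau^\alpha}{\delta})+\log\log(\tfrac{k_1K\tau^\alpha}{\delta})$ and the candidate $\tau=2C_0(\alpha)\hstar\log(\tfrac{k_1K(2\hstar)^\alpha}{\delta})$, I would check consistency using the defining property $C_0(\alpha)\ge(1+\tfrac1e)(\alpha\log C_0(\alpha)+1+\tfrac{\alpha}{e})$, where the $(1+\tfrac1e)$ slack and the $\tfrac1e$ shifts absorb the $\log\log$ term through $\log\log x\le \tfrac1e\log x$-type bounds; a sublinear-growth argument for $\tau\mapsto 2\beta(\tau,\delta)\hstar$ upgrades consistency to the claimed bound.

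The main obstacle is the per-arm threshold lemma. The clean decomposition guarantees a progressing arm, but converting a statement about the \emph{empirical} \klucb\ bound lying $\epsilon/2$ beyond $b_i$ into the Chernoff-information ceiling on $N_a(t)$ requires care: one must control $\hat p_a(t)$ relative to the equalizing point $z^\ast$ on $\mathcal{E}$, and handle the case where the critical arm is momentarily \emph{mis-ordered} relative to $b_i$ (e.g.\ $u_t^i$ with $p_{u_t^i}>b_i$), where the Chernoff term must still be extracted from the deviation of $\hat p_a(t)$. Everything else---the union bound, the charging, and the transcendental solve---is routine once this lemma is fixed.
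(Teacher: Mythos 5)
Your overall architecture (good event, per-arm sample threshold, charging argument, transcendental solve via the $C_0$ recursion) matches the paper's, and your observation that on the good event the two containments $p_a\in\mathcal{I}_a(t)$ and $b_i\in\mathcal{I}_a(t)$ together force $N_a(t)\,\chernoff{p_a,b_i}\le\beta(t,\delta)$ (because $\max(d(\hat p_a,p_a),d(\hat p_a,b_i))\ge\inf_z\max(d(z,p_a),d(z,b_i))=\chernoff{p_a,b_i}$) is correct; it would even give a cleaner, fully deterministic treatment of the Chernoff term than the paper's, which instead bounds the corresponding residual event in probability via a separate deviation lemma (\cref{lem:concentrationinequality2}) and pays for it with the $\left(\tfrac{c-1}{2}\right)^\alpha$ contribution to $k_1$.

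However, your key lemma is false as stated, and this is a genuine gap rather than a deferred technicality. You define the progressing arm by splitting $U_{u_t^i}-L_{l_t^i}$ at $b_i$ and charging whichever arm has its confidence bound $\epsilon/2$ past $b_i$. But if the critical arm $u_t^i$ is mis-ordered, i.e.\ $p_{u_t^i}\ge b_i+\epsilon/2$, then on the good event $U_{u_t^i}(t)\ge p_{u_t^i}\ge b_i+\epsilon/2$ holds for \emph{every} $t$ no matter how large $N_{u_t^i}(t)$ is, and $b_i$ need not lie in $\mathcal{I}_{u_t^i}(t)$ at all (one can have $L_{u_t^i}(t)>b_i$). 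The arm your rule charges therefore admits no ceiling on its sample count, and the charging argument collapses. You flag exactly this case as ``the main obstacle'' but do not resolve it, and resolving it is the entire content of the key step. The paper handles it by importing Proposition 1 of Kaufmann and Kalyanakrishnan (restated as \cref{prop:prop1}): a case analysis on whether $l_t^i$ and $u_t^i$ truly lie above or below the boundary shows that one of the \emph{two} critical arms --- not necessarily the one your decomposition selects --- always satisfies both $b_i\in\mathcal{I}_k(t)$ and $\tilde\beta_k(t)>\epsilon/2$. Without that proposition or an equivalent case analysis, your proof does not go through.
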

%\begin{figure}[t]
%    \centering
%    \includegraphics[width=\textwidth]{./Figures/upperlowerbound.pdf}\\[-.05in]
%    \hspace{.1in} {\tiny (a)} \hspace{3in} {\tiny (b)} \vspace{-0.05in}
%    \caption{Distances in the (a) upper, (b) lower bound for the sample
%    complexity ($\kappa_1=5, \kappa_2=15)$.}
%    \label{fig:upperandlowerbounddistances}
%\end{figure}

%!TEX root = Paper.tex

\subsection{Distribution-Dependent Lower Bound}
\label{sec:lower bound}
In this section, we state our non-asymptotic lower bound on the expected number of
samples needed by any $\delta$-PAC algorithm to cluster and rank the arms into
groups of sizes $(\kappa_1, \kappa_2-\kappa_1, \dots, K-\kappa_{c-1})$.
For simplicity, we focus on the case $\epsilon=0$. The proof of the lower bound 
uses standard change of measure arguments \citep{kaufmann2015complexity}, which requires some
continuity and well-separation assumptions. We state these next.

We consider the following class of bandit models where the clusters are
unambiguously separated, i.e.
\begin{equation}
\begin{small}
    \mathcal{M}_\kappa = \{p = (p_1,\dots,p_K): p_{i} \in \mathcal{P},
    p_{\kappa_i} > p_{\kappa_i+1}, 1 \le i < c\},
    \label{eq:separable}
\end{small}
\end{equation}
where $\mathcal{P}$ is a set that satisfies
\[
\begin{small}
    \forall\,p,q \in \mathcal{P}^2, p \ne q \Rightarrow 0 < KL(p,q) < +\infty.
\end{small}
\]
We also assume the following:
\begin{assumption} 
    For all $p,q \in \mathcal{P}^2$ such that $p \ne q$, for all $\alpha > 0$,

    there exists $q_1\in\mathcal{P}$: $\KL{p,q} < \KL{p,q_1}<\KL{p,q}+\alpha$
    and $\mathbb{E}_{X\sim q_1}[X]>\mathbb{E}_{X\sim q}[X]$,

    there exists $q_2\in\mathcal{P}$: $\KL{p,q} < \KL{p,q_2}<\KL{p,q}+\alpha$
    and $\mathbb{E}_{X\sim q_2}[X]<\mathbb{E}_{X\sim q}[X]$.
    \label{ass:assumption}
\end{assumption}
To state our lower bound, we need to define for each arm $a$, another
``distance'' from the boundary, similar to \eqref{eq:chernoffdistances}.
Define

\vspace{-17pt}
{\small
\begin{align}
    \Delta^\text{KL}_{\kappa}(a) := 
    \begin{cases*}
        \KL{p_a,p_{\kappa_1+1}} &\hspace{-95pt} $a\in \{1,\dots,\kappa_1\}$ \\
        \min(\KL{p_a,p_{\kappa_{g(a)-1}}}, \KL{p_a,p_{\kappa_{g(a)}+1}} &\\
            &\hspace{-95pt} $a \in \{\kappa_1+1,\dots,\kappa_{c-1}\}$ \\
            \KL{p_a,p_{\kappa_{c-1}}} &\hspace{-95pt} $a \in \{\kappa_{c-1}+1,\dots,K \},$
    \end{cases*}
    \label{eq:kldistances}
\end{align}
}%
where $g(a)$ defined in \eqref{eq:clusterofarma} is the cluster that arm
$a$ belongs to.
We highlight the differences from \eqref{eq:chernoffdistances}. First, the Chernoff
information in \eqref{eq:chernoffdistances} is replaced with KL-divergence in
\eqref{eq:kldistances}, and second,
the distance is measured with the closest arm in either adjacent cluster here, as
opposed to a point in the gap between the clusters in
\eqref{eq:chernoffdistances}. %(see
%\cref{fig:upperandlowerbounddistances}).

Our lower bound involves the quantity 
\begin{equation}
\begin{small}
    \sum\limits_{a \in {1,\dots,K}}^{} \frac{1}{\Delta^{\text{KL}}_\kappa(a)}
    \label{eq:lowerboundparameter}
\end{small}
\end{equation}
and is as follows: 
\begin{theorem}
    Let $p \in \mathcal{M}_\kappa$, and assume that $\mathcal{P}$ satisfies
    \cref{ass:assumption}; any coarse ranking algorithm that is $\delta$-PAC on
    $\mathcal{M}_\kappa$ satisfies, for $\delta \le 0.15$,
    \vspace{-5pt}
    \[
    \begin{small}
        \mathbb{E}_p[\tau] \ge \left[ \sum\limits_{a \in {1,\dots,K}}^{}
        \frac{1}{\Delta^{\text{KL}}_\kappa(a)} \right] \log \left(
            \frac{1}{2.4\delta} \right)
    \end{small}
    \]
    \label{thm:lowerbound}
\end{theorem}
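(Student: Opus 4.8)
The plan is to prove the bound by the now-standard change-of-measure argument of \citet{kaufmann2015complexity}. The workhorse is the transportation inequality: for any $\delta$-PAC algorithm with almost-surely finite stopping time $\tau$, any alternative model $\lambda \in \mathcal{M}_\kappa$, and any event $\mathcal{E}$ measurable with respect to the information available at time $\tau$,
\[
    \sum_{b=1}^K \mathbb{E}_p[N_b(\tau)]\,\KL{p_b,\lambda_b} \;\ge\; d\!\left(\P_p(\mathcal{E}),\,\P_\lambda(\mathcal{E})\right),
\]
where $N_b(\tau)$ is the number of pulls of arm $b$ and $d(\cdot,\cdot)$ is binary relative entropy. (If $\mathbb{E}_p[\tau]=\infty$ the claim is vacuous, so assume it is finite.) The idea is to invoke this inequality once per arm $a$, using an alternative $\lambda^{(a)}$ that agrees with $p$ in every coordinate except $a$ and forces a \emph{different} correct clustering. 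Because only coordinate $a$ changes, the left-hand sum collapses to the single term $\mathbb{E}_p[N_a(\tau)]\,\KL{p_a,\lambda^{(a)}_a}$, which is what makes the per-arm bound tractable.

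First I would construct $\lambda^{(a)}$ by pushing $p_a$ just past the nearest cluster boundary, in three cases matching \eqref{eq:kldistances}: for $a$ in the top cluster, decrease its mean below $p_{\kappa_1+1}$; for $a$ in the bottom cluster, increase it above $p_{\kappa_{c-1}}$; for an interior arm, either increase it above $p_{\kappa_{g(a)-1}}$ or decrease it below $p_{\kappa_{g(a)}+1}$. In each case arm $a$ crosses exactly one boundary, so under $\lambda^{(a)}$ the unique correct coarse ranking assigns $a$ to a cluster different from $g(a)$. Taking $\mathcal{E}$ to be the event that the algorithm's output places arm $a$ in cluster $g(a)$, the $(\,0,\delta\,)$-PAC property (together with $M^\ast_{i,0}=M^\ast_i$ on the well-separated class) gives $\P_p(\mathcal{E})\ge 1-\delta$ and $\P_{\lambda^{(a)}}(\mathcal{E})\le \delta$, hence $d(\P_p(\mathcal{E}),\P_{\lambda^{(a)}}(\mathcal{E}))\ge d(1-\delta,\delta)$ by monotonicity of $d$.

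The subtlety is that I cannot set $p_a$ exactly equal to a boundary arm's mean, since that would create a tie and violate the strict separation required for membership in $\mathcal{M}_\kappa$. This is precisely where \cref{ass:assumption} is used: it supplies a sequence of admissible distributions in $\mathcal{P}$ whose means lie strictly on the far side of the boundary and whose KL-divergence from $p_a$ decreases to the target value. Applying the transportation inequality along this sequence and passing to the limit yields
\[
    \mathbb{E}_p[N_a(\tau)] \;\ge\; \frac{d(1-\delta,\delta)}{\Delta^{\text{KL}}_\kappa(a)},
\]
where for an interior arm both admissible directions give valid bounds and retaining the one with the smaller denominator produces exactly the $\min$ in \eqref{eq:kldistances}, which is the strongest of the two. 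Summing over all arms, using $\tau=\sum_a N_a(\tau)$, and invoking the elementary inequality $d(1-\delta,\delta)=d(\delta,1-\delta)\ge \log(1/(2.4\delta))$ for $\delta\le 0.15$ then gives the stated lower bound.

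I expect the main obstacle to be the second and third steps taken together: verifying for each single-coordinate perturbation that it (i) genuinely changes the correct coarse ranking and (ii) stays inside the well-separated class $\mathcal{M}_\kappa$, while its KL cost converges to $\Delta^{\text{KL}}_\kappa(a)$. This continuity-and-separation bookkeeping, which is exactly the reason \cref{ass:assumption} is imposed, is the delicate part; once it is in place, reducing the transportation sum to one term, inverting, and summing are routine.
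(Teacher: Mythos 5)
Your proposal is correct and follows essentially the same route as the paper's own proof: the single-coordinate change of measure via Lemma~1 of \citet{kaufmann2015complexity}, the event $\{a \in \hat{M}_{g(a)}\}$, \cref{ass:assumption} to approximate the boundary KL-divergence from the strict side and pass to the limit, the $\min$ over the two adjacent boundaries for interior arms, and the inequality $d(1-\delta,\delta)\ge\log(1/(2.4\delta))$. No substantive difference from the paper's argument.
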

\vspace{-.3in}
\subsection{Remarks}
\vspace{-.1in}
\label{sec:discussion}
\begin{itemize} 
    \item The tightest high-probability upper bound is obtained by setting $b$ equal to
        $\argmin\limits_{b:b_i\in [p_{\kappa_i}, p_{\kappa_i+1}]} \hstar$ in
        \cref{thm:samplecomplexity}.
\item Although stated for Bernoulli distributions, the results in this paper can
    easily be extended to rewards in the exponential family
    \citep{garivier2011kl} by using the appropriate $d$ function.
\end{itemize}
\vspace{-.1in}

%!TEX root = Paper.tex

\section{Experiments}
\label{sec:experiments}
\subsection{Ranking from Direct Feedback}
\label{sec:experiment1}
\begin{figure}[ht]
    \centering
    \includegraphics[width=.5\columnwidth]{./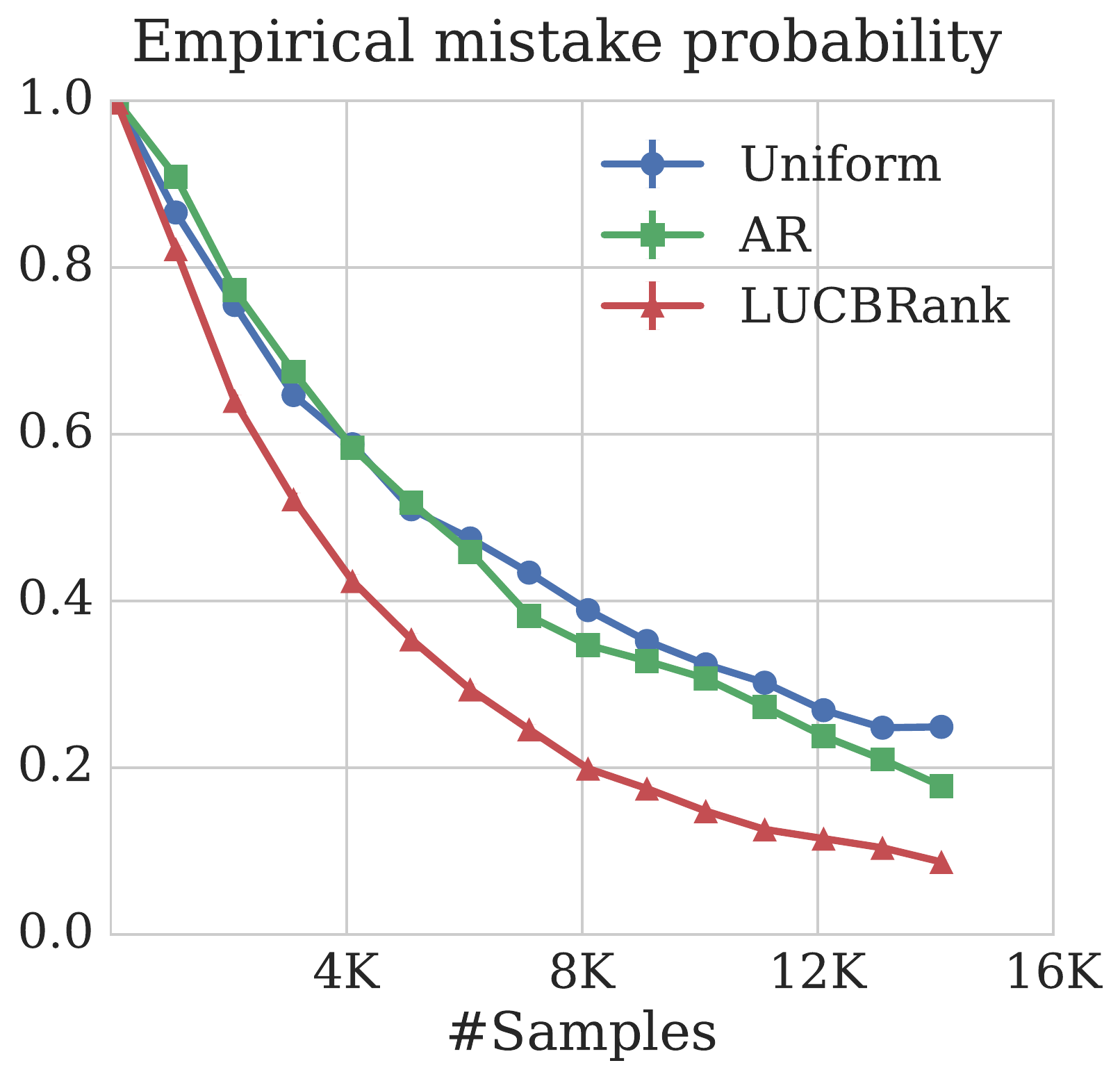}
    \caption{Exp 1 (description in text)}
    \label{fig:expt1}
\end{figure}
We first compare $\lucbrank$ with uniform sampling and the Active Ranking (AR) algorithm \citep{heckel2016active}. AR is an adaptation of the successive elimination approach to solve the coarse ranking problem. It maintains a set of unranked items and samples every item in this set, removing an item from the set when it is confident of the cluster the item belongs to. Although developed for pairwise comparison feedback, AR can easily be adapted to the direct-feedback setting.

We look at the bandit instance $B$ with $K=15$ arms whose rewards are Bernoulli distributed with means $(p_1=\tfrac{1}{2}; p_a=\tfrac{1}{2}-\tfrac{a}{40}$ for $a=2,3,\dots,K)$. This problem has been studied in the literature in the context of finding the best-arm \citep{bubeck2013multiple}. We consider the problem of finding the top-$3$ and the bottom-$3$ arms, which corresponds to $\kappa_1=3, \kappa_2=12$.

In \cref{fig:expt1}, we record the probability (averaged over $1000$ simulations) that the empirical clusters returned by the algorithm do not match the true clusters. We set $\delta=0.1$ for both $\lucbrank$ and AR, and $\epsilon=0$ in $\lucbrank$ to have a fair comparison with AR. We see that the mistake probability drops faster for $\lucbrank$ than for AR.
\subsection{Ranking from Pairwise Comparisons}
\label{sec:chicagostreetview}
\vspace{-10pt}
\begin{figure}[ht]
    \centering
    \subfloat[(a)]{
        \includegraphics[width=5cm]{./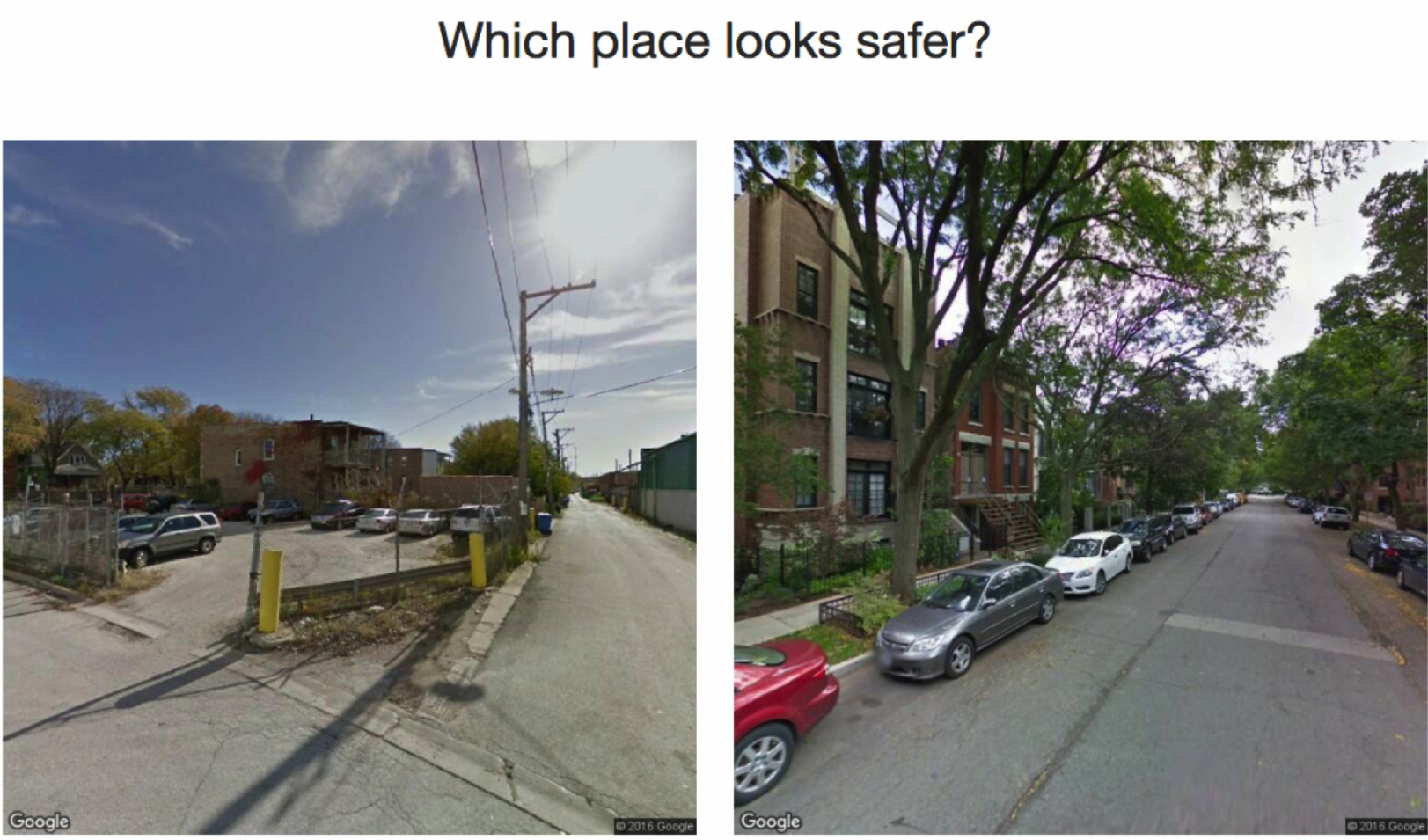}
    }\par
%    \subfloat[0.05\hspace{35pt} 1.09\hspace{10pt}(b)\hspace{10pt} 2.98\hspace{35pt}
%    3.77]{
    \subfloat[\hspace{10pt}0.05\hspace{25pt} 1.09\hspace{25pt} 2.98\hspace{25pt} 3.77
    \newline (b)]{
        \includegraphics[width=1.5cm]{./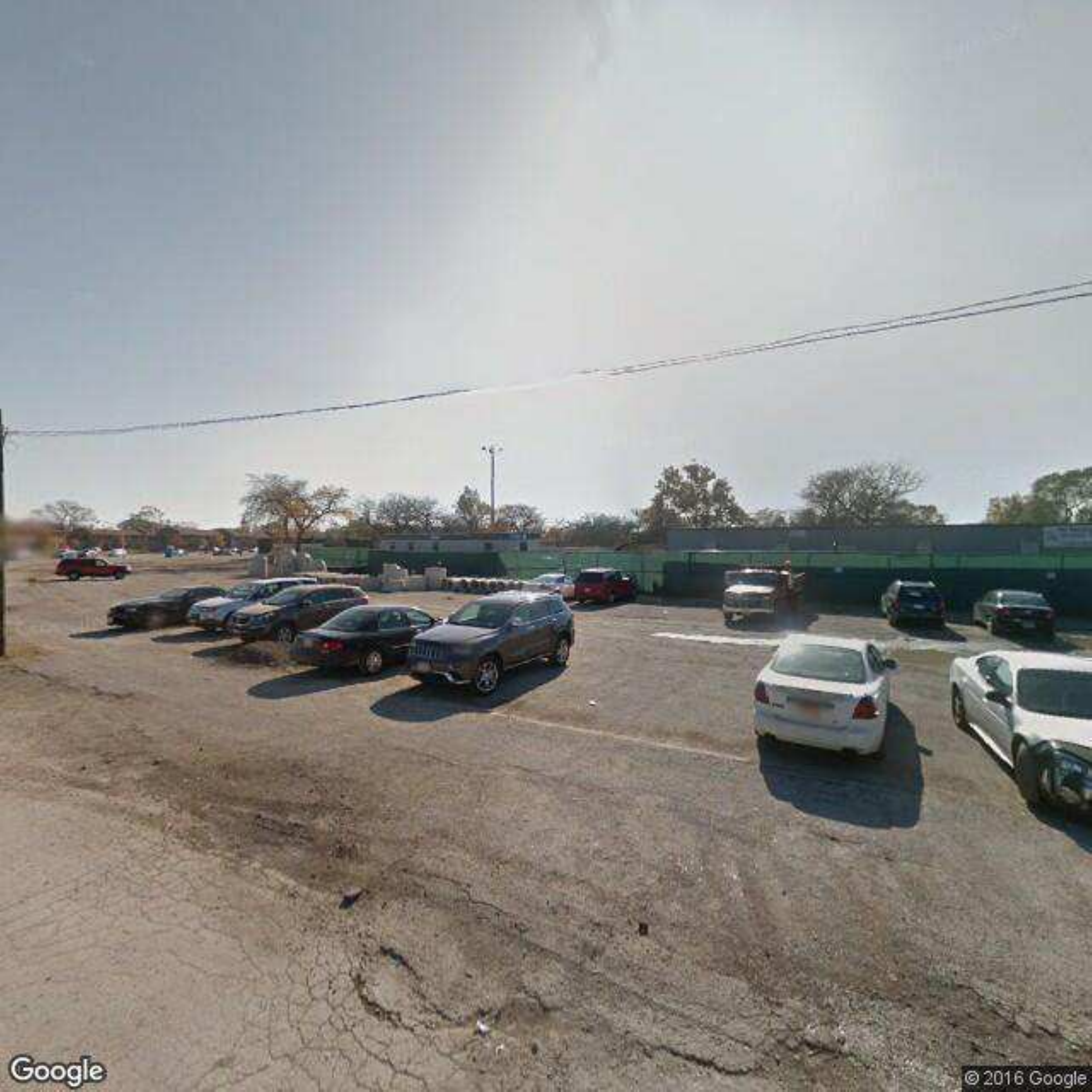}
        \includegraphics[width=1.5cm]{./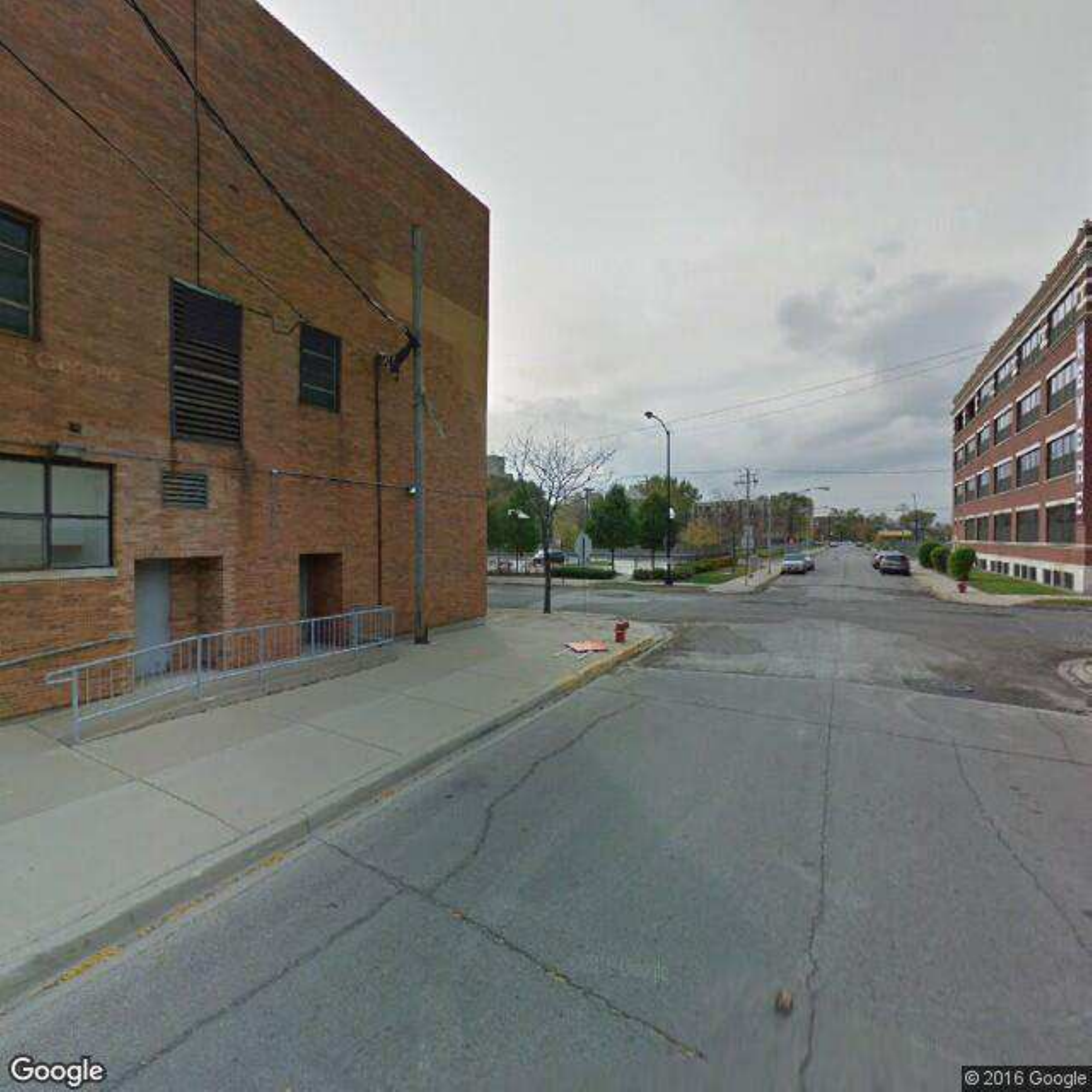}
        \includegraphics[width=1.5cm]{./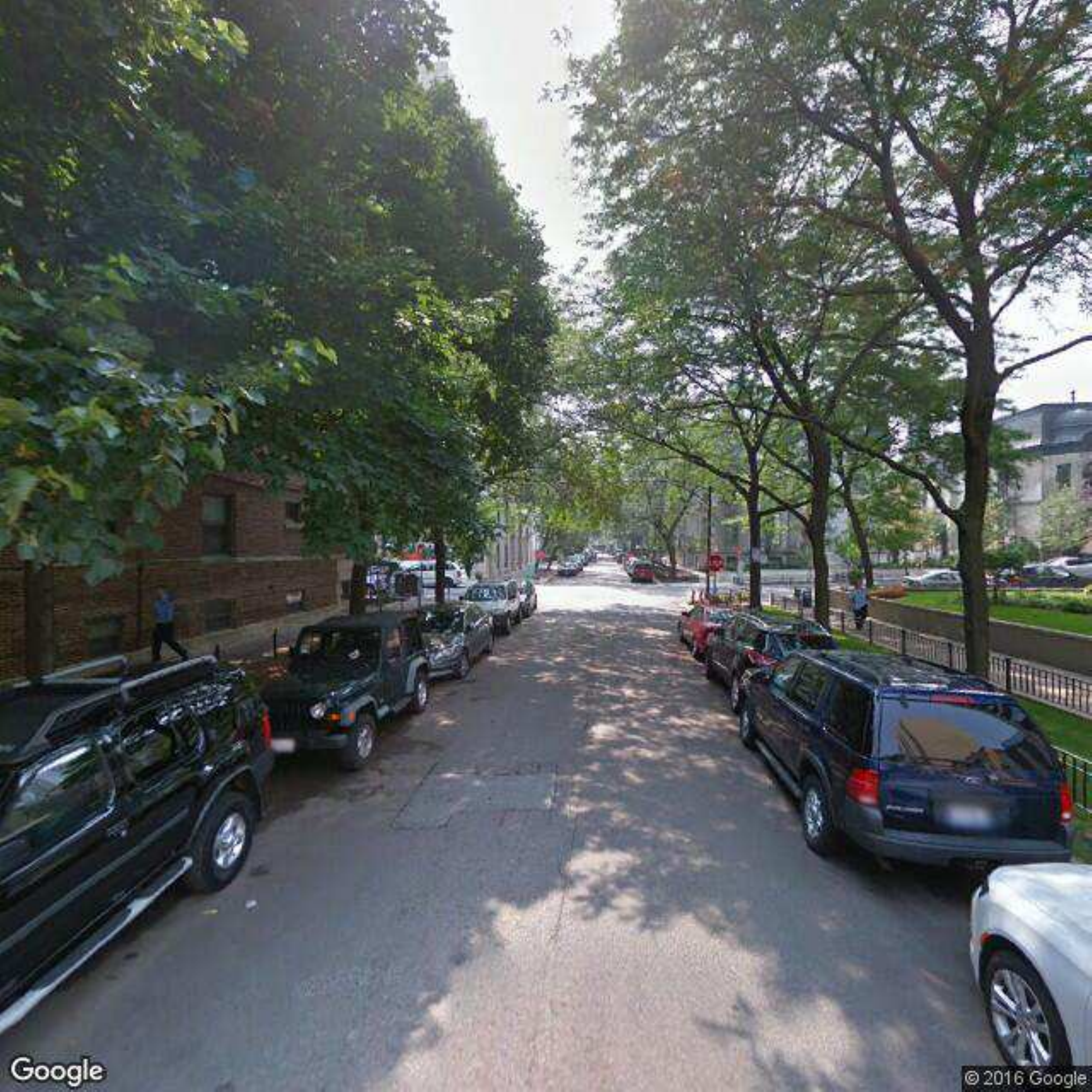}
        \includegraphics[width=1.5cm]{./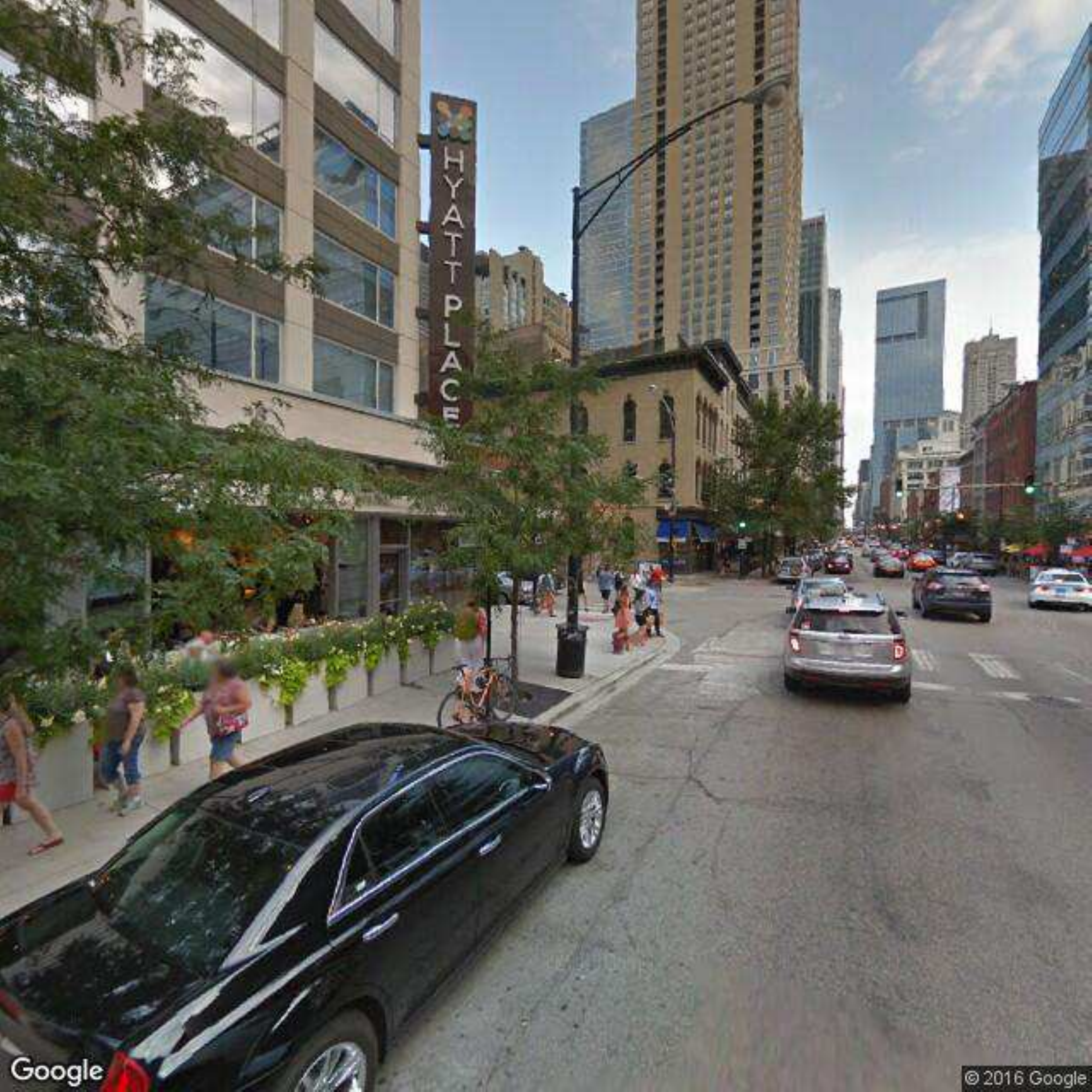}
    }\par
    \vspace{-20pt}
    \subfloat[(c)]{
        \includegraphics[width=6cm]{./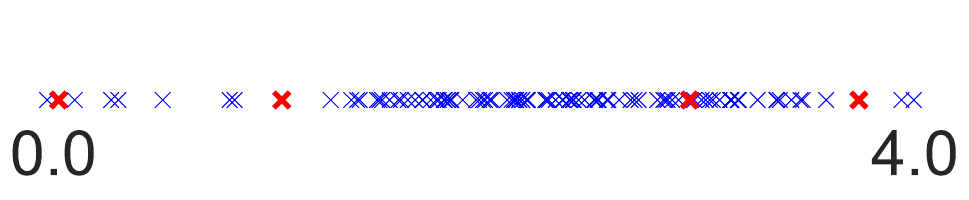}
    \vspace{-25pt}
    }
    \caption{(a) A sample query on NEXT. (b) Four sample images and their
    estimated BTL scores beneath. (c) Scatter plot of all the BTL scores, with the 
    sample image markers highlighted.}
    \label{fig:chicagoimages}
    \vspace{-10pt}
\end{figure}
To measure the performance of our algorithm on real-world data, we selected
$K=100$ Google street view images in Chicago, and collected $6000$ pairwise
responses on MTurk using NEXT \citep{jamieson2015next}, where we asked users to
choose the safer-looking image out of two images. This experiment is similar to
the Place Pulse project \citep{naik2014streetscore}, where the objective is to assess
how the appearance of a neighborhood affects its perception of safety.
\cref{fig:chicagoimages}(a) shows a sample query from our experiment.
We estimated the safety scores of these street view images from the
user-responses by fitting a Bradley-Terry-Luce (BTL) model
\citep{bradley1952rank} using maximum likelihood estimation, and used this as the ground truth to generate noisy comparisons. Given two items $i$ and $j$ with scores $\theta_i$ and
$\theta_j$, the BTL model estimates the probability that item $i$ is preferred
to item $j$ as $\P(i > j) = \frac{e^{\theta_i}}{e^{\theta_i} + e^{\theta_j}}$.
\cref{fig:chicagoimages}(b) shows $4$ images overlayed with their estimated BTL
scores (where the lowest score was set to $0$), and \cref{fig:chicagoimages}(c)
shows a scatter plot of the scores of all $100$ images. 

\begin{figure}[ht]
    \centering
    \includegraphics[width=.31\textwidth]{./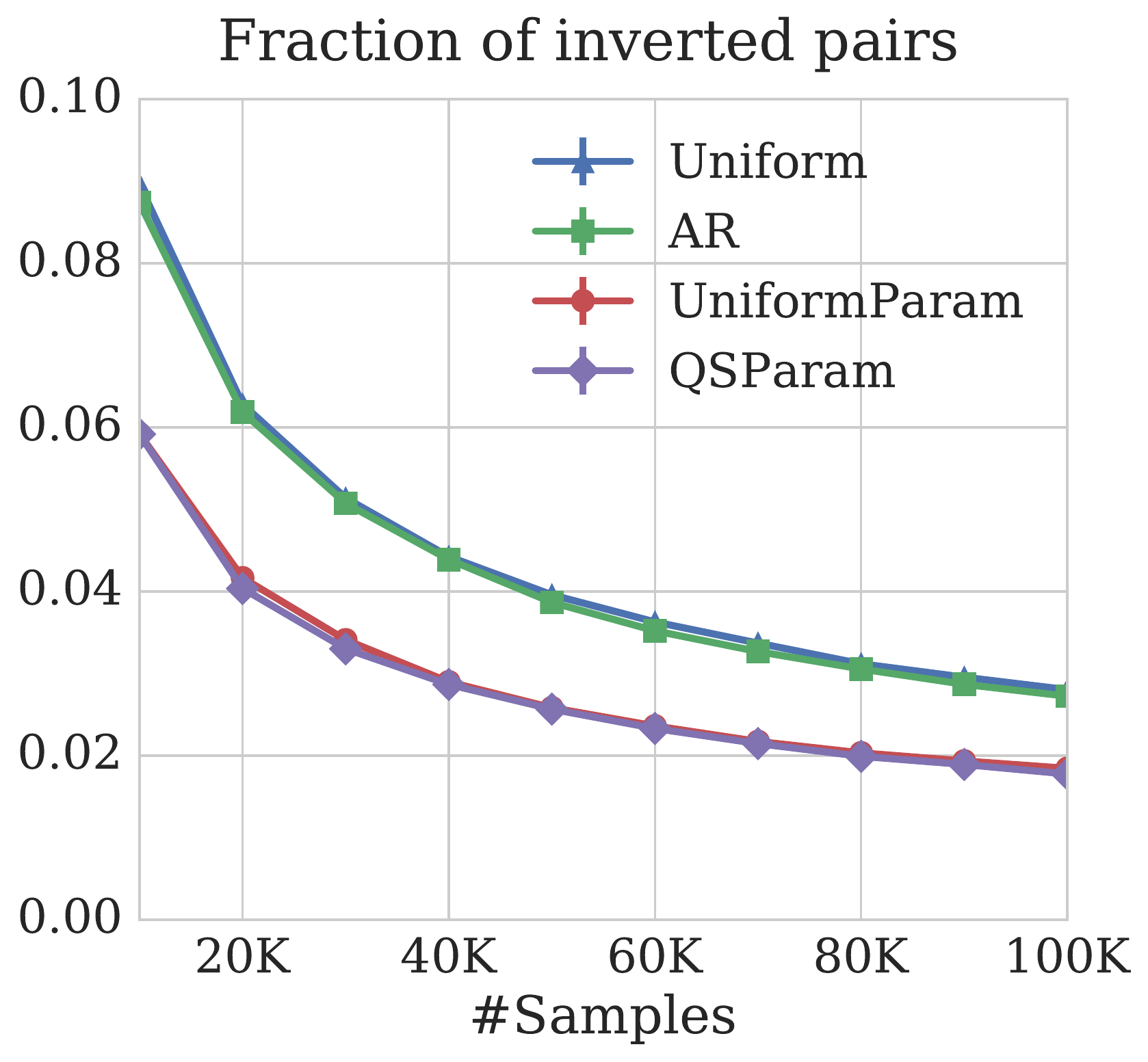}
    %\hspace{.2in}
    %\includegraphics[width=.3\textwidth]{./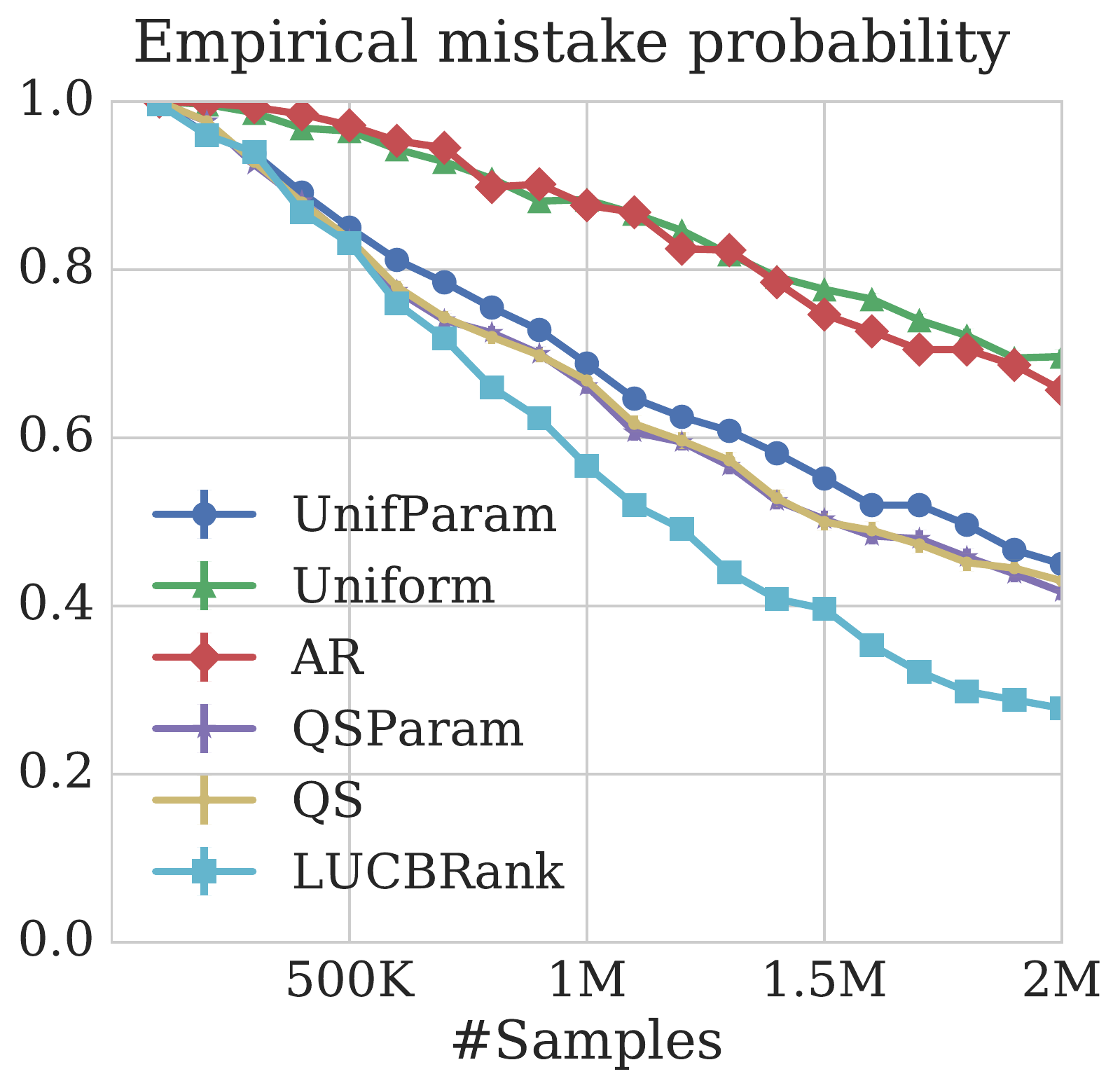}\\[-.1in]
    %\hspace{.1in} {\tiny (a)} \hspace{1.8in} {\tiny (b)} \vspace{-0.05in}
    \caption{The futility of adaptive methods if the goal is to
    obtain a complete ranking. We compare uniform sampling with Active Ranking (both use
non-parametric rank aggregation), and uniform sampling with quicksort (where
both use parametric rank aggregation). We see that the Kendall tau distance of
adaptive methods is no lower than those of their non-adaptive counterparts.}
    \label{fig:completerankingisfutile}
    \vspace{-2pt}
\end{figure}
We first study the performance of adaptive methods with the goal of finding a \emph{complete ranking}, and observe that adaptive methods offer no advantages when
items means are close to each other as they are in this dataset. Oblivious of
the generative model, a lower bound (ignoring constants and log factors) on the
number of samples required to sort the items by their Borda
scores is given by $\sum 1 / \Delta_i^2$ \citep{jamieson2015sparse}, where the $\Delta_i$s are gaps between
consecutive sorted Borda scores. For the dataset
considered in this experiment, $\sum 1/\Delta_i^2 = 322$ million! We verify the
futility of adaptive methods in \cref{fig:completerankingisfutile}, where we compare
the performance of parametric as well as non-parametric adaptive methods in the
literature (we describe these methods shortly) to their non-adaptive
counterparts, with a goal of finding a complete ranking of the images. In the
parametric algorithms (UniformParam and QSParam), we find MLE estimates of the
BTL scores that best fit the pairwise responses. In the non-parametric
algorithms (Uniform and AR), we estimate the scores using empirical
probabilities in \cref{eq:bordadefn}. In \cref{fig:completerankingisfutile}, we plot
the fraction of pairs that are inverted in the empirical ranking compared to the
true ranking, and see no benefits for adaptive methods. We do see gains from
adaptivity in the coarse formulation (\cref{fig:chicagoresults}),
as we explain next.

$\lucbrank$ can be used in the pairwise comparison setting using Borda reduction, as described in \cref{sec:ranking pc}. The adaptive methods in literature we compare to are AR (as in the previous section), and Quicksort (QS) \citep{ailon2008aggregating, maystre2017just}. The Quicksort algorithm works exactly like its non-noisy counterpart: it compares a randomly chosen pivot to all elements, and divides the elements into two subsets - elements preferred to the pivot, and elements the pivot was preferred over.  The algorithm then recurses into these two subsets. In this experiment, we stop the quicksort algorithm early as soon as all the subsets are inside the user-specified clusters. Continuing the algorithm further won't change the items in any cluster. This reduces the sample complexity of Quicksort.

\begin{figure}[ht]
    \centering
    %\includegraphics[width=.31\textwidth]{./Figures/complete_ranking_inverted_pairs.pdf}
    %\hspace{.2in}
    \includegraphics[width=.25\textwidth]{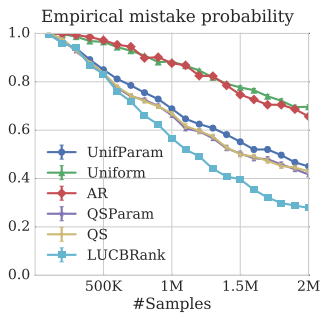}
    %\hspace{.1in} {\tiny (a)} \hspace{1.8in} {\tiny (b)} \vspace{-0.05in}
    \caption{Probability of error in identifying the clusters: $\lucbrank$
    does better than parametric versions of other active algorithms.}
    \label{fig:chicagoresults}
\vspace{-5pt}
\end{figure}
We consider the problem of clustering the images into pentiles ($\kappa_i =
20\,i,\;1\le i\le5$). We set $\delta=0.1$ for both $\lucbrank$ and AR, and
$\epsilon=0$ in $\lucbrank$ to ensure a fair comparison with AR. In
\cref{fig:chicagoresults}, we record the probability (averaged over $600$
simulations) that the empirical pentiles returned by the algorithm do not match
the true pentiles. We find that $\lucbrank$ has a lower mistake probability than
even the parametric version of Quicksort, which assumes knowledge of the BTL model. As an aside, note that when the items
are close as in this experiment, the parametric versions of Uniform and
Quicksort perform similarly, and the active nature of Quicksort offers no
significant advantage. 

%\begin{figure}[t]
%    \centering
%    \subfloat[(a)]{
%        \includegraphics[width=.32\textwidth]{./Figures/inter-cluster-inversions.pdf}
%    }\par
%    \subfloat[(b)]{
%        \includegraphics[width=.32\textwidth]{./Figures/intra-cluster-inversions.pdf}
%    }
%    %\hspace{.2in} {\tiny (a)} \hspace{1.7in} {\tiny (b)} \vspace{-0.05in}
%    \caption{$\lucbrank$ has (a) lower inter-cluster and (b) higher intra-cluster
%        pairs inverted as compared to Uniform: it trades intra-cluster
%    inversions in favor of minimizing inter-cluster inversions.}
%    %\vspace{-.1in}
%    \label{fig:chicagoresults2}
%\end{figure}
%\vspace{-10pt}
\begin{figure}[ht]
    \centering
    \includegraphics[width=.6\columnwidth]{./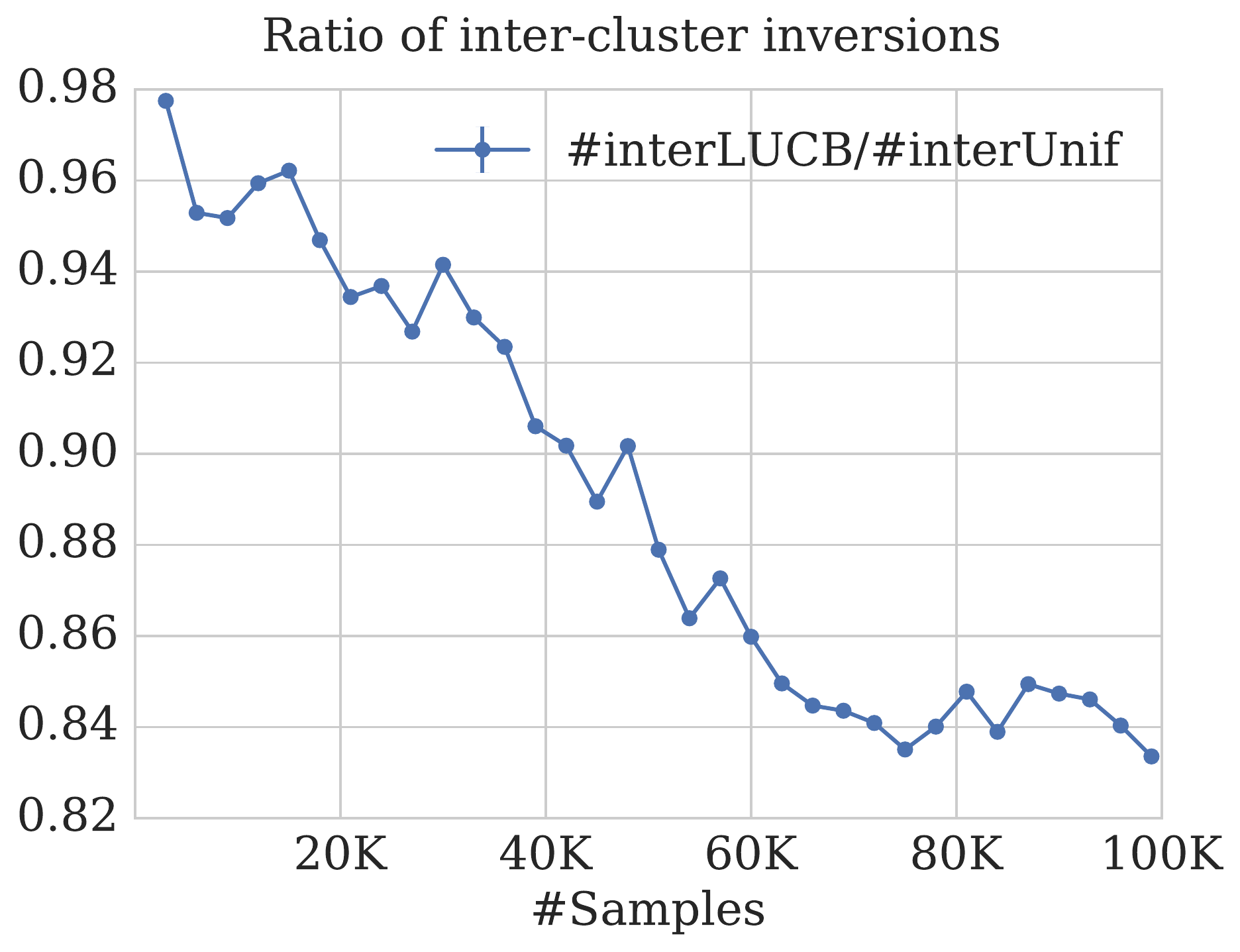}
    %\hspace{.2in}
    \includegraphics[width=.6\columnwidth]{./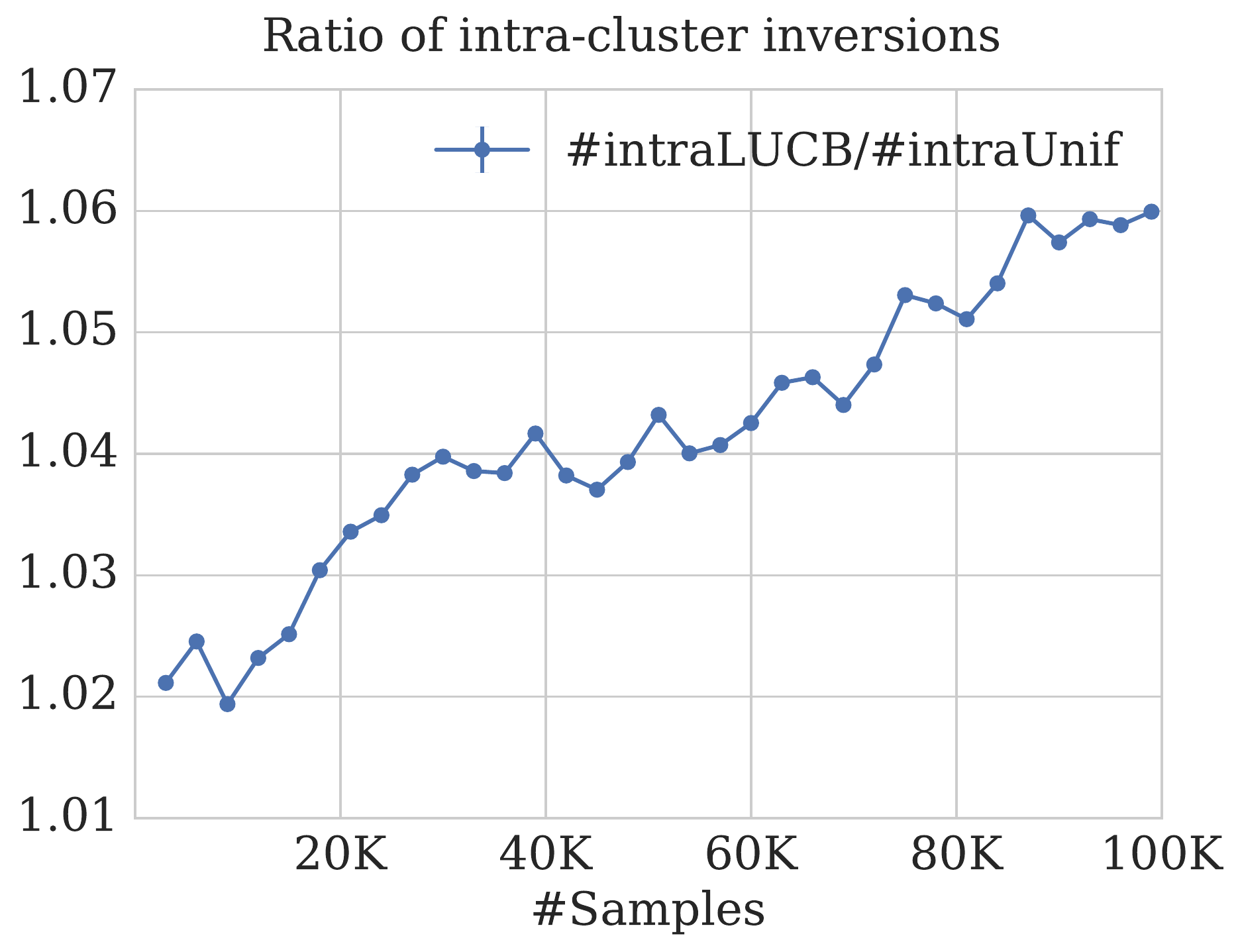}\\[-.1in]
    %\hspace{.2in} {\tiny (a)} \hspace{1.7in} {\tiny (b)} \vspace{-0.05in}
    \caption{(a) The ratio of inter-cluster inversions of $\lucbrank$ and
        Uniform. (b) The ratio of intra-cluster inversions of $\lucbrank$ and
        Uniform. $\lucbrank$ focuses on minimizing inter-cluster inversions at
    the cost of intra-cluster inversions.}
    \vspace{-7pt}
    \label{fig:chicagoresults2}
\end{figure}
In \cref{fig:chicagoresults2}(a) and (b) we plot the ratio of inter-cluster and
intra-cluster inversions respectively of $\lucbrank$ and Uniform.  An
inter-cluster pair is a pair of items that are in different clusters in the true
ranking, while an intra-cluster pair is a pair of items from the same cluster.
We see the that ratio of inter-cluster inversions goes down in
\cref{fig:chicagoresults2}(a), because that is the metric $\lucbrank$ focuses
on. $\lucbrank$ does not expend effort on refining its estimate of an item's rank once its cluster has
been found, and hence pays a price in the form of intra-cluster inversions
(\cref{fig:chicagoresults2}(b)).
%This clearly shows that $\lucbrank$ focuses its effort on getting the global
%position right, at the expense of small local errors.
%\vspace{-10pt}

\section{Conclusion}
\label{sec:conclusion}
\vspace{-10pt}
The coarse ranking setting is motivated from real-world problems where humans rate items. These problems have high noise and are hard, and a complete ranking is not feasible; fortunately, it is often also not necessary. We propose a practical online algorithm for solving it, $\lucbrank$, and prove distribution-dependent upper and lower bounds on its sample complexity. We evaluate its performance on crowdsourced data gathered using MTurk, and observe that it performs better than existing algorithms in the literature.

We leave open several questions. First, our upper bound is stated in terms of Chernoff information between distributions, while our lower bound is in terms of KL-divergences, and there is a gap between the two. Second, the cluster boundaries need to be user-specified in our current setting. If the gap between the nearest items in adjacent clusters is small, this can adversely affect the sample complexity. Although this is partially addressed through the error-tolerance $\epsilon$, an attractive algorithm would be one which auto-tunes the positions of the cluster boundaries at the widest gaps, subject to user-specified constraints.

To the best of our knowledge, this paper presents the first bandit UCB algorithm for ranking.

\vspace{-10pt}
\section*{Acknowledgements}
\vspace{-10pt}
The authors would like to thank Scott Sievert and Xiaomin Zhang for help with experiments, and Ervin Tanczos for discussions.
\bibliography{References}

\begin{thebibliography}{38}
\providecommand{\natexlab}[1]{#1}
\providecommand{\url}[1]{\texttt{#1}}
\expandafter\ifx\csname urlstyle\endcsname\relax
  \providecommand{\doi}[1]{doi: #1}\else
  \providecommand{\doi}{doi: \begingroup \urlstyle{rm}\Url}\fi

\bibitem[Agarwal et~al.(2017)Agarwal, Agarwal, Assadi, and
  Khanna]{agarwal2017learning}
A.~Agarwal, S.~Agarwal, S.~Assadi, and S.~Khanna.
\newblock Learning with limited rounds of adaptivity: Coin tossing, multi-armed
  bandits, and ranking from pairwise comparisons.
\newblock In \emph{Conference on Learning Theory}, pages 39--75, 2017.

\bibitem[Agarwal(2016)]{agarwal2016ranking}
S.~Agarwal.
\newblock On ranking and choice models.
\newblock In \emph{IJCAI}, pages 4050--4053, 2016.

\bibitem[Ailon(2012)]{ailon2012active}
N.~Ailon.
\newblock An active learning algorithm for ranking from pairwise preferences
  with an almost optimal query complexity.
\newblock \emph{Journal of Machine Learning Research}, 13\penalty0
  (Jan):\penalty0 137--164, 2012.

\bibitem[Ailon et~al.(2008)Ailon, Charikar, and Newman]{ailon2008aggregating}
N.~Ailon, M.~Charikar, and A.~Newman.
\newblock Aggregating inconsistent information: ranking and clustering.
\newblock \emph{Journal of the ACM (JACM)}, 55\penalty0 (5):\penalty0 23, 2008.

\bibitem[Alonso et~al.(2003)Alonso, Chassaing, Gillet, Janson, Reingold, and
  Schott]{alonso2003sorting}
L.~Alonso, P.~Chassaing, F.~Gillet, S.~Janson, E.~M. Reingold, and R.~Schott.
\newblock Sorting with unreliable comparisons: A probabilistic analysis.
\newblock 2003.

\bibitem[Audibert and Bubeck(2010)]{audibert2010best}
J.-Y. Audibert and S.~Bubeck.
\newblock Best arm identification in multi-armed bandits.
\newblock In \emph{COLT-23th Conference on Learning Theory-2010}, pages 13--p,
  2010.

\bibitem[Auer(2002)]{auer2002using}
P.~Auer.
\newblock Using confidence bounds for exploitation-exploration trade-offs.
\newblock \emph{Journal of Machine Learning Research}, 3\penalty0
  (Nov):\penalty0 397--422, 2002.

\bibitem[Bradley and Terry(1952)]{bradley1952rank}
R.~A. Bradley and M.~E. Terry.
\newblock Rank analysis of incomplete block designs: I. the method of paired
  comparisons.
\newblock \emph{Biometrika}, 39\penalty0 (3/4):\penalty0 324--345, 1952.

\bibitem[Braverman and Mossel(2009)]{braverman2009sorting}
M.~Braverman and E.~Mossel.
\newblock Sorting from noisy information.
\newblock \emph{arXiv preprint arXiv:0910.1191}, 2009.

\bibitem[Bubeck et~al.(2012)Bubeck, Cesa-Bianchi, et~al.]{bubeck2012regret}
S.~Bubeck, N.~Cesa-Bianchi, et~al.
\newblock Regret analysis of stochastic and nonstochastic multi-armed bandit
  problems.
\newblock \emph{Foundations and Trends{\textregistered} in Machine Learning},
  5\penalty0 (1):\penalty0 1--122, 2012.

\bibitem[Bubeck et~al.(2013)Bubeck, Wang, and Viswanathan]{bubeck2013multiple}
S.~Bubeck, T.~Wang, and N.~Viswanathan.
\newblock Multiple identifications in multi-armed bandits.
\newblock In \emph{International Conference on Machine Learning}, pages
  258--265, 2013.

\bibitem[Busa-Fekete and H{\"u}llermeier(2014)]{busa2014survey}
R.~Busa-Fekete and E.~H{\"u}llermeier.
\newblock A survey of preference-based online learning with bandit algorithms.
\newblock In \emph{International Conference on Algorithmic Learning Theory},
  pages 18--39. Springer, 2014.

\bibitem[Chen et~al.(2017)Chen, Li, and Qiao]{chen2017nearly}
L.~Chen, J.~Li, and M.~Qiao.
\newblock Nearly instance optimal sample complexity bounds for top-k arm
  selection.
\newblock In \emph{Artificial Intelligence and Statistics}, pages 101--110,
  2017.

\bibitem[Cover and Thomas(2012)]{cover2012elements}
T.~M. Cover and J.~A. Thomas.
\newblock \emph{Elements of information theory}.
\newblock John Wiley \& Sons, 2012.

\bibitem[Dubey et~al.(2016)Dubey, Naik, Parikh, Raskar, and
  Hidalgo]{dubey2016deep}
A.~Dubey, N.~Naik, D.~Parikh, R.~Raskar, and C.~A. Hidalgo.
\newblock Deep learning the city: Quantifying urban perception at a global
  scale.
\newblock In \emph{European Conference on Computer Vision}, pages 196--212.
  Springer, 2016.

\bibitem[Even-Dar et~al.(2006)Even-Dar, Mannor, and Mansour]{even2006action}
E.~Even-Dar, S.~Mannor, and Y.~Mansour.
\newblock Action elimination and stopping conditions for the multi-armed bandit
  and reinforcement learning problems.
\newblock \emph{Journal of machine learning research}, 7\penalty0
  (Jun):\penalty0 1079--1105, 2006.

\bibitem[Falahatgar et~al.(2017)Falahatgar, Orlitsky, Pichapati, and
  Suresh]{falahatgar2017maximum}
M.~Falahatgar, A.~Orlitsky, V.~Pichapati, and A.~T. Suresh.
\newblock Maximum selection and ranking under noisy comparisons.
\newblock \emph{arXiv preprint arXiv:1705.05366}, 2017.

\bibitem[Feige et~al.(1994)Feige, Raghavan, Peleg, and
  Upfal]{feige1994computing}
U.~Feige, P.~Raghavan, D.~Peleg, and E.~Upfal.
\newblock Computing with noisy information.
\newblock \emph{SIAM Journal on Computing}, 23\penalty0 (5):\penalty0
  1001--1018, 1994.

\bibitem[Garivier and Capp{\'e}(2011)]{garivier2011kl}
A.~Garivier and O.~Capp{\'e}.
\newblock The kl-ucb algorithm for bounded stochastic bandits and beyond.
\newblock In \emph{COLT}, pages 359--376, 2011.

\bibitem[Heckel et~al.(2016)Heckel, Shah, Ramchandran, and
  Wainwright]{heckel2016active}
R.~Heckel, N.~B. Shah, K.~Ramchandran, and M.~J. Wainwright.
\newblock Active ranking from pairwise comparisons and the futility of
  parametric assumptions.
\newblock \emph{arXiv preprint arXiv:1606.08842}, 2016.

\bibitem[Jamieson and Nowak(2011)]{jamieson2011active}
K.~G. Jamieson and R.~Nowak.
\newblock Active ranking using pairwise comparisons.
\newblock In \emph{Advances in Neural Information Processing Systems}, pages
  2240--2248, 2011.

\bibitem[Jamieson et~al.(2015{\natexlab{a}})Jamieson, Jain, Fernandez,
  Glattard, and Nowak]{jamieson2015next}
K.~G. Jamieson, L.~Jain, C.~Fernandez, N.~J. Glattard, and R.~Nowak.
\newblock Next: A system for real-world development, evaluation, and
  application of active learning.
\newblock In \emph{Advances in Neural Information Processing Systems}, pages
  2656--2664, 2015{\natexlab{a}}.

\bibitem[Jamieson et~al.(2015{\natexlab{b}})Jamieson, Katariya, Deshpande, and
  Nowak]{jamieson2015sparse}
K.~G. Jamieson, S.~Katariya, A.~Deshpande, and R.~D. Nowak.
\newblock Sparse dueling bandits.
\newblock In \emph{AISTATS}, 2015{\natexlab{b}}.

\bibitem[Jiang et~al.(2017)Jiang, Li, and Qiao]{jiang2017practical}
H.~Jiang, J.~Li, and M.~Qiao.
\newblock Practical algorithms for best-k identification in multi-armed
  bandits.
\newblock \emph{arXiv preprint arXiv:1705.06894}, 2017.

\bibitem[Kalyanakrishnan et~al.(2012)Kalyanakrishnan, Tewari, Auer, and
  Stone]{kalyanakrishnan2012pac}
S.~Kalyanakrishnan, A.~Tewari, P.~Auer, and P.~Stone.
\newblock Pac subset selection in stochastic multi-armed bandits.
\newblock In \emph{Proceedings of the 29th International Conference on Machine
  Learning (ICML-12)}, pages 655--662, 2012.

\bibitem[Kaufmann and Kalyanakrishnan(2013)]{kaufmann2013information}
E.~Kaufmann and S.~Kalyanakrishnan.
\newblock Information complexity in bandit subset selection.
\newblock In \emph{COLT}, pages 228--251, 2013.

\bibitem[Kaufmann et~al.(2015)Kaufmann, Capp{\'e}, and
  Garivier]{kaufmann2015complexity}
E.~Kaufmann, O.~Capp{\'e}, and A.~Garivier.
\newblock On the complexity of best arm identification in multi-armed bandit
  models.
\newblock \emph{The Journal of Machine Learning Research}, 2015.

\bibitem[Maystre and Grossglauser(2017)]{maystre2017just}
L.~Maystre and M.~Grossglauser.
\newblock Just sort it! a simple and effective approach to active preference
  learning.
\newblock In \emph{Proceedings of Machine Learning Research}, volume~70, 2017.

\bibitem[Naik et~al.(2014)Naik, Philipoom, Raskar, and
  Hidalgo]{naik2014streetscore}
N.~Naik, J.~Philipoom, R.~Raskar, and C.~Hidalgo.
\newblock Streetscore-predicting the perceived safety of one million
  streetscapes.
\newblock In \emph{Proceedings of the IEEE Conference on Computer Vision and
  Pattern Recognition Workshops}, pages 779--785, 2014.

\bibitem[Negahban et~al.(2012{\natexlab{a}})Negahban, Oh, and
  Shah]{negahban2012iterative}
S.~Negahban, S.~Oh, and D.~Shah.
\newblock Iterative ranking from pair-wise comparisons.
\newblock In \emph{Advances in Neural Information Processing Systems}, pages
  2474--2482, 2012{\natexlab{a}}.

\bibitem[Negahban et~al.(2012{\natexlab{b}})Negahban, Oh, and
  Shah]{negahban2012rank}
S.~Negahban, S.~Oh, and D.~Shah.
\newblock Rank centrality: Ranking from pair-wise comparisons.
\newblock \emph{arXiv preprint arXiv:1209.1688}, 2012{\natexlab{b}}.

\bibitem[Rajkumar and Agarwal(2014)]{rajkumar2014statistical}
A.~Rajkumar and S.~Agarwal.
\newblock A statistical convergence perspective of algorithms for rank
  aggregation from pairwise data.
\newblock In \emph{Proceedings of the 31st International Conference on Machine
  Learning}, pages 118--126, 2014.

\bibitem[Rajkumar et~al.(2015)Rajkumar, Ghoshal, Lim, and
  Agarwal]{rajkumar2015ranking}
A.~Rajkumar, S.~Ghoshal, L.-H. Lim, and S.~Agarwal.
\newblock Ranking from stochastic pairwise preferences: Recovering condorcet
  winners and tournament solution sets at the top.
\newblock In \emph{International Conference on Machine Learning}, pages
  665--673, 2015.

\bibitem[Sedgewick and Wayne(2011)]{sedgewick2011algorithms}
R.~Sedgewick and K.~Wayne.
\newblock \emph{Algorithms}.
\newblock Addison-Wesley Professional, 2011.

\bibitem[Shah et~al.(2016)Shah, Balakrishnan, Guntuboyina, and
  Wainwright]{shah2016stochastically}
N.~Shah, S.~Balakrishnan, A.~Guntuboyina, and M.~Wainwright.
\newblock Stochastically transitive models for pairwise comparisons:
  Statistical and computational issues.
\newblock In \emph{International Conference on Machine Learning}, pages 11--20,
  2016.

\bibitem[Thurstone(1927)]{thurstone1927law}
L.~L. Thurstone.
\newblock A law of comparative judgment.
\newblock \emph{Psychological review}, 34\penalty0 (4):\penalty0 273, 1927.

\bibitem[Wauthier et~al.(2013)Wauthier, Jordan, and
  Jojic]{wauthier2013efficient}
F.~Wauthier, M.~Jordan, and N.~Jojic.
\newblock Efficient ranking from pairwise comparisons.
\newblock In \emph{International Conference on Machine Learning}, pages
  109--117, 2013.

\bibitem[Wood et~al.(2017)Wood, Martin, and Niedenthal]{wood2017towards}
A.~Wood, J.~Martin, and P.~Niedenthal.
\newblock Towards a social functional account of laughter: Acoustic features
  convey reward, affiliation, and dominance.
\newblock \emph{PloS one}, 12\penalty0 (8):\penalty0 e0183811, 2017.

\end{thebibliography}
\clearpage
%!TEX root = Paper.tex

\onecolumn
\section{Appendix}
\label{sec:appendix}

%\subsection{Proof of \cref{thm:pacguarantee}}
\subsection{PAC Guarantee}
We'll use the following lemma \citep{kaufmann2013information} which bounds
the probability of `bad' events in round $t$.
\begin{lemma}
    Let $U_a(t)$ and $L_a(t)$ be the confidence bounds defined in
    \cref{eq:klupperbound}. For any algorithm and arm
    $a$, 
    \[\P(U_a(t) < p_a) = \P(L_a(t) > p_a) \le e (\beta(t,\delta) \log t + 1)
    \exp(-\beta(t,\delta))\]
    \label{lem:concentrationinequality1}
\end{lemma}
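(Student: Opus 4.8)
The plan is to prove the single inequality $\P(U_a(t) < p_a) \le e(\beta(t,\delta)\log t + 1)\exp(-\beta(t,\delta))$; the companion bound on $L_a(t)$ then follows for free from the reflection symmetry of the Bernoulli KL. Replacing each reward $X$ by $1-X$ turns $\mathcal{B}(p_a)$ into $\mathcal{B}(1-p_a)$ and, since $d(x,y)=d(1-x,1-y)$, this reflection sends $U_a(t)$ to $1-L_a(t)$. Hence $\{L_a(t) > p_a\}$ is exactly the event $\{U'_a(t) < 1-p_a\}$ for the reflected instance, which obeys the same bound; this gives $\P(L_a(t) > p_a) = \P(U_a(t) < p_a)$.

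First I would reduce the deviation event to a clean form using the monotonicity built into \eqref{eq:klupperbound}. Since $q \mapsto d(\hat p_a(t), q)$ is strictly increasing on $[\hat p_a(t),1]$ and vanishes at $q=\hat p_a(t)$, the definition of $U_a(t)$ gives
\[
\{U_a(t) < p_a\} = \{\hat p_a(t) < p_a,\ N_a(t)\, d(\hat p_a(t), p_a) > \beta(t,\delta)\}.
\]
The essential point — and the place where adaptivity must be handled — is that $N_a(t)$ is random, so a fixed-sample Chernoff bound does not apply directly. I would exploit that the rewards collected from arm $a$ form an i.i.d.\ $\mathcal{B}(p_a)$ sequence $X_{a,1},X_{a,2},\dots$ regardless of the (adaptive) sampling rule, and write $\hat p_{a,n} := \frac1n\sum_{i=1}^{n} X_{a,i}$ and $S_n := \sum_{i=1}^n X_{a,i}$. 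Because $N_a(t) \le t$ always, the event above is contained in a union that no longer references the random index:
\[
\{U_a(t) < p_a\} \subseteq \bigcup_{n=1}^{t} \{\hat p_{a,n} < p_a,\ n\, d(\hat p_{a,n}, p_a) > \beta(t,\delta)\}.
\]

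The core of the proof is a time-uniform (self-normalized) deviation bound on this union. A naive union bound over $n \in \{1,\dots,t\}$, each term controlled by the Cram\'er inequality $\P(\hat p_{a,n} < p_a,\ n\,d(\hat p_{a,n},p_a) > \gamma) \le e^{-\gamma}$, yields only $t\,e^{-\beta}$, which is too weak. Instead I would use a peeling argument: introduce the exponential martingale $M_n(\lambda) = \exp(\lambda S_n - n\phi(\lambda))$ with $\phi(\lambda) = \log\mathbb{E}[e^{\lambda X}]$ (of mean one for every $\lambda$), recall the variational identity $n\,d(\hat p_{a,n},p_a) = \sup_\lambda(\lambda S_n - n\phi(\lambda))$, and slice $\{1,\dots,t\}$ into geometric blocks $[D^j, D^{j+1})$. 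On each block I would apply Doob's maximal inequality to $M_n(\lambda_j)$ for a $\lambda_j$ tuned to that block, so each block contributes $O(e^{-\beta(t,\delta)})$. Choosing the slicing ratio $D \approx 1 + 1/\beta(t,\delta)$ makes the number of blocks $\log_D t \approx \beta(t,\delta)\log t$, while keeping the deviation threshold $d(\cdot,p_a)=\beta(t,\delta)/n$ essentially constant across a block; optimizing the remaining constants produces precisely the prefactor $e(\beta(t,\delta)\log t + 1)$.

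I expect the main obstacle to be this last step: obtaining the sharp $e(\beta(t,\delta)\log t + 1)e^{-\beta(t,\delta)}$ rather than the crude $t\,e^{-\beta(t,\delta)}$. This requires carefully balancing the block width against the per-block maximal inequality — fine enough blocks that the threshold varies negligibly within a block, yet few enough blocks that their count scales like $\beta(t,\delta)\log t$. Tracking the constants through Doob's inequality and the choice of $\lambda_j$ so that the prefactor comes out to exactly $e$ (and not a larger absolute constant) is the delicate bookkeeping; the event reduction and the i.i.d.\ reinterpretation that dissolves the adaptivity are the conceptually important but routine parts.
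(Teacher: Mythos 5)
The paper does not prove this lemma itself; it imports it verbatim from \citet{kaufmann2013information}, whose proof is precisely the self-normalized deviation bound of Garivier and Capp\'e that you outline. Your sketch --- reducing $\{U_a(t)<p_a\}$ to $\{\hat p_a(t)<p_a,\ N_a(t)\,d(\hat p_a(t),p_a)>\beta(t,\delta)\}$, dissolving the random $N_a(t)$ via the i.i.d.\ reward stack and a union over $n\le t$, then peeling over geometric blocks of ratio $\approx 1+1/\beta(t,\delta)$ with Doob's maximal inequality applied to a tuned exponential martingale on each block --- is correct and is essentially the same argument as in that cited source.
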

We shall also need the following technical lemma, which we'll use to upper bound
the probability of any bad event.
\begin{lemma}
    If $\beta(t,\delta) = \log(\tfrac{k_1 K t^\alpha}{\delta}) + \log \log
    (\tfrac{k_1 K t^\alpha}{\delta})$, 
    \[\sum\limits_{t=1}^{\infty} (\beta(t,\delta) \log t + 1)
    \exp(-\beta(t,\delta)) \le \frac{\delta}{k_1 K} \left( \frac{2}{(\alpha-1)^2} +
\frac{1}{(\alpha-1)} \right)\]
    \label{lem:sumoferrorterms}
\end{lemma}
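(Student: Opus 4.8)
The plan is to exploit the exact identity that this exploration rate produces. Writing $x := \tfrac{k_1 K t^\alpha}{\delta}$, the choice $\beta(t,\delta) = \log x + \log\log x$ gives $\exp(-\beta(t,\delta)) = \tfrac{1}{x\log x}$, while $\tfrac1x = \tfrac{\delta}{k_1 K}\,t^{-\alpha}$. Substituting these into the summand and expanding $\beta(t,\delta)\log t + 1$ term by term yields the decomposition
\[
(\beta(t,\delta)\log t + 1)\exp(-\beta(t,\delta)) = \frac{\log t}{x} + \frac{\log\log x}{\log x}\cdot\frac{\log t}{x} + \frac{1}{x\log x}.
\]
First I would record that throughout the relevant parameter range $x \ge e$, so that $\log x \ge 1$, all three pieces are nonnegative, and the identity above is valid.

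Next I would bound the three pieces separately. For the middle (cross) term I would use the elementary inequality $\tfrac{\log u}{u}\le\tfrac1e$ for all $u>0$ applied to $u=\log x$, giving $\tfrac{\log\log x}{\log x}\le\tfrac1e$; hence the first two pieces together are at most $(1+\tfrac1e)\tfrac{\log t}{x} = (1+\tfrac1e)\tfrac{\delta}{k_1K}\tfrac{\log t}{t^\alpha}$. Summing over $t$ and comparing with the integral $\int_1^\infty t^{-\alpha}\log t\,dt = \tfrac{1}{(\alpha-1)^2}$ (where the $t=1$ term vanishes since $\log 1 = 0$) controls these two pieces by $(1+\tfrac1e)\tfrac{\delta}{k_1K(\alpha-1)^2}$, and since $1+\tfrac1e < 2$ this is at most $\tfrac{2\delta}{k_1K(\alpha-1)^2}$, the first term of the claimed bound. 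For the last piece I would use $\tfrac{1}{x\log x}\le\tfrac{\delta}{k_1K}\,t^{-\alpha}$ (valid because $\log x\ge1$), separate out the negligible $t=1$ term, and bound $\sum_{t\ge2}t^{-\alpha}\le\int_1^\infty t^{-\alpha}\,dt = \tfrac{1}{\alpha-1}$, producing the second term $\tfrac{\delta}{k_1K(\alpha-1)}$. Adding the two contributions gives exactly the stated right-hand side.

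The main obstacle is the constant bookkeeping rather than any conceptual difficulty. The function $t\mapsto t^{-\alpha}\log t$ is not monotone on $[1,\infty)$: it increases up to $t=e^{1/\alpha}<e$ before decreasing, so the comparison $\sum_{t\ge1}t^{-\alpha}\log t\le\tfrac1{(\alpha-1)^2}$ is not a one-line decreasing-function estimate and must be justified using unimodality (splitting off the $O(1)$ initial terms near the peak and comparing the decreasing tail to the integral). Similarly, the stray $t=1$ contribution of the last piece, namely $\tfrac{\delta}{k_1K\log(k_1K/\delta)}$, is dropped when I bound $\sum_{t\ge2}t^{-\alpha}$ and must be reabsorbed; I would charge it against the slack left over from replacing $1+\tfrac1e$ by $2$ in the first term, which is ample since the two sides of the claimed inequality differ by a comfortable factor. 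The only genuine care needed is thus to make each sum-to-integral comparison rigorous and to verify that $\log(k_1K/\delta)$ is large enough for the $\log x\ge1$ steps, which is guaranteed by the lower bound on $k_1$ together with $K\ge2$ and $\delta\le1$.
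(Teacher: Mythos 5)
Your proposal follows essentially the same route as the paper's own proof: the identity $e^{-\beta(t,\delta)} = \frac{\delta}{k_1 K t^\alpha}\cdot\frac{1}{\log(k_1 K t^\alpha/\delta)}$, a pointwise reduction of the summand to a constant times $\frac{\delta \log t}{k_1 K t^\alpha}$ plus $\frac{\delta}{k_1 K t^\alpha}$, and a sum-to-integral comparison yielding $\frac{1}{(\alpha-1)^2}$ and $\frac{1}{\alpha-1}$. The only differences are cosmetic and in your favor: you use $\frac{\log\log x}{\log x}\le\frac{1}{e}$ where the paper uses the cruder $\log\log x\le\log x$ (giving $1+\frac{1}{e}$ in place of $2$), and you explicitly confront the $t=1$ term and the non-monotonicity of $t^{-\alpha}\log t$ — issues the paper silently elides by asserting $\sum_{t\ge 1}t^{-\alpha}\le\frac{1}{\alpha-1}$, which as written ignores the $t=1$ contribution exactly as you note.
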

\begin{proof}
    Let us consider 
    \begin{align*}
        \beta(t,\delta) (\log t) e^{-\beta(t,\delta)} &= \left( \log \left(\tfrac{k_1
        K t^\alpha}{\delta} \right) + \log \log \left(\tfrac{k_1 K
        t^\alpha}{\delta} \right)
    \right) (\log t) \left( \frac{\delta}{k_1 K t^\alpha} \cdot \frac{1}{\log
    \frac{k_1 K t^\alpha}{\delta}} \right) \\
    &\le 2 \log \left( \frac{k_1 K t^\alpha}{\delta} \right) \cdot \log t \cdot
    \left( \frac{\delta}{k_1 K t^\alpha} \cdot \frac{1}{\log \frac{k_1 K
    t^\alpha}{\delta}} \right) \\
    &= 2 \log t \cdot \frac{\delta}{k_1 K t^\alpha}
    \end{align*}
    Hence
    \begin{align*}
    \sum\limits_{t=1}^{\infty} (\beta(t,\delta) \log t + 1)
    \exp(-\beta(t,\delta)) &\le \sum\limits_{t=1}^{\infty} \left( 2 \log t
    \cdot \frac{\delta}{k_1 K t^\alpha} + \frac{\delta}{k_1 K t^\alpha} \right)
    \\
     &\le \frac{\delta}{k_1 K} \left( \frac{2}{(\alpha-1)^2} +
    \frac{1}{(\alpha-1)} \right)
    \end{align*}
\end{proof}

\subsubsection{Proof of \cref{thm:pacguarantee}}
\begin{theorem*}
    $\lucbrank$ using $\beta(t,\delta) = \log(\tfrac{k_1 K t^\alpha}{\delta}) +
    \log \log (\tfrac{k_1 K t^\alpha}{\delta})$ with $\alpha > 1$ and $k_1 > 1 +
    \tfrac{2e}{\alpha-1} + \tfrac{4e}{(\alpha-1)^2}$, is correct with
    probability $1-\delta$.
\end{theorem*}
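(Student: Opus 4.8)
The plan is to organize the proof around a single \emph{good event} $\mathcal{E}$ on which every $\klucb$ confidence interval brackets its arm's true mean, and then show that (i) $\mathcal{E}$ holds with probability at least $1-\delta$ under the prescribed $\beta(t,\delta)$, and (ii) on $\mathcal{E}$ the clustering returned at termination satisfies $\hat M_i \subseteq M^\ast_{i,\epsilon}$ for every $i$, which is exactly \eqref{eq:meaningofdeltapac}. Concretely, I set $\mathcal{E} := \{\,L_a(t) \le p_a \le U_a(t)\text{ for all }a\in[K],\,t\ge 1\,\}$, i.e.\ the intersection of the complements of the ``bad'' events controlled by \cref{lem:concentrationinequality1}, with $L_a,U_a$ the bounds of \eqref{eq:klupperbound}.

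First I would bound $\P(\mathcal{E}^c)$. A union bound over the $K$ arms and over all rounds $t\ge 1$, applied to both the upper and lower deviations of \cref{lem:concentrationinequality1}, gives
\[
\P(\mathcal{E}^c) \le 2 e K \sum_{t=1}^{\infty} \bigl(\beta(t,\delta)\log t + 1\bigr)\exp(-\beta(t,\delta)).
\]
Substituting the chosen $\beta(t,\delta)$ and invoking \cref{lem:sumoferrorterms} collapses the series and the factor $K$ cancels, yielding $\P(\mathcal{E}^c) \le \tfrac{\delta}{k_1}\bigl(\tfrac{4e}{(\alpha-1)^2}+\tfrac{2e}{\alpha-1}\bigr)$; here $\alpha>1$ is precisely what makes the series summable. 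The stated lower bound on $k_1$ then forces $\P(\mathcal{E}^c)\le\delta$. The surplus beyond $\tfrac{2e}{\alpha-1}+\tfrac{4e}{(\alpha-1)^2}$ (the additive $1$ in this restatement, or $(\tfrac{c-1}{2})^\alpha$ in the main-text version) supplies slack and absorbs the bookkeeping between the round index $t$ and the up-to-$2(c-1)$ samples drawn per round.

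Next I would establish correctness on $\mathcal{E}$ one boundary at a time. Fix a boundary $i$ and let $t_i$ be the round at which the stopping test removes $i$ from $C$, so $U_{u^i_{t_i}}(t_i)-L_{l^i_{t_i}}(t_i)<\epsilon$ with $l^i,u^i$ as in \eqref{eq:criticalarmsatboundaryi}. Write $L^\ast := L_{l^i_{t_i}}(t_i)$ and $U^\ast := U_{u^i_{t_i}}(t_i)$. On $\mathcal{E}$, every $a\in J_i(t_i)$ has $p_a\ge L_a(t_i)\ge L^\ast$ and every $a\notin J_i(t_i)$ has $p_a\le U_a(t_i)\le U^\ast<L^\ast+\epsilon$. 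Comparing the size-$\kappa_i$ set $J_i(t_i)$ with the true top set $\{1,\dots,\kappa_i\}$: if they differ there is an ``intruder'' $a^{+}\in J_i(t_i)$ with $p_{a^{+}}\le p_{\kappa_i+1}$ and an ``exile'' $a^{-}\notin J_i(t_i)$ with $p_{a^{-}}\ge p_{\kappa_i}$; the exile forces $L^\ast>p_{\kappa_i}-\epsilon$ and the intruder forces $U^\ast<p_{\kappa_i+1}+\epsilon$ (and both hold trivially when the sets agree). Hence on $\mathcal{E}$ every arm above boundary $i$ has mean $\ge p_{\kappa_i}-\epsilon$ and every arm below has mean $\le p_{\kappa_i+1}+\epsilon$. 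Reading this simultaneously at boundaries $i-1$ and $i$ shows each $a\in\hat M_i$ obeys $p_{\kappa_i}-\epsilon\le p_a\le p_{\kappa_{i-1}+1}+\epsilon$, i.e.\ $a\in M^\ast_{i,\epsilon}$.

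The main obstacle is bridging each boundary's \emph{resolution round} $t_i$ and the \emph{termination round} $T\ge t_i$: the sandwiching above certifies $J_i(t_i)$, whereas the algorithm outputs the ranking at $T$, and because $N_a$ is frozen for arms that are no longer critical while $\beta(t,\delta)$ keeps growing, the confidence widths of such arms expand, so the stopping test cannot simply be re-applied at $T$. The structural fact I would exploit is self-correction: on $\mathcal{E}$, any arm whose true mean lies strictly (by more than $\epsilon$) on one side of boundary $i$ but is empirically mis-ranked must have an extreme upper or lower bound, hence be the critical arm $u^i_t$ or $l^i_t$, and so is sampled every round until $i$ is resolved---and the resolution test then forces it to the correct side, exactly as in the argument above. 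What remains, and is the crux, is to show that after $i$ is removed the continued sampling of the other, positionally distant, boundaries cannot push a strictly-separated arm back across position $\kappa_i$ in the final empirical order; I would argue this by noting that those far boundaries only sample arms whose means sit well away from $p_{\kappa_i}$, so on $\mathcal{E}$ the resolved split is preserved up to round $T$. Making this persistence precise, rather than the per-boundary sandwiching, is where the real work lies.
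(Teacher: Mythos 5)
Your overall architecture is the paper's own: your good event $\mathcal{E}$ is exactly the event $W$ in the paper's proof, and your part (i) --- union bound over the $K$ arms and all rounds, \cref{lem:concentrationinequality1}, \cref{lem:sumoferrorterms}, and the constraint on $k_1$ --- reproduces the paper's chain $\P(W^c)\le 2eK\sum_t(\beta\log t+1)e^{-\beta}\le \tfrac{\delta}{k_1}\bigl(\tfrac{4e}{(\alpha-1)^2}+\tfrac{2e}{\alpha-1}\bigr)\le\delta$ essentially verbatim. (One small correction: the surplus in $k_1$ has nothing to do with samples-per-round bookkeeping in this theorem; the additive $1$ here, or $(\tfrac{c-1}{2})^\alpha$ in the main text, is headroom that is only consumed later in \cref{thm:samplecomplexity}.) Your part (ii) is the direct reading of the paper's contrapositive: the paper assumes a violating arm $a$ sits in returned cluster $i$, extracts a partner $b$ on the wrong side of boundary $i-1$ (or $i$), and notes that $U_a(t)-L_b(t)<\epsilon$ at that boundary's deactivation forces $U_a(t)<p_a$ or $L_b(t)>p_b$, hence $W^c$; your intruder/exile sandwich at the resolution round $t_i$ is the same pair of inequalities run forwards.

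The step you leave open --- persistence of the resolved split from the deactivation round $t_i$ to the termination round $T$ at which the empirical sort is actually taken --- means your submission does not, as written, prove the theorem: you explicitly defer ``the real work.'' You should know, though, that this is precisely the step the paper dispatches in a single sentence. It takes $a$ and $b$ from their positions in the \emph{final} ranking and asserts that $U_a(t)-L_b(t)<\epsilon$ held ``upon stopping,'' which tacitly identifies membership in $J_{i-1}$ at the round boundary $i-1$ was deactivated with membership at termination; if the empirical order of $a$ and $b$ flips between those two rounds (possible only if one of them is later sampled as a critical arm of another still-active boundary), the asserted inequality no longer follows from \eqref{eq:criticalarmsatboundaryi}. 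So you have not missed a lemma or technique that the paper supplies --- the shortest route to matching the paper is to adopt its contrapositive phrasing together with that identification --- but an honest completion of either your proof or the paper's requires showing that, on $W$, no $\epsilon$-violating pair can be inverted at $T$ given that every boundary's overlap test passed at its own deactivation time. Your ``self-correction'' remark is the right instinct for that, and the handle is the one you name: an arm's empirical mean changes only in rounds where it is a critical arm of an active boundary.
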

\begin{proof}
    Consider the event 
    \[W = \bigcap\limits_{t \in \mathbb{N}}^{} \bigcap\limits_{a \in
    \{1,\dots,K\}}^{} \left( (U_a(t) > p_a) \cap (L_a(t) < p_a) \right)\]
    where all arms are well-behaved i.e. their true means are
    inside their confidence intervals. We show that $\lucbrank$ is correct on
    the event $W$.

    Assume $\lucbrank$ fails, which means that when it terminates, there exists
    a cluster $i$, such that arm $a$ belongs to cluster $i$ in the returned ranking,
    and $a \in M^{\ast,c}_{\epsilon,i}$; that is, either 1) $p_a >
    p_{\kappa_{i-1}+1}+\epsilon$ or 2) $p_a < p_{\kappa_i}-\epsilon$. 
    
    Consider the first case: $p_a > p_{\kappa_{i-1}+1}+\epsilon$. Consequently,
    there exists arm $b$ such that $p_b \le p_{\kappa_{i-1}+1}$, and $\tau(b)
    \le \kappa_{i-1}$ in the returned ranking. Since the algorithm stopped and
    boundary $i-1$ was removed from the set of active boundaries $C$, it must be the
    case that $U_a(t)-L_b(t) < \epsilon$ upon stopping. Hence, the following holds:
    \begin{align*}
        &\bigcup\limits_{t \in \mathbb{N}}^{} (\exists\, a,b: p_a >
        p_{\kappa_{i-1}+1}+\epsilon, p_b \le p_{\kappa_{i-1}+1}, 
        U_a(t)-L_b(t) < \epsilon) \\
        \subseteq & \bigcup\limits_{t \in \mathbb{N}}^{} (\exists\,a,b:
        (U_a(t) < p_b+\epsilon < p_a) \cup (L_b(t) > p_b)) \\
        \subseteq &\bigcup\limits_{t \in \mathbb{N}}^{}
        \bigcup\limits_{a \in \{1,\dots,K\}}^{} (U_a(t) < p_a)
        \bigcup\limits_{b \in \{1,\dots,K\}}^{}(L_b(t) > p_b)
            \subseteq W^c 
    \end{align*}

    Consider the second case: $p_a < p_{\kappa_i}-\epsilon$. Consequently, there
    exists an arm $b$ such that $p_b \ge p_{\kappa_i}$, and $\tau(b) > \kappa_i$
    in the returned ranking. Since the algorithm stopped and boundary $i$ was
    removed from the set of active boundaries $C$, it must be the case that
    $U_b(t)-L_a(t) < \epsilon$ upon stopping. Hence, the following holds:
    \begin{align*}
        &\bigcup\limits_{t \in \mathbb{N}}^{} (\exists\, a,b: p_a <
        p_{\kappa_i}-\epsilon, p_b \ge p_{\kappa_i}, 
        U_b(t)-L_a(t) < \epsilon) \\
        \subseteq & \bigcup\limits_{t \in \mathbb{N}}^{} (\exists\,a,b:
        (U_b(t) < p_b) \cup (L_a(t) > p_b - \epsilon > p_a)) \\
        \subseteq &\bigcup\limits_{t \in \mathbb{N}}^{}
        \bigcup\limits_{b \in \{1,\dots,K\}}^{} (U_b(t) < p_b)
        \bigcup\limits_{a \in \{1,\dots,K\}}^{}(L_a(t) > p_a)
            \subseteq W^c 
    \end{align*}
 
    Hence
    %From \cref{lem:concentrationinequality1}, 
    \begin{alignat*}{2}
        \P(\lucbrank \text{  fails}) &\le \P(W^c) \quad & \nonumber\\
        &\le 2eK\sum\limits_{t=1}^{\infty} (\beta(t,\delta) \log t + 1)
        \exp(-\beta(t,\delta)) \quad &(\text{by
        \cref{lem:concentrationinequality1}}) \\
        &\le \frac{\delta}{k_1} \left( \frac{4e}{(\alpha-1)^2} +
        \frac{2e}{(\alpha-1)} \right)  &(\text{by \cref{lem:sumoferrorterms}})
        \\
        &\le \delta &(\text{by the constraint on }k_1)
        %\le \delta
    %\label{eq:boundonWc}
    \end{alignat*}
    %\end{equation}
\end{proof}
%
%
%\subsection{Proof of \cref{thm:samplecomplexity}}
\subsection{Sample Complexity}
We define the event $W_t$ which says that all arms are well-behaved in round $t$
i.e.  their true means are contained inside their confidence intervals.
\[W_t = \bigcap\limits_{a\in \{1,2,\dots,K\}} \left( (U_a(t) > p_a) \cap
(L_a(t) < p_a) \right)\]
Note that the event $W$ defined earlier is $W = \cup_{t\in\mathbb{N}} W_t$.

\cref{prop:prop1} gives a sufficient condition for stopping.
\begin{prop}
    Let $b_i \in [p_{\kappa_i}, p_{\kappa_i+1}]$. If $U_{u_t^i}-L_{l_t^i} > \epsilon$ and $W_t$ holds, then either
    $k=l_t^i$ or $k=u_t^i$ satisfies
    \[b_i \in \mathcal{I}_k(t) \text{ and }\tilde{\beta}_k(t) >
    \frac{\epsilon}{2},\]
    where we define $\tilde{\beta}_a(t) =
    \sqrt{\frac{\beta(t,\delta)}{2N_a(t)}}$
    \label{prop:prop1}
\end{prop}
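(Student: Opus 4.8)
The plan is to combine two purely structural facts about the critical arms with the not-yet-stopped hypothesis $U_{u_t^i}(t)-L_{l_t^i}(t)>\epsilon$. Throughout I suppress the round index and write $\beta$ for $\beta(t,\delta)$, and I use that the $\klucb$ interval is exactly $\mathcal{I}_a(t)=\{q:N_a(t)\,d(\hat{p}_a(t),q)\le\beta\}$, so that $b_i\in\mathcal{I}_a(t)\iff d(\hat{p}_a(t),b_i)\le\beta/N_a(t)$ and $d(\hat{p}_a(t),U_a(t))=d(\hat{p}_a(t),L_a(t))=\beta/N_a(t)$; with Pinsker's inequality $d(x,y)\ge 2(x-y)^2$ this yields $U_a(t)-\hat{p}_a(t)\le\tilde{\beta}_a(t)$ and $\hat{p}_a(t)-L_a(t)\le\tilde{\beta}_a(t)$. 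I also record the empirical ordering $\hat{p}_{l_t^i}\ge\hat{p}_{u_t^i}$, which holds because $l_t^i\in J_i(t)$ and $u_t^i\notin J_i(t)$.

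First I would prove the containment $b_i\in\mathcal{I}_{l_t^i}(t)\cup\mathcal{I}_{u_t^i}(t)$. Using $W_t$ I would show $\max(U_{l_t^i},U_{u_t^i})>b_i$: either some arm of $\{1,\dots,\kappa_i\}$ lies in $J_i^c(t)$, whence $U_{u_t^i}\ge U_a>p_a\ge p_{\kappa_i}\ge b_i$, or $J_i(t)=\{1,\dots,\kappa_i\}$ exactly, whence $l_t^i\le\kappa_i$ and $U_{l_t^i}>p_{l_t^i}\ge p_{\kappa_i}\ge b_i$. Symmetrically, using $\{\kappa_i+1,\dots,K\}$ and $b_i\ge p_{\kappa_i+1}$, I get $\min(L_{l_t^i},L_{u_t^i})<b_i$. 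If $b_i$ were outside both intervals, then for each arm $b_i$ is above its $U$ or below its $L$; the two same-side configurations contradict the two facts just displayed, while the crossed configurations contradict $U_{l_t^i}\ge\hat{p}_{l_t^i}\ge\hat{p}_{u_t^i}\ge L_{u_t^i}$ and the stopping hypothesis $U_{u_t^i}>L_{l_t^i}$ respectively. Hence $b_i$ lies in at least one interval.

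Next I would use the width identity $(\hat{p}_{l_t^i}-L_{l_t^i})+(U_{u_t^i}-\hat{p}_{u_t^i})=(U_{u_t^i}-L_{l_t^i})+(\hat{p}_{l_t^i}-\hat{p}_{u_t^i})>\epsilon$, so one summand exceeds $\epsilon/2$; by symmetry assume $\hat{p}_{l_t^i}-L_{l_t^i}>\epsilon/2$, giving $\tilde{\beta}_{l_t^i}>\epsilon/2$ and, by Pinsker, $\beta/N_{l_t^i}=d(\hat{p}_{l_t^i},L_{l_t^i})>\epsilon^2/2$. If $b_i\in\mathcal{I}_{l_t^i}$ I take $k=l_t^i$. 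Otherwise containment forces $b_i\in\mathcal{I}_{u_t^i}$, and I split on how $b_i$ misses $\mathcal{I}_{l_t^i}$. If $b_i<L_{l_t^i}$ then $L_{u_t^i}\le b_i<L_{l_t^i}$, so $U_{u_t^i}-L_{u_t^i}>U_{u_t^i}-L_{l_t^i}>\epsilon$ and thus $\tilde{\beta}_{u_t^i}>\epsilon/2$. If instead $b_i>U_{l_t^i}$ then $U_{u_t^i}\ge b_i>U_{l_t^i}\ge\hat{p}_{l_t^i}\ge\hat{p}_{u_t^i}$, and monotonicity of the Bernoulli divergence in each argument gives $d(\hat{p}_{u_t^i},U_{u_t^i})\ge d(\hat{p}_{u_t^i},U_{l_t^i})\ge d(\hat{p}_{l_t^i},U_{l_t^i})=\beta/N_{l_t^i}>\epsilon^2/2$, which is exactly $\tilde{\beta}_{u_t^i}>\epsilon/2$. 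In both branches $k=u_t^i$ works, and the case $U_{u_t^i}-\hat{p}_{u_t^i}>\epsilon/2$ is handled by mirroring these arguments with $L$'s in place of $U$'s.

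I expect the main obstacle to be exactly this final pairing: the stopping hypothesis bounds the cross-width $U_{u_t^i}-L_{l_t^i}$, which mixes the two arms, whereas the conclusion asks a \emph{single} arm to both contain $b_i$ and have $\tilde{\beta}>\epsilon/2$. When the wide arm is not the one containing $b_i$, no direct bound applies, and because $\klucb$ intervals are asymmetric a large Hoeffding half-width cannot simply be read off the KL threshold. Transferring the width to the containing arm is what needs the monotonicity chain for $d(\cdot,\cdot)$ together with $\hat{p}_{u_t^i}\le\hat{p}_{l_t^i}$; getting the direction of each monotonicity step correct, and separately checking the degenerate case $U_a=1$ (which cannot occur in the branch $b_i>U_{l_t^i}$ since then $U_{l_t^i}<b_i\le 1$), is the delicate part.
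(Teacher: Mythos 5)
Your argument is correct. Both structural claims check out: the containment $b_i\in\mathcal{I}_{l_t^i}(t)\cup\mathcal{I}_{u_t^i}(t)$ follows from your max-$U$/min-$L$ observations on the event $W_t$ together with the empirical ordering $\hat{p}_{l_t^i}\ge\hat{p}_{u_t^i}$ and the hypothesis $U_{u_t^i}-L_{l_t^i}>\epsilon$ (the four ``miss'' configurations are each refuted), and the width-transfer step is sound because $d(x,\cdot)$ is increasing away from $x$ and $d(\cdot,y)$ is increasing away from $y$, while the identity $d(\hat{p}_a,U_a)=\beta/N_a$ is only invoked where the case hypothesis rules out the boundary value $U_a=1$ (resp.\ $L_a=0$); elsewhere you only need the one-sided inequality $\beta/N_a\ge d(\hat{p}_a,\cdot)$, which always holds. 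Note, however, that the paper does not actually prove this proposition: it simply asserts that the event $W_t$ is stronger than the condition in Proposition~1 of \citet{kaufmann2013information} and defers to their proof, which proceeds by a four-way case analysis on whether $l_t^i$ and $u_t^i$ belong to the true top-$\kappa_i$ set. Your decomposition --- first a containment lemma, then splitting the overlap $U_{u_t^i}-L_{l_t^i}$ into the two half-widths $\hat{p}_{l_t^i}-L_{l_t^i}$ and $U_{u_t^i}-\hat{p}_{u_t^i}$ and transferring the large one to whichever arm contains $b_i$ --- is organized differently and has the merit of being self-contained, of isolating exactly where the asymmetry of the \klucb{} intervals matters, and of verifying that the cited top-$m$ argument really does apply boundary-by-boundary in the coarse-ranking setting.
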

\begin{proof}
Our $W_t$ condition is stronger than that required in the Proposition 1 in
\citet{kaufmann2013information}, and hence their proof applies.
\end{proof}
\cref{lem:concentrationinequality2} is another concentration result that will
be used in our sample complexity guarantee.
\begin{lemma}
    Let $T \ge 1$ be an integer, and $1 \le i \le (c-1)$ be any cluster
    boundary. Let $\delta>0, \gamma>0$ and $x \in ]0,1[$ be such that $p_a \neq x$.
    Then
    \[\sum\limits_{t=1}^{T} \P\left( a=u_i^t \lor a=l_i^t, 
        N_a(t) > \left\lceil \frac{\gamma}{\chernoff{p_a,x}} \right\rceil, 
        N_a(t) d(\hat{p}_a(t), x) \le \gamma \right) \le
    \frac{\exp(-\gamma)}{\chernoff{p_a, x}} \]
    \label{lem:concentrationinequality2}
\end{lemma}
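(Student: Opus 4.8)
The plan is to turn the sum of probabilities into the expectation of a counting random variable and then re-index that count by the number of samples drawn from arm $a$, which decouples the estimate from the adaptive sampling rule. Let $\mathcal{E}_t$ denote the event in round $t$ appearing in the statement, and write the left-hand side as $\EE{\sum_{t=1}^T \I{\mathcal{E}_t}}$. First I would observe that on the event $\{a = u_i^t \lor a = l_i^t\}$, arm $a$ is one of the two arms that $\lucbrank$ samples at boundary $i$ in round $t$, so $N_a$ strictly increases in that round. Hence, letting $\mathcal{T}_a$ be the (random) set of rounds $t \le T$ at which $a$ is critical at boundary $i$, the map $t \mapsto N_a(t)$ is strictly increasing, and therefore injective, on $\mathcal{T}_a$; moreover, whenever $N_a(t) = s$ the empirical mean $\hat{p}_a(t)$ equals the average $\hat{p}_{a,s}$ of the first $s$ i.i.d.\ draws of arm $a$. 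Using this injectivity to bound the indicator sum over $t$ by a sum over distinct sample counts $s = N_a(t) > n_0 := \lceil \gamma / \chernoff{p_a,x} \rceil$, I would obtain
\[
\sum_{t=1}^{T} \P(\mathcal{E}_t) \le \sum_{s > n_0} \P\!\left( s\, d(\hat{p}_{a,s}, x) \le \gamma \right).
\]

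Next I would bound each term by a one-sided Chernoff estimate. For $s > n_0$ we have $\gamma/s < \chernoff{p_a,x} = d(z^\ast, x) = d(z^\ast, p_a)$, where $z^\ast$ is the Chernoff point lying strictly between $x$ and $p_a$. Since $q \mapsto d(q,x)$ increases as $q$ moves away from $x$, the condition $d(\hat{p}_{a,s}, x) \le \gamma/s$ confines $\hat{p}_{a,s}$ to a sublevel interval around $x$ that does not reach $z^\ast$; hence $\hat{p}_{a,s}$ lies on the side of $z^\ast$ opposite to $p_a$. Applying the Bernoulli Chernoff bound $\P(\hat{p}_{a,s} \le z^\ast) \le \exp(-s\, d(z^\ast, p_a))$ (or its upper-tail analogue when $p_a < x$), and using $d(z^\ast, p_a) = \chernoff{p_a,x}$, gives
\[
\P\!\left( s\, d(\hat{p}_{a,s}, x) \le \gamma \right) \le \exp\!\left( -s\, \chernoff{p_a,x} \right).
\]

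Finally I would sum the resulting geometric series, using $n_0\, \chernoff{p_a,x} \ge \gamma$ and the elementary inequality $e^{z} - 1 \ge z$:
\[
\sum_{s > n_0} \exp\!\left( -s\, \chernoff{p_a,x} \right) = \frac{\exp\!\left( -(n_0+1)\, \chernoff{p_a,x} \right)}{1 - \exp\!\left( -\chernoff{p_a,x} \right)} \le \frac{\exp(-\gamma)}{\chernoff{p_a,x}},
\]
which is the claimed bound. The main obstacle is the first step: I must argue carefully that being a critical arm forces a fresh pull, so that the count re-indexes cleanly by $s$ with each value appearing at most once, and that conditioning on a \emph{fixed} sample index $s$ makes $\hat{p}_{a,s}$ the average of $s$ genuinely i.i.d.\ draws despite the data-dependent sampling rule. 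It is precisely this reduction that licenses the fixed-$s$ Chernoff bound; the large-deviation step and the geometric summation are then routine.
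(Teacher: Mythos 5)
Your proof is correct. The paper does not actually prove this lemma (it is imported from the LUCB analysis of Kaufmann and Kalyanakrishnan, 2013, which the appendix cites for the neighboring results), and your argument --- re-indexing the sum by the sample count $s=N_a(t)$ using the fact that a critical arm is pulled in that round together with the reward-stack coupling, then applying the fixed-$s$ Chernoff bound through the point $z^\ast$ with $d(z^\ast,p_a)=\chernoff{p_a,x}$, and finally summing the geometric series via $e^{z}-1\ge z$ --- is exactly the standard proof that the paper implicitly relies on.
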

We prove the following lemma, which states that the Chernoff information
increases as the second distribution moves away from the first.
%Another characterization of the Chernoff information is:
%\[\chernoff{x,y} = \inf\limits_{z^\ast \in [x,y]} \max (d(z^\ast,x),
%d(z^\ast,y))\]
%We shall use this version to prove the following property:
\begin{lemma}
    If $x < y < y'$ or $x > y > y'$, $\chernoff{x,y} \le \chernoff{x,y'}$
    \label{lem:chernoffincreasing}
\end{lemma}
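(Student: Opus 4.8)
The plan is to reduce the claim to a monotonicity statement about the ``Chernoff point'' $z^\ast$. Recall that for $x<y$ we have $\chernoff{x,y}=d(z^\ast,x)$, where $z^\ast=z^\ast(x,y)$ is the unique solution of $d(z,x)=d(z,y)$. I would first treat the case $x<y<y'$ and then obtain $x>y>y'$ for free: since $d(1-a,1-b)=d(a,b)$ and the defining equation for $z^\ast$ is preserved under $z\mapsto 1-z$, one has $\chernoff{x,y}=\chernoff{1-x,1-y}$, and the substitution $(x,y,y')\mapsto(1-x,1-y,1-y')$ turns $x>y>y'$ into $1-x<1-y<1-y'$, so the second case follows from the first.

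For the main case the key quantity is $f_y(z):=d(z,x)-d(z,y)$. A short computation gives $\tfrac{\partial}{\partial z}f_y(z)=\log\tfrac{y(1-x)}{x(1-y)}$, which is \emph{constant in} $z$ and strictly positive whenever $x<y$; hence $f_y$ is affine and strictly increasing in $z$. This immediately yields uniqueness of the root $z^\ast(x,y)$, and since $f_y(x)=-d(x,y)<0$ while $f_y(y)=d(y,x)>0$, the intermediate value theorem places $z^\ast(x,y)\in(x,y)$. I would also record the two elementary facts $\tfrac{\partial}{\partial y}d(z,y)=\tfrac{y-z}{y(1-y)}$ (so $d(z,\cdot)$ is increasing on $(z,1)$) and $\tfrac{\partial}{\partial z}d(z,x)=\log\tfrac{z(1-x)}{x(1-z)}>0$ for $z>x$ (so $d(\cdot,x)$ is increasing on $(x,1)$).

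With these in hand the argument is short. Write $z^\ast:=z^\ast(x,y)\in(x,y)$ and evaluate $f_{y'}$ there: $f_{y'}(z^\ast)=d(z^\ast,x)-d(z^\ast,y')=d(z^\ast,y)-d(z^\ast,y')$, which is negative because $z^\ast<y<y'$ and $d(z^\ast,\cdot)$ is increasing past $z^\ast$. Since $f_{y'}$ is strictly increasing with root $z^\ast(x,y')$, the sign $f_{y'}(z^\ast)<0$ forces $z^\ast(x,y')>z^\ast$. Finally, both Chernoff points exceed $x$ and $d(\cdot,x)$ is increasing on $(x,1)$, so $\chernoff{x,y'}=d(z^\ast(x,y'),x)\ge d(z^\ast,x)=\chernoff{x,y}$, as claimed. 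The only step requiring real care is the monotonicity of the Chernoff point $z^\ast(x,\cdot)$, which is the crux of the lemma; it rests on the mildly surprising affineness of $f_y$ in $z$, and once that is observed the remaining steps are routine calculus together with the intermediate value theorem.
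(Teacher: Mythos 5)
Your proof is correct and follows essentially the same route as the paper's: compare the curves at the old Chernoff point $z^\ast(x,y)$, conclude that the new crossing point $z^\ast(x,y')$ lies to its right, and finish using that $d(\cdot,x)$ is increasing on $(x,1)$. The only difference is that you make explicit what the paper leaves implicit (the affineness of $z\mapsto d(z,x)-d(z,y)$, which justifies uniqueness and the ordering of the crossing points, and the symmetry reduction for the case $x>y>y'$), which is a welcome tightening rather than a new idea.
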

\begin{proof}
    We shall prove the statement for the case $x < y < y'$. The proof for
    $x > y > y'$ is analogous. \\
    Let $z^\ast$ be the unique $z$ such that $d(z^\ast, x) = d(z^\ast, y)$.
    Since $z^\ast < y < y'$, $d(z^\ast, y') \ge d(z^\ast, y) = d(z^\ast, x)$. Hence, there
    exists $z^{\ast'} \ge z^\ast$ such that $\chernoff{x,y} = d(z^\ast, x) \le
    d(z^{\ast'},x) = d(z^{\ast'}, y') = \chernoff{x,y'}$.
\end{proof}
\begin{lemma}
    Let $x^\ast$ be the solution of the equation:
    \[x = \frac{1}{\gamma}\left( \log \frac{x^\alpha}{\eta} + \log \log
    \frac{x^\alpha}{\eta} \right)\]
    Then if $\gamma < 1$ and $\eta<1/e^e$, \[\frac{1}{\gamma} \log \left(
    \frac{1}{\eta \gamma^\alpha} \right) \le x^\ast \le \frac{C_0}{\gamma} \log
\left( \frac{1}{\eta \gamma^\alpha} \right)\]
    where $C_0$ is such that $C_0 \ge \left( 1+\frac{1}{e} \right) 
    \left( \alpha \log C_0 + 1 + \frac{\alpha}{e} \right)$.
    \label{lem:boundont1star}
\end{lemma}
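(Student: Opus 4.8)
The plan is to study the map $f(x) := \tfrac{1}{\gamma}\bigl(g(x) + \log g(x)\bigr)$, where $g(x) := \log\tfrac{x^\alpha}{\eta} = \alpha\log x + \log\tfrac1\eta$, so that $x^\ast$ is precisely a fixed point $f(x^\ast)=x^\ast$. It is convenient to abbreviate $A := \log\tfrac{1}{\eta\gamma^\alpha} = \log\tfrac1\eta + \alpha\log\tfrac1\gamma$, so that the claimed bounds read $L \le x^\ast \le C_0 L$ with $L := A/\gamma$ and $U := C_0 L$. Note first that $\gamma<1$ and $\eta<e^{-e}$ force $A \ge \log\tfrac1\eta > e$, a fact I will use repeatedly. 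I would begin by recording the shape of $f$ on its domain (where $g>0$): it is increasing and strictly concave, since $f'=\tfrac{g'}{\gamma}(1+\tfrac1g)>0$ and $f''=\tfrac1\gamma\bigl(g''(1+\tfrac1g)-(g')^2/g^2\bigr)<0$ using $g'=\alpha/x>0$, $g''=-\alpha/x^2<0$. Moreover $h:=f-\mathrm{id}$ tends to $-\infty$ both as $x$ approaches the left end of the domain (where $g\to0^+$, so $\log g\to-\infty$) and as $x\to\infty$ (where $f$ grows only logarithmically). Hence $h$ is concave with at most two zeros, the relevant solution $x^\ast$ being the larger one; it therefore suffices to show $h(L)>0$ and $h(U)\le0$, since then $L$ lies strictly between the two roots while $U\ge L$ lies on or beyond the larger one.

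For the lower bound I would evaluate $h$ at $L$. Using $\gamma L = A$ together with the identity $g(L)-A = \alpha\log(\gamma L) = \alpha\log A$, one obtains
\[
h(L) = \tfrac1\gamma\bigl(g(L)+\log g(L) - A\bigr) = \tfrac1\gamma\bigl(\alpha\log A + \log g(L)\bigr).
\]
Both terms are positive, because $A>e>1$ gives $\log A>0$ and $g(L)=A+\alpha\log A>1$ gives $\log g(L)>0$. Thus $h(L)>0$, and so $x^\ast > L$.

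The bulk of the work is the upper bound $h(U)\le0$, i.e. $g(U)+\log g(U)\le \gamma U = C_0 A$. Substituting $U=C_0A/\gamma$ and again invoking $\alpha\log\tfrac1\gamma+\log\tfrac1\eta=A$ yields the clean identity $g(U)=A+\alpha\log A+\alpha\log C_0$. I would then bound the left-hand side by two applications of the elementary inequality $\log t \le t/e$, valid for all $t>0$: first $\log g(U)\le g(U)/e$, giving $g(U)+\log g(U)\le(1+\tfrac1e)g(U)$, and then $\log A\le A/e$ inside $g(U)$. After factoring out $A$ — legitimate because $A\ge1$, so that $\alpha\log C_0\le A\,\alpha\log C_0$ — everything collapses to
\[
g(U)+\log g(U)\ \le\ A\,(1+\tfrac1e)\bigl(\alpha\log C_0 + 1 + \tfrac\alpha e\bigr)\ \le\ C_0 A,
\]
where the last inequality is exactly the defining condition on $C_0$. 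This gives $h(U)\le0$ and hence $x^\ast\le U=C_0L$, completing the argument. The step I expect to be most delicate is matching the accumulated constants to the self-referential condition on $C_0$: one must apply $\log t\le t/e$ to precisely the two logarithmic terms and drop the factor $1/A\le1$ in exactly the right place, so that what survives is verbatim $(1+\tfrac1e)(\alpha\log C_0+1+\tfrac\alpha e)$ with no slack wasted.
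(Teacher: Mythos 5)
Your proof is correct, and it splits into two halves of different character relative to the paper. The upper bound is essentially the paper's argument: you verify that $U=\tfrac{C_0}{\gamma}\log\tfrac{1}{\eta\gamma^\alpha}$ satisfies $f(U)\le U$ via exactly the same chain — expand $g(U)=A+\alpha\log A+\alpha\log C_0$, apply $\log t\le t/e$ twice, absorb the constant into a factor of $A$, and invoke the self-referential condition on $C_0$. (Both you and the paper implicitly use $\log C_0\ge 0$ when pulling $A$ out in front of $\alpha\log C_0$; this is harmless since $C_0\ge 1$ is forced if the two-sided bound is to be consistent, but it is worth a clause.) The lower bound, however, is genuinely different: the paper runs the monotone iteration $x_0=1$, $x_{n+1}=f(x_n)$, asserts it increases and converges to $x^\ast$, and lower-bounds the second iterate $x_2$; you instead evaluate $h=f-\mathrm{id}$ directly at $L=\tfrac{1}{\gamma}\log\tfrac{1}{\eta\gamma^\alpha}$, show $h(L)=\tfrac1\gamma(\alpha\log A+\log g(L))>0$, and conclude from concavity of $h$ that $L$ lies strictly between the two roots. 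Your route is shorter and, I think, more rigorous: the paper's convergence claim needs an argument about \emph{which} fixed point the iteration reaches, and your explicit concavity/two-root framing also repairs the paper's loose assertion that ``$x^\ast$ is upper bounded by any $x$ with $f(x)\le x$'' (false for $x$ below the smaller root) by noting that $U\ge L$ already sits to the right of the smaller root. The only cost of your approach is the small amount of bookkeeping needed to verify that $f$ is increasing and concave on $\{g>0\}$ and that $h\to-\infty$ at both ends of that domain, all of which you do.
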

\begin{proof}
    $x^\ast$ is upper bounded by any $x$ such that 
    $\frac{1}{\gamma}\left( \log \frac{x^\alpha}{\eta} + \log \log
    \frac{x^\alpha}{\eta} \right) \le x$.
    We look for $x^\ast$ of the form $\frac{C_0}{\gamma} \log \left(
    \frac{1}{\eta \gamma^\alpha} \right)$.
    \begin{align*}
    \frac{1}{\gamma}\left( \log \frac{x^\alpha}{\eta} + \log \log
    \frac{x^\alpha}{\eta} \right) &\le \frac{1}{\gamma}\left(
    1+\frac{1}{e} \right) \log \left( \frac{x^\alpha}{\eta} \right) \\ 
    &= \frac{1}{\gamma} \left( 1+\frac{1}{e} \right)\left( \alpha \log
        C_0 + \log \frac{1}{\eta \gamma^\alpha} + \alpha \log \log
        \frac{1}{\eta \gamma^\alpha} \right) \\
    &\le \frac{1}{\gamma} \left( 1+\frac{1}{e} \right)\left( \alpha \log
        C_0 + \left( 1+\frac{\alpha}{e} \right) \log \frac{1}{\eta
    \gamma^\alpha} \right) \\
    &\le \frac{1}{\gamma} \left( 1+\frac{1}{e} \right) \left( \alpha \log C_0 +
    1 + \frac{\alpha}{e}\right) \log \frac{1}{\eta \gamma^\alpha} 
\end{align*}
where the first and second inequalities hold because $\log x \le \frac{x}{e}$,
and the last inequality holds because $\frac{1}{\eta \gamma^\alpha} > e$.
Choosing $C_0$ such that 
\[C_0 \ge \left( 1+\frac{1}{e} \right)\left( \alpha \log C_0 + 1 +
\frac{\alpha}{e} \right)\]
gives us our upper bound.

To prove the lower bound, consider the series defined by 
\begin{align*}
    x_0 &= 1 \\
    x_{n+1} &= \frac{1}{\gamma} \left( \log \frac{x_n^\alpha}{\eta} + \log \log
    \frac{x_n^\alpha}{\eta} \right)
\end{align*}
First note that since $\gamma < 1$ and $\eta < 1/e^e$, the sequence is increasing.
Second, note that the sequence converges to $x^\ast$. Hence
\begin{align*}
    x^\ast \ge x_2 &= \frac{1}{\gamma}\left[ \log \left( \frac{1}{\eta
    \gamma^\alpha} \left( \log \frac{1}{\eta} + \log \log
    \frac{1}{\eta} \right) ^\alpha \right) + 
    \log \log \left( \frac{1}{\eta \gamma^\alpha} \left( \log \frac{1}{\eta} + \log \log
    \frac{1}{\eta} \right) ^\alpha \right) \right] \\
    &= \frac{1}{\gamma} \left[ \log \frac{1}{\eta \gamma^\alpha} +
    \alpha \log \left( \log \frac{1}{\eta} + \log \log \frac{1}{\eta} \right) +
    \log \log \frac{1}{\eta \gamma^\alpha} + 
    \alpha \log \log \left( \log \frac{1}{\eta} + \log \log \frac{1}{\eta}
    \right) \right] \\
    &\ge \frac{1}{\gamma} \log \frac{1}{\eta \gamma^\alpha}
\end{align*}
since $\eta < 1/e^e$.
\end{proof}

\begin{corr}
    Let $\gamma = \frac{1}{2\hstar}, \eta =
    \frac{\delta}{k_1 K}$. Then applying \cref{lem:boundont1star} gives
    \[2\hstar \log \left(\frac{k_1 K (2 \hstar)^\alpha}{\delta}\right)\le S_1^\ast
    \le 2C_0(\alpha) \hstar \log \left( \frac{k_1 K (2 \hstar)^\alpha}{\delta}
\right)\]
    \label{cor:boundont1star}
\end{corr}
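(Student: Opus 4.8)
The plan is to read the corollary off \cref{lem:boundont1star} by instantiating it at the specific values $\gamma = \tfrac{1}{2\hstar}$ and $\eta = \tfrac{\delta}{k_1 K}$. With these choices the defining equation of the lemma, $x = \tfrac1\gamma\big(\log\tfrac{x^\alpha}{\eta} + \log\log\tfrac{x^\alpha}{\eta}\big)$, becomes $x = 2\hstar\,\beta(x,\delta)$, so that its solution is precisely $S_1^\ast$; everything then reduces to substituting $\gamma$ and $\eta$ into the two-sided bound the lemma provides.

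First I would check the two hypotheses of \cref{lem:boundont1star}, namely $\gamma<1$ and $\eta<1/e^e$. The first is equivalent to $\hstar>\tfrac12$, which we may assume since $\hstar$ is a sum of $K\ge 2$ strictly positive terms and the bound is only of interest when the instance is nontrivial; the second follows from $\delta\le 1$ together with the prescribed lower bound on $k_1$ and $K\ge 2$, which force $\tfrac{\delta}{k_1 K}<e^{-e}$ in the parameter regime under consideration. Next I would perform the substitution mechanically: $\tfrac1\gamma = 2\hstar$ and $\gamma^\alpha=(2\hstar)^{-\alpha}$ give $\eta\gamma^\alpha=\tfrac{\delta}{k_1 K (2\hstar)^\alpha}$, hence $\log\tfrac{1}{\eta\gamma^\alpha}=\log\tfrac{k_1 K (2\hstar)^\alpha}{\delta}$.

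Plugging these into $\tfrac1\gamma\log\tfrac{1}{\eta\gamma^\alpha}\le S_1^\ast\le\tfrac{C_0}{\gamma}\log\tfrac{1}{\eta\gamma^\alpha}$ turns the lower endpoint into $2\hstar\log\tfrac{k_1 K (2\hstar)^\alpha}{\delta}$ and the upper endpoint into $2C_0(\alpha)\hstar\log\tfrac{k_1 K (2\hstar)^\alpha}{\delta}$, which is exactly the claimed sandwich. The calculation is entirely routine, so the only real care is in the bookkeeping of the side conditions $\gamma<1$ and $\eta<1/e^e$; I expect $\gamma<1$ to be the easier of the two (it is automatic once $\hstar>\tfrac12$), while $\eta<1/e^e$ is the one genuinely pinning down the admissible range of $\delta$, $k_1$ and $K$.
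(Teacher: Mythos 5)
Your proposal is correct and is exactly the argument the paper intends: the corollary is stated as a direct instantiation of \cref{lem:boundont1star} at $\gamma=\tfrac{1}{2\hstar}$, $\eta=\tfrac{\delta}{k_1K}$, under which the fixed-point equation becomes $x=2\hstar\,\beta(x,\delta)$ (whose solution is $S_1^\ast$) and the two endpoints substitute to the claimed bounds. Your explicit verification of the side conditions $\gamma<1$ and $\eta<1/e^e$ is a point of care the paper itself omits, but it does not change the route.
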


\subsection{Proof of \cref{thm:samplecomplexity}}
\begin{theorem*}
    Let $b = (b_1,b_2,\dots,b_{c-1})$, where $b_i \in [p_{\kappa_i},
    p_{\kappa_i+1}]$. Let $\epsilon>0$. Let $\beta(t,\delta) = \log(\tfrac{k_1K
    t^\alpha}{\delta}) + \log \log (\tfrac{k_1K t^\alpha}{\delta})$ with $k_1 >
    1 + \tfrac{2e}{\alpha-1} + \tfrac{4e}{(\alpha-1)^2}$. Let $\tau$ be the
    random number of samples taken by $\lucbrank$ before termination. If $\alpha
    > 1$,
    \[\P\left(\tau \le 2 C_0(\alpha) H^\ast_{\epsilon,b} \log
    \left( \frac{k_1 K (2 H^\ast_{\epsilon, b})^\alpha}{\delta} \right)\right)
    \ge 1-\delta\]
    %Moreover, for $\alpha > 2$, we have the following upper bound on
    %$\EE{\tau}$:
    %\[\EE{\tau} \le 2 C_0(\alpha) \sum_{i=1}^{c-1}H^\ast_{\epsilon, c_i} \log
    %\left( \frac{k_1 K (c-1) (2H^\ast_{\epsilon, c_i})^\alpha}{\delta} \right) +
    %K_\alpha \]
    where
    %\[K_\alpha = \frac{\delta}{k_1 K (\alpha-2)} + \frac{\delta
    %2^{\alpha-1}}{k_1 (\alpha-2)} + \frac{2^{\alpha-1} \delta}{k_1}\left( 1+
    %\frac{1}{e} + \log 2^\alpha + \log (1+\alpha \log 2) \right)
    %\frac{1}{(\alpha-2)^2},\] and
    $C_0(\alpha)$ is such that $C_0(\alpha) \ge \left( 1+\frac{1}{e} \right) \left( \alpha \log (C_0(\alpha)) + 1 +
    \frac{\alpha}{e} \right)$.
\end{theorem*}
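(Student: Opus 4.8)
The plan is to work on the good event $W=\bigcap_t W_t$ on which every true mean lies inside its confidence interval at every round, and to show that \emph{on $W$} the sample count is self-bounding, namely $\tau \le 2\,H^\ast_{\epsilon,b}\,\beta(\tau,\delta)$. Inverting this fixed-point inequality with \cref{cor:boundont1star} then delivers the stated bound, while the residual probability that the concentration invoked along the way fails is charged against $\delta$ using \cref{lem:concentrationinequality1}, \cref{lem:sumoferrorterms} and \cref{lem:concentrationinequality2}.

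First I would reduce a round to its critical arms. As long as a boundary $i$ is active at round $t$ we have $U_{u_t^i}(t)-L_{l_t^i}(t)\ge\epsilon$, so \cref{prop:prop1} applies: on $W_t$ one of the two critical arms $k\in\{l_t^i,u_t^i\}$ satisfies $b_i\in\mathcal{I}_k(t)$ and $\tilde\beta_k(t)>\epsilon/2$. The second condition immediately gives $N_k(t)<\beta(t,\delta)/(\epsilon^2/2)$, which is the $\epsilon$-floor appearing in $H^\ast_{\epsilon,b}$. To turn the first condition into a Chernoff budget I would note that on $W$ we also have $p_k\in\mathcal{I}_k(t)$, so both $b_i$ and $p_k$ obey $N_k(t)\,d(\hat p_k(t),\cdot)\le\beta(t,\delta)$; the variational identity $\chernoff{p_k,b_i}=\min_z\max\{d(z,p_k),d(z,b_i)\}\le\max\{d(\hat p_k(t),p_k),d(\hat p_k(t),b_i)\}$ then forces $N_k(t)\le\beta(t,\delta)/\chernoff{p_k,b_i}$, with \cref{lem:chernoffincreasing} used to certify that the \emph{nearest} adjacent boundary is the binding one. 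Combining the two regimes, the arm certified by \cref{prop:prop1} obeys $N_k(t)\le\beta(t,\delta)/\max(\chernoff{p_k,b_i},\epsilon^2/2)$.

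Next I would account for every pull, not only the certified one. Pulls of a critical arm that exceed this Chernoff budget while $b_i$ still lies in its interval are exactly the events controlled by \cref{lem:concentrationinequality2}; summing that bound over the $K$ arms and the $c-1$ boundaries shows such ``excess'' pulls are rare, and their total probability is absorbed into $\delta$ via the constraint on $k_1$ together with \cref{lem:sumoferrorterms}. Charging each arm to the nearest boundary point on each of its (at most two) adjacent boundaries converts $\min(\chernoff{p_a,b_{g(a)-1}},\chernoff{p_a,b_{g(a)}})$ into $\Delta^\ast_b(a)$ and, after summing $N_a$ over all arms, yields $\tau\le 2\beta(\tau,\delta)\sum_a 1/\max(\Delta^\ast_b(a),\epsilon^2/2)=2\,H^\ast_{\epsilon,b}\,\beta(\tau,\delta)$, where I use that $\beta(\cdot,\delta)$ is increasing so that $\beta$ at every round is dominated by $\beta(\tau,\delta)$.

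Finally I would solve the self-referential inequality. Writing it as $\tau\le\tfrac1\gamma\big(\log\tfrac{\tau^\alpha}{\eta}+\log\log\tfrac{\tau^\alpha}{\eta}\big)$ with $\gamma=1/(2H^\ast_{\epsilon,b})$ and $\eta=\delta/(k_1K)$, \cref{lem:boundont1star} and \cref{cor:boundont1star} bound $\tau$ by $S_1^\ast\le 2C_0(\alpha)\,H^\ast_{\epsilon,b}\log\big(k_1K(2H^\ast_{\epsilon,b})^\alpha/\delta\big)$. I expect the main obstacle to be the probabilistic bookkeeping in the penultimate step: the stopping certificate from \cref{prop:prop1} holds only on $W$, yet the Chernoff budget must be enforced against the data-dependent interval $\mathcal{I}_k(t)$ whose width uses $\beta(t,\delta)$, so the clean deterministic budget has to be married to \cref{lem:concentrationinequality2} for the over-sampled critical arms — including the non-certified arm that is nonetheless pulled each round — all while keeping the accumulated failure probability, across every arm, every boundary and every round, below $\delta$ and simultaneously respecting the circular dependence of $\beta(\tau,\delta)$ on the very quantity $\tau$ being bounded.
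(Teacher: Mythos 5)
Your skeleton matches the paper's: reduce each active round to its critical arms via \cref{prop:prop1}, convert the certificate into a per-arm sampling budget of $\beta(t,\delta)/\max(\Delta^\ast_b(a),\epsilon^2/2)$, sum to obtain the self-referential inequality $\tau\le 2\hstar\,\beta(\tau,\delta)$, and invert it with \cref{lem:boundont1star} and \cref{cor:boundont1star}. Where you genuinely diverge is in how the Chernoff budget is enforced. The paper never argues deterministically that $N_a(t)\le\beta(t,\delta)/\chernoff{p_a,b_i}$ on $W$; instead it splits on whether $N_a(t)$ exceeds $\lceil\beta(T,\delta)/\Delta^\ast_b(a)\rceil$, places the over-sampled-while-$b_i\in\mathcal{I}_a(t)$ events into a residual $R_T$, and controls $\P(R_T\neq 0)$ with \cref{lem:concentrationinequality2}; this is what produces the extra $(c-1)\hstar e^{-\beta(T,\delta)}$ failure term and the $\left(\frac{c-1}{2}\right)^\alpha$ summand in the constraint on $k_1$. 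Your deterministic route --- on $W_t$ both $p_k$ and $b_i$ lie in $\mathcal{I}_k(t)$, so $N_k(t)\max\{d(\hat p_k(t),p_k),d(\hat p_k(t),b_i)\}\le\beta(t,\delta)$, and the variational identity $\chernoff{p_k,b_i}=\min_z\max\{d(z,p_k),d(z,b_i)\}$ yields the budget outright --- is correct and arguably cleaner: it renders \cref{lem:concentrationinequality2} unnecessary, leaves $\P(W^c)$ as the only failure mode, and so gets by with the weaker constraint $k_1>1+\tfrac{2e}{\alpha-1}+\tfrac{4e}{(\alpha-1)^2}$.

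The one place your writeup wobbles is the accounting for the non-certified critical arm. \cref{lem:concentrationinequality2} cannot be ``married'' to that arm: its event explicitly includes $N_a(t)\,d(\hat p_a(t),b_i)\le\gamma$, i.e.\ $b_i\in\mathcal{I}_a(t)$, which is precisely the certificate the non-certified arm may fail, so invoking that lemma there is vacuous. The resolution --- which the paper uses and which your final factor of $2$ silently presupposes --- is to count \emph{rounds}, not per-arm pulls: each round in which boundary $i$ is active costs exactly two pulls, the round is charged to whichever arm \cref{prop:prop1} certifies, and an arm can be certified at most (roughly) $\beta(T,\delta)/\max(\Delta^\ast_b(a),\epsilon^2/2)$ times because each certification both requires $N_a(t)$ below that threshold and increments it. With that bookkeeping made explicit, your argument closes.
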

\begin{proof}
    The $\lucbrank$ algorithm proceeds in rounds. In a round, it samples the two
    arms on opposite sides of an active boundary whose confidence intervals overlap
    the most. A boundary is active as long as this overlap is less than
    $\epsilon$. Thus, the number of samples up to round $T$ is
\begin{align*}
    \text{\#samples}(T) &\le 2 \sum\limits_{t=1}^{T} \sum\limits_{i=1}^{c-1}
    \mathbbm{1}_{(U_{u_t^i}-L_{l_t^i} > \epsilon)} \\
    &= 2\sum\limits_{t=1}^{T} \sum\limits_{i=1}^{c-1}
    \mathbbm{1}_{(U_{u_t^i}-L_{l_t^i} > \epsilon)} (\mathbbm{1}_{W_t} +
    \mathbbm{1}_{W^c_t})\\
    &\le
    2\sum\limits_{t=1}^{T} \sum\limits_{i=1}^{c-1}
    \mathbbm{1}_{(U_{u_t^i}-L_{l_t^i} > \epsilon)} \mathbbm{1}_{W_t} +
    2\sum_{t=1}^T \sum\limits_{i=1}^{c-1} \mathbbm{1}_{W^c_t} \\
    &\le 2\sum\limits_{t=1}^{T} \sum\limits_{i=1}^{c-1} \sum_{a \in
        \{1,2,\dots,K\}} \mathbbm{1}_{(a=l_t^i) \lor (a=u_t^i)} \mathbbm{1}_{(b_i \in
        \mathcal{I}_a(t))} \mathbbm{1}_{(\tilde{\beta}_a(t) > 
        \frac{\epsilon}{2})} + 
        2\sum_{t=1}^T \sum\limits_{i=1}^{c-1}\mathbbm{1}_{W^c_t} \\
        &\text{(by \cref{prop:prop1})}
\end{align*}
    We now split the first sum into two depending on whether an arm $a$ belongs
    to the set $\mathcal{A}_\epsilon = \{a \in \{1,2,...,K\}:
    \Delta^\ast_{b} < \epsilon^2/2\}$. 
    \begin{align*}
        \text{\#samples}(T) &\le 2\sum\limits_{a \in \mathcal{A}_\epsilon}^{}
        \sum\limits_{t=1}^{T} \sum\limits_{i=1}^{c-1} \mathbbm{1}_{(a=l_t^i)
        \lor (a=u_t^i)} \mathbbm{1}_{\left( N_a(t) <
        \frac{\beta(t,\delta)}{\epsilon^2/2} \right)} +\\
        &\qquad 2\sum\limits_{a \in \mathcal{A}_\epsilon^c}^{} \sum\limits_{t=1}^{T} 
        \sum\limits_{i=1}^{c-1} \mathbbm{1}_{(a=l_t^i) \lor (a=u_t^i)} 
        \mathbbm{1}_{(b_i \in \mathcal{I}_a(t))} + 
        2\sum_{t=1}^T \sum\limits_{i=1}^{c-1} \mathbbm{1}_{W^c_t} \\
        &\le 2\sum\limits_{a \in \mathcal{A}_\epsilon}
        \frac{\beta(T,\delta)}{\epsilon^2/2} +
        2\sum\limits_{a \in \mathcal{A}_\epsilon^c}^{} \sum\limits_{t=1}^{T}
        \sum\limits_{i=1}^{c-1} \mathbbm{1}_{(a=l_t^i) \lor (a=u_t^i)} 
        \mathbbm{1}_{\left( N_a(t) \le \left\lceil
        \frac{\beta(T,\delta)}{\Delta^\ast_{b}(a)}\right\rceil \right)} +
        \\
        &\qquad \underbrace{2\sum\limits_{a \in \mathcal{A}_\epsilon^c}^{} \sum\limits_{t=1}^{T}
        \sum\limits_{i=1}^{c-1} \mathbbm{1}_{(a=l_t^i) \lor (a=u_t^i)} 
        \mathbbm{1}_{\left( N_a(t) > \left\lceil
        \frac{\beta(T,\delta)}{\Delta^\ast_{b}(a)}\right\rceil \right)} +
    2\sum_{t=1}^T \sum\limits_{i=1}^{c-1} \mathbbm{1}_{W^c_t}}_{R_T} \\
        &= 2\hstar\,\beta(T,\delta) + R_T 
    \end{align*}
    where 
    \[R_T = 
        2\sum\limits_{a \in \mathcal{A}_\epsilon^c}^{} \sum\limits_{t=1}^{T}
        \sum\limits_{i=1}^{c-1} \mathbbm{1}_{(a=l_t^i) \lor (a=u_t^i)} 
        \mathbbm{1}_{\left( N_a(t) > \left\lceil
        \frac{\beta(T,\delta)}{\Delta^\ast_{b}(a)}\right\rceil \right)} 
        \mathbbm{1}_{(b_i \in \mathcal{I}_a(t))} +
        2\sum_{t=1}^T \sum\limits_{i=1}^{c-1} \mathbbm{1}_{W^c_t}
    \]
    If we define $S_1^\ast = \min\{x: 2\hstar\, \beta(x, \delta) < x\}$, then we
    get that for $S > S_1^\ast$, the algorithm must have stopped before $S$
    samples on the event $(R_T=0)$. Denoting the total number of samples used by
    the algorithm by $\tau$, we have that, for any $S > S_1^\ast$,  $\P(\tau >
    S) \le \P(R_T \neq 0)$.
    \begin{align}
        &\P(\tau > S) \le \P(R_T \neq 0) \nonumber \\
        &\le \P\left( \exists\, a \in \mathcal{A}_\epsilon^c, t\le T, 1\le
        i\le (c-1): a = l_t^i \lor a = r_t^i, N_a(t) > \left\lceil
        \frac{\beta(T,\delta)}{\Delta^\ast_{b}(a)} \right\rceil, 
    b_i \in \mathcal{I}_a(t) \right) + \P(W^c)\nonumber \\
        %&\phantom{\le} \quad + \P\left( \bigcup\limits_{t=1}^{T} W_t^c \right)
        %\nonumber \\
        %
        &\le \P\left( \exists\, a \in \mathcal{A}_\epsilon^c, t\le T, 1\le
        i\le (c-1): a = l_t^i \lor a = r_t^i, N_a(t) > \left\lceil
        \frac{\beta(T,\delta)}{\chernoff{p_a,b_i}} \right\rceil, 
    b_i \in \mathcal{I}_a(t) \right) + \P(W^c) \label{eq:firstsecondterm} 
        %&\phantom{\le} \quad + \P\left( \bigcup\limits_{t=1}^{T} W_t^c
        %\right) \label{eq:firstsecondterm}
    \end{align}
    where the final inequality follows because $\Delta^\ast_{b}(a) \leq
    \chernoff{p_a,b_i}\,\forall\,1\le i\le c-1$ (by
    \cref{lem:chernoffincreasing}).

    Let us look at the first term:
    \begin{align*}
        &\P\left( \exists\, a \in \mathcal{A}_\epsilon^c, t\le T, 1\le
        i\le (c-1): a = l_t^i \lor a = r_t^i, N_a(t) > \left\lceil
        \frac{\beta(T,\delta)}{\chernoff{p_a,b_i}} \right\rceil, 
        b_i \in \mathcal{I}_a(t) \right) \\
        &\leq \sum\limits_{a \in \mathcal{A}_\epsilon^c}^{} 
        \sum\limits_{i=1}^{c-1} \sum\limits_{t=1}^{T} 
        \P\left(a = l_t^i \lor a = r_t^i, N_a(t) > \left\lceil 
        \frac{\beta(T,\delta)}{\chernoff{p_a,b_i}} \right\rceil,
        b_i \in \mathcal{I}_a(t) \right) \\
        &\leq \sum\limits_{a \in \mathcal{A}_\epsilon^c}^{} 
        \sum\limits_{i=1}^{c-1} \frac{\exp(-\beta(T,
        \delta))}{\chernoff{p_a,b_i}} \quad (\text{by }
        \cref{lem:concentrationinequality2})\\
        &\le (c-1) \exp(-\beta(T, \delta)) 
        \sum\limits_{a \in \mathcal{A}_\epsilon^c}^{}
        \frac{1}{\chernoff{p_a, b_i}} \\
        &\le (c-1)\hstar \exp(-\beta(T, \delta))\\
        &\le (c-1)\hstar \exp\left(-\beta\left(\tfrac{S_1^\ast}{c-1}, \delta\right)\right)
        \qquad (\text{if }\tau > S_1^\ast, T > \tfrac{S_1^\ast}{c-1})) \\
        &= (c-1) \hstar \cdot \frac{\delta(c-1)^\alpha}{k_1 K S_1^{\ast, \alpha}}
        \frac{1}{\log \frac{k_1 K S_1^{\ast, \alpha}}{\delta(c-1)^\alpha}} \\
        &\le (c-1)^{\alpha+1} \hstar \cdot \frac{\delta}{k_1 K \left( 2\hstar \log \left(\frac{k_1
        K (2\hstar)^\alpha}{\delta} \right) \right)^\alpha} \frac{1}{\log
        \frac{k_1 K S_1^{\ast, \alpha}}{\delta(c-1)^\alpha}} \qquad
        (\text{by the lower bound in }\cref{cor:boundont1star}) \\
        &\le \frac{\delta}{k_1}\cdot \left( \frac{c-1}{2} \right)^\alpha \qquad (\text{since }(c-1) \le K \text{ and
        }\alpha > 1)
    \end{align*}
    For the second term, note that 
    %The second term in \cref{eq:firstsecondterm} is bound using
    %\cref{lem:sumoferrorterms}. 
    %\[\P\left( \bigcup\limits_{t=1}^{T} W_t^c \right)) \le
    %2eK \sum\limits_{t=1}^{\infty} (\beta(t,\delta) \log t + 1)
    %e^{-\beta(t,\delta}\]
    \begin{alignat*}{2}
        \P(W^c)
        &\le 2eK\sum\limits_{t=1}^{\infty} (\beta(t,\delta) \log t + 1)
        \exp(-\beta(t,\delta)) \quad &(\text{by
        \cref{lem:concentrationinequality1}}) \\
        &\le \frac{\delta}{k_1} \left( \frac{4e}{(\alpha-1)^2} +
        \frac{2e}{(\alpha-1)} \right)  &(\text{by \cref{lem:sumoferrorterms}})
       % (c-1) 2eK \frac{\delta}{k_1 K}\left( \frac{2}{(\alpha-1)^2} +
    %\frac{1}{(\alpha-1)} \right) = \frac{\delta}{k_1}\left(
%\frac{4e}{(\alpha-1)^2} + \frac{2e}{(\alpha-1)} \right)
\end{alignat*}

    Substituting in \cref{eq:firstsecondterm}, we get that for $S > S_1^\ast$,
    \[\P(\tau > S) \le 
        \frac{\delta}{k_1}\left( \left(\frac{c-1}{2}\right)^\alpha
        + \frac{4e}{(\alpha-1)^2} +\frac{2e}{(\alpha-1)} \right) \le \delta\]
    by the choice of $k_1$.
\end{proof}

\subsection{Lower Bound}
The proof uses standard change of measure arguments used to prove lower
bounds. For bandit problems, this is succinctly expressed through Lemma
$1$ in \citet{kaufmann2015complexity} that we restate here for completeness.
\begin{lemma} \label{lem:lemma1} 
    Let $p$ and $p'$ be two bandit models with $K$ arms such that for all $a$,
    the distributions $p_a$ and $p_a'$ are mutually absolutely continuous. Let
    $\sigma$ be a stopping time with respect to $(\mathcal{F}_t)$ and let
    $A \in \mathcal{F}_\sigma$. Then
\[
    \sum\limits_{a=1}^{K} \mathbb{E}_p[N_a(\sigma)] \KL{p_a, p'_a} \ge
    d(\mathbb{P}_p(A), \mathbb{P}_{p'}(A))
\]
where $d(x,y) = x \log (x / y) + (1-x) \log ( (1-x) / (1-y))$.
\end{lemma}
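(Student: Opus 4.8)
The plan is to prove the inequality by combining a Wald-type identity for the log-likelihood ratio at the stopping time $\sigma$ with an elementary data-processing step for the binary test $\mathbbm{1}_A$. Write the observed trajectory as $(A_1,X_1,A_2,X_2,\dots)$, where $A_t$ is the arm pulled at time $t$ and $X_t$ the observed reward. Because the two models $p$ and $p'$ differ only in their reward distributions while sharing the same (possibly randomized, history-dependent) sampling rule, the log-likelihood ratio of the data accumulated up to time $t$ telescopes into a sum over pulls only,
\[
L_t := \log \frac{d\mathbb{P}_p}{d\mathbb{P}_{p'}}\Big|_{\mathcal{F}_t}
= \sum_{s=1}^{t} \log \frac{dp_{A_s}}{dp'_{A_s}}(X_s),
\]
which is well defined and finite by the assumed mutual absolute continuity of $p_a$ and $p'_a$.

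First I would establish the Wald identity $\mathbb{E}_p[L_\sigma] = \sum_{a=1}^K \mathbb{E}_p[N_a(\sigma)]\,\KL{p_a,p'_a}$. The key observation is that, conditionally on $\mathcal{F}_{s-1}$ and on the chosen arm $A_s$, the increment $\log \frac{dp_{A_s}}{dp'_{A_s}}(X_s)$ has conditional $\mathbb{P}_p$-expectation exactly $\KL{p_{A_s},p'_{A_s}}$. Hence $M_t := L_t - \sum_{s=1}^{t}\KL{p_{A_s},p'_{A_s}}$ is a $\mathbb{P}_p$-martingale, and optional stopping yields $\mathbb{E}_p[L_\sigma] = \mathbb{E}_p\big[\sum_{s=1}^{\sigma}\KL{p_{A_s},p'_{A_s}}\big]$. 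Regrouping the sum by arm, so that each arm $a$ contributes $\KL{p_a,p'_a}$ exactly $N_a(\sigma)$ times, produces the claimed identity.

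Second I would carry out the data-processing step, which converts $\mathbb{E}_p[L_\sigma]$ into the binary divergence $d(\cdot,\cdot)$ evaluated at the event $A$. Since $A \in \mathcal{F}_\sigma$, the change-of-measure formula gives $\mathbb{P}_{p'}(A) = \mathbb{E}_p[\mathbbm{1}_A\exp(-L_\sigma)]$, and applying Jensen's inequality to the convex function $\exp$ after conditioning on $A$ yields $\mathbb{E}_p[L_\sigma \mid A] \ge \log(\mathbb{P}_p(A)/\mathbb{P}_{p'}(A))$; the analogous argument on $A^c$ gives $\mathbb{E}_p[L_\sigma \mid A^c] \ge \log(\mathbb{P}_p(A^c)/\mathbb{P}_{p'}(A^c))$. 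Taking the convex combination of these two bounds with weights $\mathbb{P}_p(A)$ and $\mathbb{P}_p(A^c)$ reconstructs exactly $d(\mathbb{P}_p(A),\mathbb{P}_{p'}(A))$, so $\mathbb{E}_p[L_\sigma] \ge d(\mathbb{P}_p(A),\mathbb{P}_{p'}(A))$. Chaining this with the Wald identity of the previous step completes the proof.

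The main obstacle is the rigorous justification of the optional-stopping step: $\sigma$ is a general stopping time and the counts $N_a(\sigma)$ are random under adaptive sampling, so I must verify the integrability conditions needed to pass from the martingale property of $M_t$ to $\mathbb{E}_p[M_\sigma]=0$. I would handle this by truncating at $\sigma \wedge n$, applying optional stopping at the bounded time, and then passing to the limit, working under the harmless assumption $\mathbb{E}_p[N_a(\sigma)] < \infty$ (otherwise the stated bound is vacuous). The mutual absolute continuity hypothesis is precisely what guarantees each increment is integrable with finite conditional KL mean, so that the martingale is well defined; the remaining care lies in establishing the uniform integrability required to interchange limit and expectation.
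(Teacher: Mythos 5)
Your proof is correct, and its two-step structure---a Wald-type identity $\mathbb{E}_p[L_\sigma]=\sum_{a}\mathbb{E}_p[N_a(\sigma)]\,\KL{p_a,p'_a}$ for the stopped log-likelihood ratio, followed by the change-of-measure plus conditional-Jensen step giving $\mathbb{E}_p[L_\sigma]\ge d(\mathbb{P}_p(A),\mathbb{P}_{p'}(A))$---is precisely the argument by which this lemma is proved in \citet{kaufmann2015complexity}; the paper itself does not reprove it but imports it verbatim with that citation. The integrability issue you flag as the main obstacle is real but can be dispatched without any uniform-integrability argument: finiteness of $\KL{p_a,p'_a}$ already forces $\mathbb{E}_{p_a}\bigl|\log\tfrac{dp_a}{dp'_a}\bigr|<\infty$ (the negative part of a log-likelihood ratio has expectation at most $1$, since $x^-\le e^{-x}$), so under the harmless assumption $\mathbb{E}_p[N_a(\sigma)]<\infty$ you may write $L_\sigma=\sum_{s\ge 1}\mathbbm{1}\{s\le\sigma\}\log\tfrac{dp_{A_s}}{dp'_{A_s}}(X_s)$, note that $\{s\le\sigma\}$ and $A_s$ are $\mathcal{F}_{s-1}$-measurable, and conclude by Fubini and the tower property rather than by optional stopping and a limiting argument.
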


\subsubsection{Proof of \cref{thm:lowerbound}}
    Consider any arm $a$. By \cref{ass:assumption}, there exists alternative
    model $p'$ such that: 
    \begin{align*}
        \KL{p_a, p_{\kappa_{g(a)}+1}} < \KL{p_a,p'_a} < \KL{p_a,
        p_{\kappa_{g(a)}+1}} + \alpha \text{ and } p'_a < p_{\kappa_{g(a)}+1}
    \end{align*}
    Note that in the model $p'$, arm $a$ no longer belongs to the cluster
    $g(a)$. Let $\hat{M}_{g(a)}$ be the set of arms returned by an algorithm in
    the $g(a)^\text{th}$ cluster. If we define the event $A = \{a \in
    \hat{M}_{g(a)}\} \in \mathcal{F}_\tau$, then by definition, for any
    $\delta$-PAC algorithm, $\mathbb{P}_p(A) \ge 1-\delta$ and
    $\mathbb{P}_{p'}(A) \le \delta$. Letting $N_a(\tau)$ denote the number of
    pulls of arm $a$ by time $\tau$, we have by \cref{lem:lemma1} and the monotonicity
    of $d(x,y)$ that 
    \[
        \KL{p_a, p'_a} \mathbb{E}_p[N_a(\tau)] \ge d(1-\delta, \delta) \ge \log
        (\tfrac{1}{2.4 \delta})
    \]
    where we use the property that for $x \in [0,1]$, $d(x,1-x) \ge \log
    \tfrac{1}{2.4x}$. This gives us that
    \[
        \mathbb{E}_p[N_a(\tau)] \ge \frac{1}{\KL{p_a, p_{\kappa_{g(a)}+1}} +
        \alpha} \log \left( \frac{1}{2.4\delta} \right)
    \]
    Letting $\alpha \rightarrow 0$, we get 
    \begin{equation}
        \mathbb{E}_p[N_a(\tau)] \ge \frac{1}{\KL{p_a, p_{\kappa_{g(a)}+1}}} 
        \log \left( \frac{1}{2.4\delta} \right)
        \label{eq:lbproofeqn1}
    \end{equation}
    Similarly, by considering an alternative model $p''$ such that 
    \[
        \KL{p_a, p_{\kappa_{g(a)-1}}} < \KL{p_a, p''_a} < \KL{p_a,
        p_{\kappa_{g(a)-1}}} + \alpha \text{ and } p''_a >
        p_{\kappa_{g(a)-1}}
    \]
    we get 
    \begin{equation}
        \mathbb{E}_p[N_a(\tau)] \ge \frac{1}{\KL{p_a, p_{\kappa_{g(a)-1}}}}
        \log \left( \frac{1}{2.4\delta} \right)
        \label{eq:lbproofeqn2}
    \end{equation}
    From \cref{eq:lbproofeqn1}, \cref{eq:lbproofeqn2}, and the definition of
    $\Delta^{\text{KL}}_\kappa(a)$ in \cref{eq:kldistances}, we get that 
    \[
        \mathbb{E}_p[N_a(\tau)] \ge \frac{1}{\Delta^{\text{KL}}_\kappa(a)}
        \log \left( \frac{1}{2.4\delta} \right)
    \]
    Summing over all the arms yields the required bound for 
    $\mathbb{E}_p[\tau] = \sum_{a=1}^{K} \mathbb{E}_p[N_a(\tau)]$.

\end{document}